\theoremstyle{plain}
\newtheorem{theorem}{Theorem}[section]
\newtheorem{proposition}[theorem]{Proposition}
\newtheorem{lemma}[theorem]{Lemma}
\newtheorem{corollary}[theorem]{Corollary}
\theoremstyle{definition}
\newtheorem{definition}[theorem]{Definition}
\newtheorem{assumption}[theorem]{Assumption}
\newtheorem{phenomenon}{Phenomenon}
\newtheorem{remark}[theorem]{Remark}
\newcommand{\rowsp}{{\rm row}}
\newcommand{\colsp}{{\rm col}}
\newcommand{\diagm}{{\rm diag}}
\newcommand{\E}{{\mathbb{E}}}
\newcommand{\R}{{\mathbb{R}}}
\newcommand{\vone}{{\boldsymbol{1}}}
\icmltitlerunning{Understanding Sharpness Dynamics in NN Training with a Minimalist Example}
\begin{document}

\twocolumn[
\icmltitle{Understanding Sharpness Dynamics in NN Training with a Minimalist Example: The Effects of Dataset Difficulty, Depth, Stochasticity, and More}

% It is OKAY to include author information, even for blind
% submissions: the style file will automatically remove it for you
% unless you've provided the [accepted] option to the icml2025
% package.

% List of affiliations: The first argument should be a (short)
% identifier you will use later to specify author affiliations
% Academic affiliations should list Department, University, City, Region, Country
% Industry affiliations should list Company, City, Region, Country

% You can specify symbols, otherwise they are numbered in order.
% Ideally, you should not use this facility. Affiliations will be numbered
% in order of appearance and this is the preferred way.
\icmlsetsymbol{equal}{*}

\begin{icmlauthorlist}
\icmlauthor{Geonhui Yoo}{kaist-ai}
\icmlauthor{Minhak Song}{kaist-math}
\icmlauthor{Chulhee Yun}{kaist-ai}
\end{icmlauthorlist}

\icmlaffiliation{kaist-ai}{KAIST AI}
\icmlaffiliation{kaist-math}{KAIST Math}

\icmlcorrespondingauthor{Chulhee Yun}{chulhee.yun@kaist.ac.kr}

% You may provide any keywords that you
% find helpful for describing your paper; these are used to populate
% the "keywords" metadata in the PDF but will not be shown in the document
\icmlkeywords{Machine Learning, ICML}

\vskip 0.3in
]

% this must go after the closing bracket ] following \twocolumn[ ...

% This command actually creates the footnote in the first column
% listing the affiliations and the copyright notice.
% The command takes one argument, which is text to display at the start of the footnote.
% The \icmlEqualContribution command is standard text for equal contribution.
% Remove it (just {}) if you do not need this facility.

\printAffiliationsAndNotice{}  % leave blank if no need to mention equal contribution
% \printAffiliationsAndNotice{\icmlEqualContribution} % otherwise use the standard text.

\begin{abstract}
When training deep neural networks with gradient descent, sharpness often increases---a phenomenon known as \emph{progressive sharpening}---before saturating at the \emph{edge of stability}. Although commonly observed in practice, the underlying mechanisms behind progressive sharpening remain poorly understood. In this work, we study this phenomenon using a minimalist model: a deep linear network with a single neuron per layer. We show that this simple model effectively captures the sharpness dynamics observed in recent empirical studies, offering a simple testbed to better understand neural network training. Moreover, we theoretically analyze how dataset properties, network depth, stochasticity of optimizers, and step size affect the degree of progressive sharpening in the minimalist model. We then empirically demonstrate how these theoretical insights extend to practical scenarios. This study offers a deeper understanding of sharpness dynamics in neural network training, highlighting the interplay between depth, training data, and optimizers.
\end{abstract}

\section{Introduction}

\label{sec:intro}

Understanding the learning dynamics of neural network training is challenging due to the non-convex nature of its loss landscape. Recent empirical studies have highlighted that when training deep neural networks using gradient descent with step size $\eta$, sharpness often increases---a phenomenon known as \emph{progressive sharpening}---and eventually hovers near $2/\eta$, a regime known as the \emph{edge of stability}~\citep{jastrzebski2018on,jastrzebski2020the,cohen2021gradient}. 

The sharpness dynamics in deep learning have garnered significant interest in recent years, with several studies proposing mechanisms to explain its behavior~\citep{ahn2022understanding,ahn2023learning,arora2022understanding,damian2023selfstabilization,song2023trajectory,wang2022analyzing,zhu2023understanding}. Notably, \citet{damian2023selfstabilization} introduce the \emph{self-stabilization} mechanism, attributing the edge of stability to a negative feedback loop formed by the third-order term in the Taylor expansion of the loss. However, their analysis relies on the assumption that progressive sharpening occurs, i.e., that the sharpness tends to increase along the negative gradient direction.

Although commonly observed in practice, the underlying mechanisms behind progressive sharpening remain poorly understood. Analyzing this phenomenon is particularly challenging due to its dependence on various factors, including network architecture, training data, and optimizers. \citet{cohen2021gradient} conduct systematic experiments to investigate how these factors influence the degree of progressive sharpening, and we summarize their observations in Section~\ref{sec:ps}.

In this work, we study progressive sharpening using a minimalist model: a \emph{deep linear network} with a single neuron per layer. We show that this simple model effectively captures the sharpness dynamics observed in recent empirical studies. 
Furthermore, we empirically demonstrate that our theoretical findings from the minimalist model extend to practical scenarios. 
Our main contributions are summarized below:
\begin{itemize}[leftmargin=0.2cm]
    \item In \cref{sec:ps}, we identify key factors influencing progressive sharpening (dataset size, network depth, batch size, and learning rate) based on \citet{cohen2021gradient}, and summarize them in \cref{phe:PS}.
    \item In \cref{sec:minimal}, we propose a minimalist model that successfully replicates sharpness dynamics in deep learning, including the effects of key factors on progressive sharpening (\cref{phe:PS}) and behavior at the edge of stability.
    \item In \cref{sec:minimizer_sharpness} and \cref{sec:optimizer}, we provide a rigorous theoretical analysis of the minimalist model. We introduce the concept of \emph{dataset difficulty} and derive bounds on the sharpness using this quantity. Furthermore, we show that the predicted sharpness from these bounds aligns well with empirical observations, even beyond our theoretical setup. 
\end{itemize}

%\vspace{-6mm}

\subsection{Related Works}

The dynamics of sharpness during neural network training have garnered significant interest in recent years. \citet{jastrzebski2018on,jastrzebski2020the} observe that sharpness increases during the initial phase of training and step size influences sharpness along the optimization trajectory. \citet{cohen2021gradient} formalize the phenomena of \emph{progressive sharpening} and the \emph{edge of stability} through extensive controlled experiments, laying the groundwork for subsequent theoretical and empirical studies. In this subsection, we briefly mention some of the related works. Refer to \cref{sec:more-related-works} for more related works.

\noindent\textbf{Progressive Sharpening.~~} Several recent works have explored the mechanisms behind progressive sharpening. \citet{wang2022analyzing} employ the output-layer norm as a proxy for sharpness to explain progressive sharpening. In a two-layer linear network, they prove progressive sharpening under certain conditions, but their characterization is limited to a certain interval and does not specify the limit behavior of sharpness.
\citet{agarwala2023second} analyze a quadratic regression model and showed progressive sharpening occurs at initialization. 
\citet{rosenfeld2024outliers} empirically observe that the training dynamics of neural networks are heavily influenced by outliers with opposing signals, suggesting that progressive sharpening arises due to these outliers. However, they do not quantify the degree of progressive sharpening or analyze its correlation with data properties. In addition, they restrict their theoretical analysis to synthetic data.
Closely related to our work, \citet{marion2024deep} study sharpness dynamics in deep linear networks and characterize the sharpness of solutions found by gradient flow. However, their analysis focus primarily on establishing an upper bound on sharpness, whereas our work provides both lower and upper bounds. Furthermore, we introduce the concept of \emph{dataset difficulty} and demonstrate its correlation with these bounds.

\noindent\textbf{Sharpness Dynamics of SGD.~~} While existing works on progressive sharpening and the edge of stability primarily focus on GD dynamics, several recent studies have analyzed how SGD differs from GD. For instance, SGD with a large step size has been observed to operate at a \emph{stochastic edge of stability}, where sharpness stabilizes at a threshold smaller than that of GD~\citep{lee2023a,agarwala2024high}. \citet{agarwala2024high} analyze sharpness dynamics in SGD for a quadratic regression model, showing that SGD noise reduces sharpness increase compared to GD.

\begin{figure}[ht]
    \centering
    % 첫 번째 이미지
    \begin{minipage}{0.23\textwidth}
        \centering
        \includegraphics[width=\textwidth]{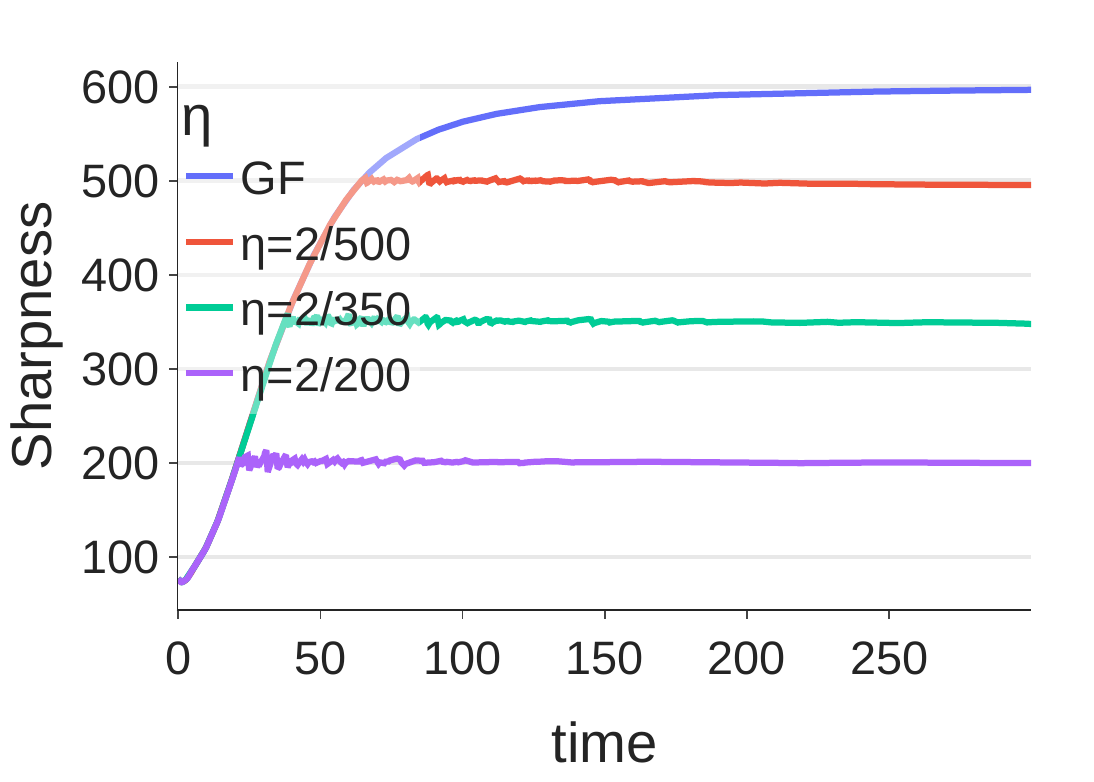} % 이미지 파일 경로
        \subcaption{Sharpness}
        \label{fig:PS_GD_GF_sharpness}
    \end{minipage}
    % 세 번째 이미지
    \begin{minipage}{0.23\textwidth}
        \centering
        \includegraphics[width=\textwidth]{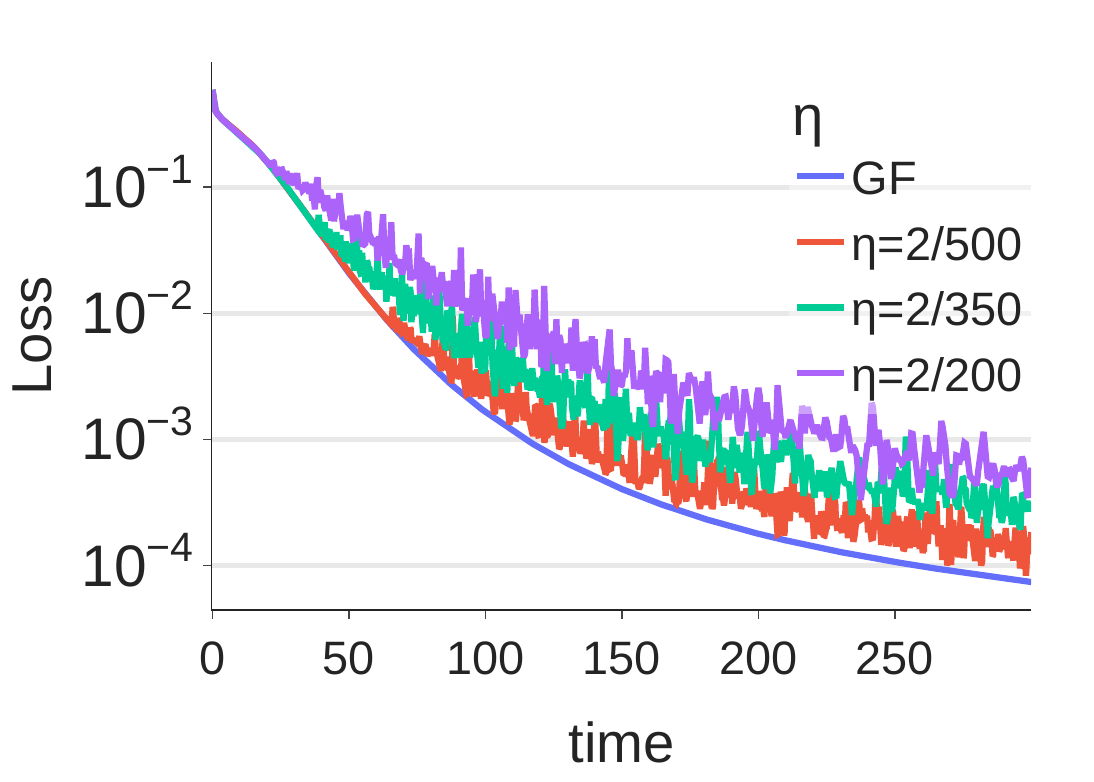} % 이미지 파일 경로
        \subcaption{Loss}
        \label{fig:PS_GD_GF_loss}
    \end{minipage}
    
    \caption{GF closely tracks GD dynamics before EoS. Sharpness of GF saturates as loss converges to zero. Sharpness of GD saturates as it enters the EoS regime. For experimental details, refer to \Cref{sec:exp_detail_real_PS}.}
    \label{fig:PS_GD_GF}
\end{figure}

\begin{figure*}[t]
    \centering
    \begin{minipage}{0.26
    \textwidth}
        \centering
        \includegraphics[width=\textwidth]{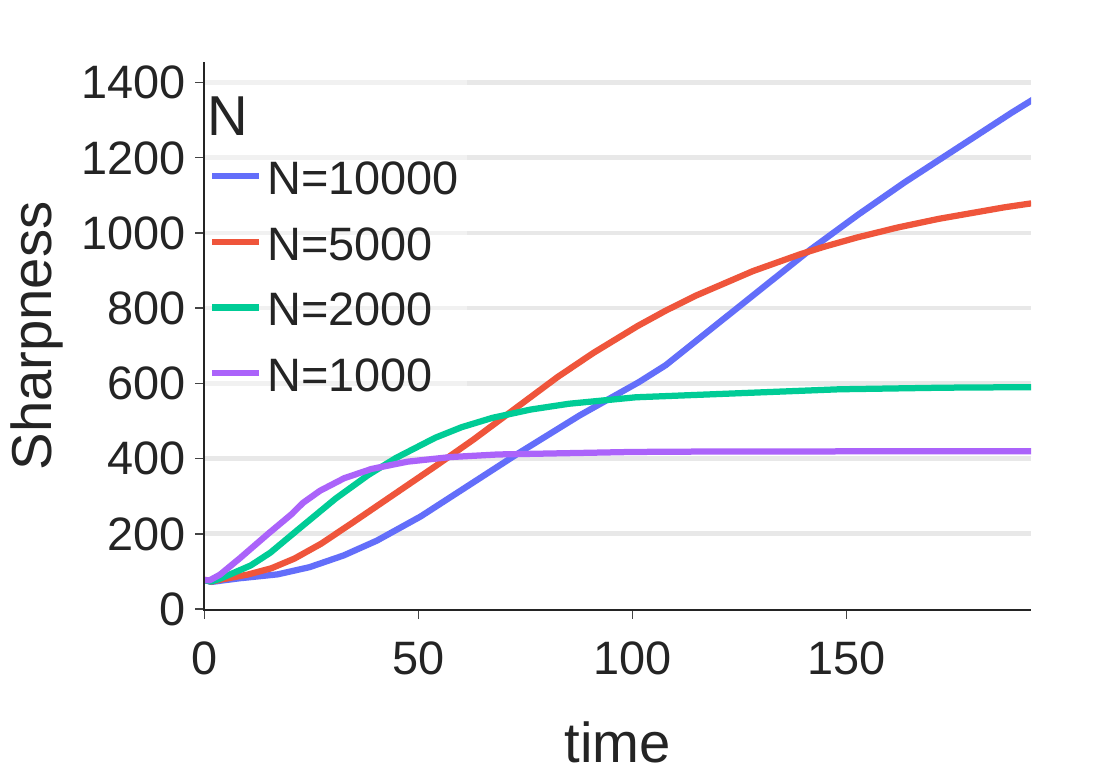} % 이미지 파일 경로
        \subcaption{Dataset size}
        \label{fig:PS_data}
    \end{minipage}
    \begin{minipage}{0.26\textwidth}
        \centering
        \includegraphics[width=\textwidth]{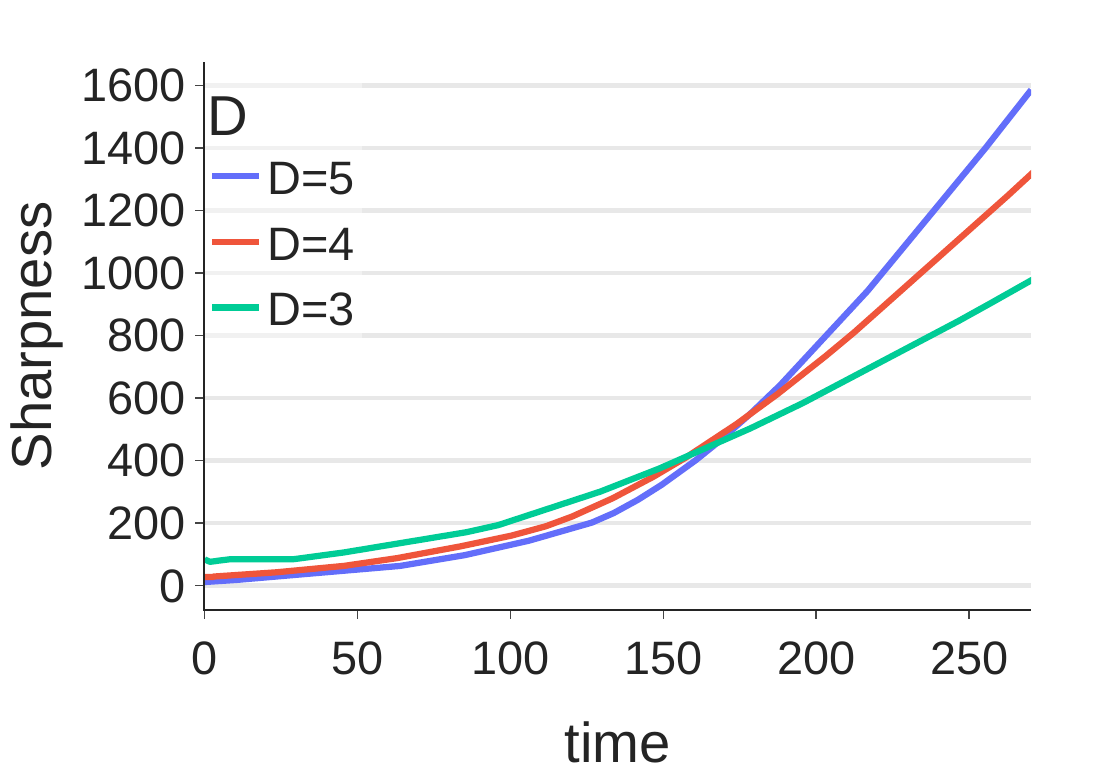} % 이미지 파일 경로
        \subcaption{Network depth}
        \label{fig:PS_depth}
    \end{minipage}
    \begin{minipage}{0.26\textwidth}
        \centering
        \includegraphics[width=\textwidth]{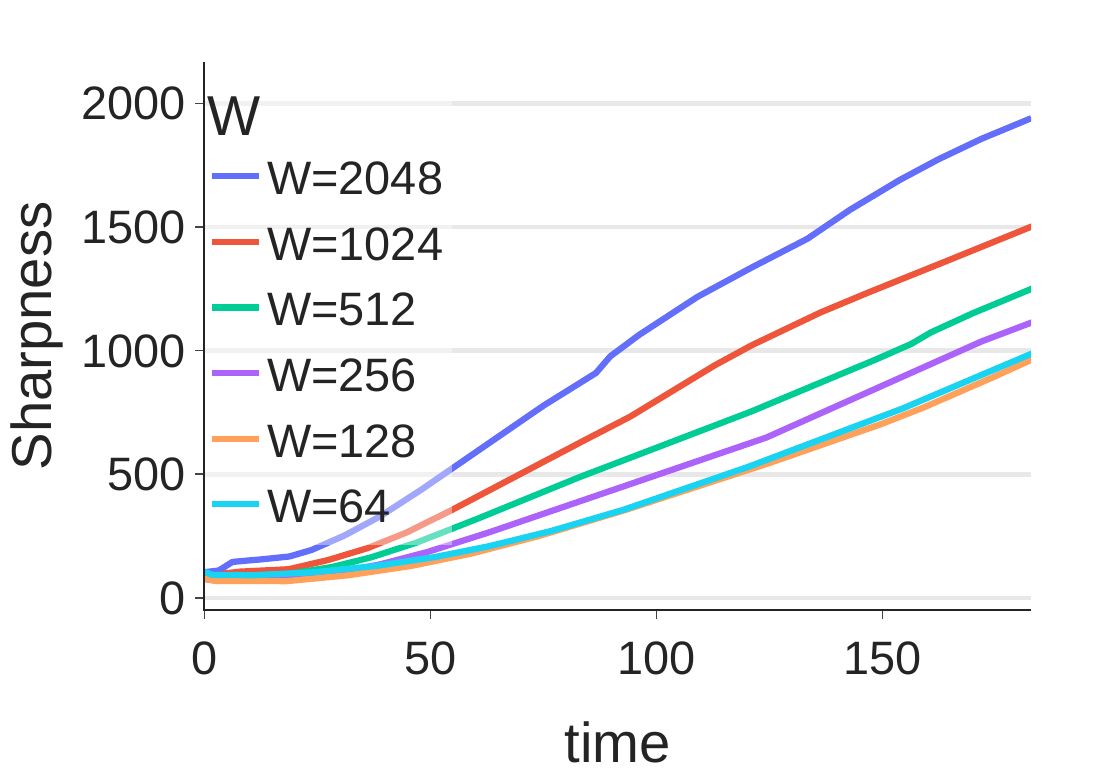} % 이미지 파일 경로
        \subcaption{Network width}
        \label{fig:PS_width}
    \end{minipage}
    
    \caption{Effect of dataset size, network depth, and network width of tanh NN, for experimental details, refer to \Cref{sec:exp_detail_real_PS}.}
    \centering
    \label{fig:PS_various_factors}
\end{figure*}

\begin{figure*}[t]
\centering
    % 첫 번째 이미지
    \includegraphics[width=0.80\textwidth]{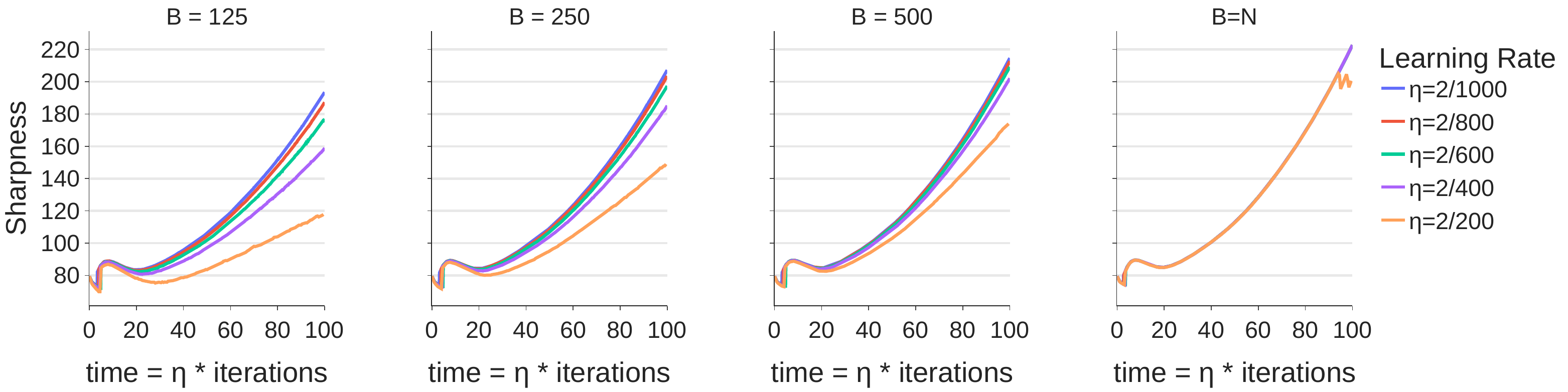} % 이미지 파일 경로
    
    \caption{Effect of batch size, and learning rate in SGD and GD, for experimental details, refer to \Cref{sec:exp_detail_real_PS}.}
    \label{fig:PS_batch_size}
    %\vspace{-4mm}
\end{figure*}

%\vspace{-7pt}
\section{Key Factors of Progressive Sharpening}\label{sec:ps}
%\vspace{-5pt}

\emph{Progressive sharpening} refers to the phenomenon where sharpness increases during gradient descent (GD) training. In this section, we discuss how the degree of progressive sharpening depends on problem parameters, based on the observations of \citet{cohen2021gradient}. Their experiments examine the influence of factors such as network architecture and training data on the degree of progressive sharpening, which they quantify as the maximum sharpness along the gradient flow trajectory (GD with infinitesimal step size).

% \begin{remark}
Notably, \citet{cohen2021gradient} observe that GD with a fixed step size $\eta$ closely follows the gradient flow trajectory until the sharpness approaches $2/\eta$ (see \Cref{fig:PS_GD_GF}). Once the sharpness reaches $2/\eta$, the training switches to the Edge of Stability (EoS) regime. In this regime, the GD trajectory deviates from the gradient flow trajectory and instead oscillates along the central flow trajectory~\citep{cohen2025understanding}. Thus, gradient flow effectively represents \emph{what GD would do if GD didn’t have to worry about instability}~\citep{cohen2021gradient}. Therefore, if we know the sharpness at convergence of GF, we can predict whether the same training of GD enters the EoS regime, based on the step size.

In Phenomenon~\ref{phe:PS}, we summarize how problem parameters influence the degree of progressive sharpening, as observed in \citet{cohen2021gradient}. We focus on the mean squared loss setting since, with cross-entropy loss, sharpness decreases at the end of training due to margin maximization, making comparisons less straightforward.  

\begin{figure*}[ht]
    \raggedleft
    % 첫 번째 이미지
    \includegraphics[width=0.91\textwidth]{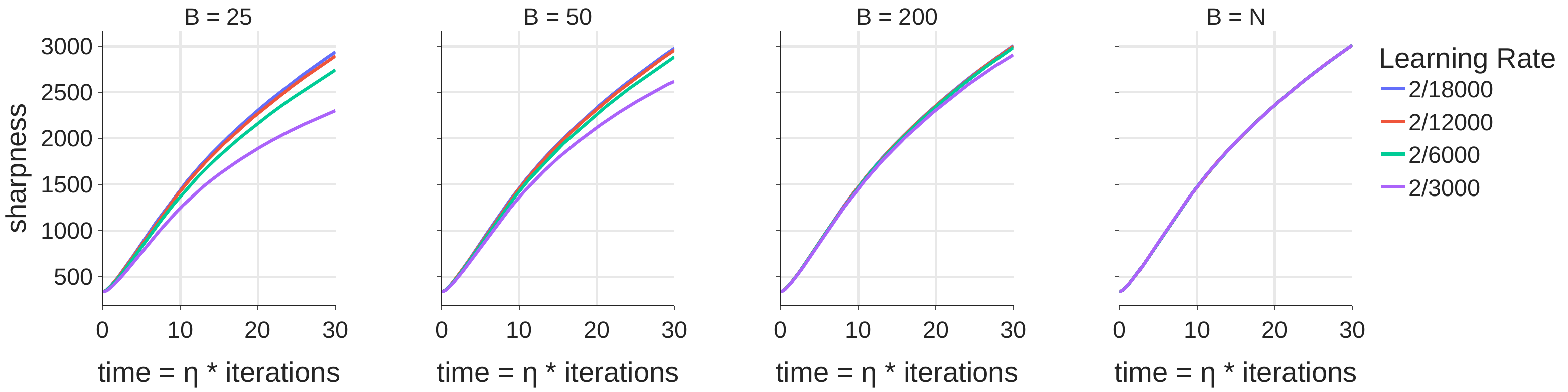} % 이미지 파일 경로
    \caption{Effect of batch size and learning rate in our minimalist model ($D=2$). We use a 2-label subset of $N=1000$ from CIFAR10.}
    \label{fig:PS_min_batch_size}
    %\vspace{-4mm}
\end{figure*}

\begin{figure}[ht]
    \centering
    % 첫 번째 이미지
    \begin{minipage}{0.23\textwidth}
        \centering
        \includegraphics[width=\textwidth]{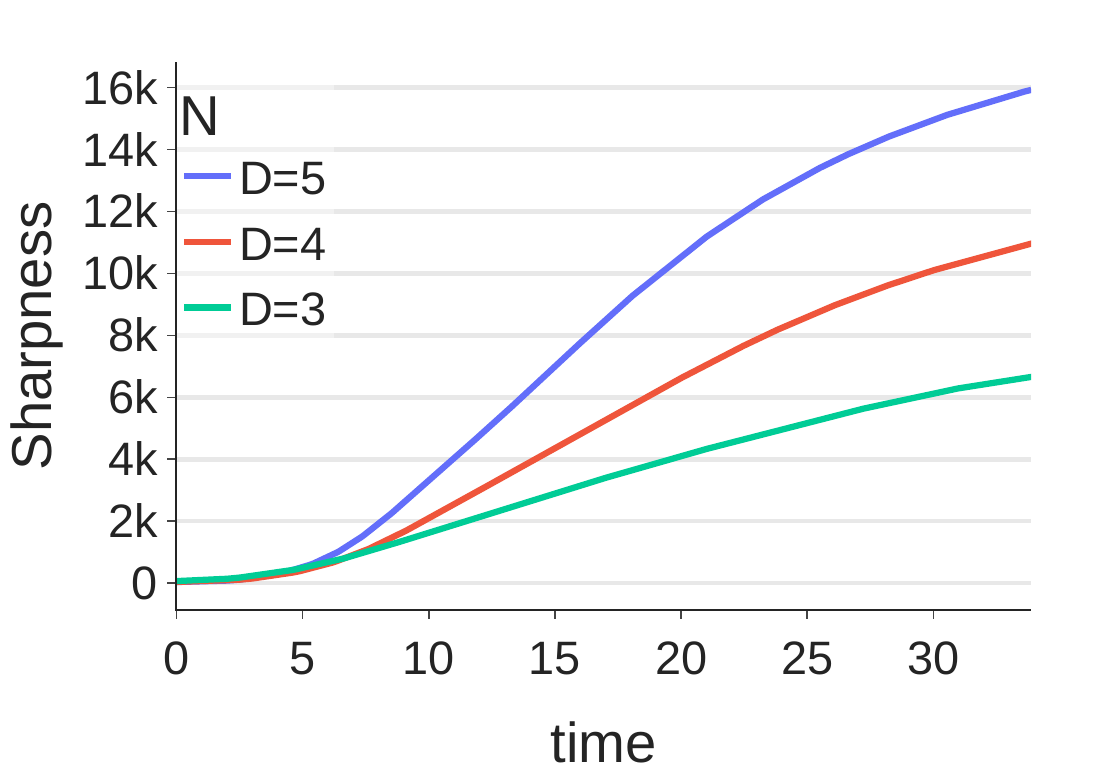} % 이미지 파일 경로
        \subcaption{Depth}
        \label{fig:PS_min_depth}
    \end{minipage}
    % 세 번째 이미지
    \begin{minipage}{0.23\textwidth}
        \centering
        \includegraphics[width=\textwidth]{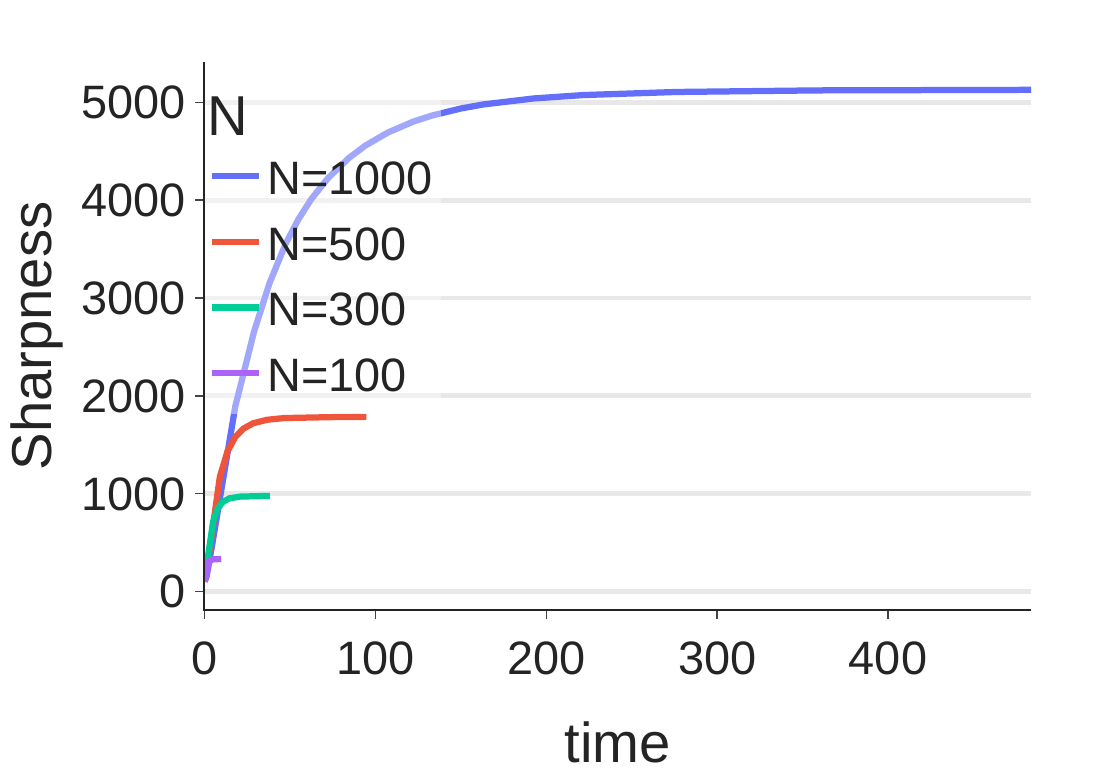} % 이미지 파일 경로
        \subcaption{Dataset size}
        \label{fig:PS_min_dataset_size}
    \end{minipage}
    
    \caption{Effects of depth and dataset size in minimalist models. All experiments used a 2-label subset of $N=1000$ from CIFAR10. In (b), except for $N=1000$, runs terminated after $L(\theta) < 10^{-7}$.}
    \label{fig:PS_min_various_factors}
    %\vspace{-4mm}
\end{figure}

\begin{phenomenon}[Key factors of progressive sharpening] \label{phe:PS}
The degree of progressive sharpening depends on the following problem parameters:
\vspace{-2mm}
\begin{itemize}[leftmargin=0.2cm]
    \item {\bf Dataset size}: Progressive sharpening occurs to a greater degree as the size of the training dataset increases. For example, when training on different-sized subsets of CIFAR10 using gradient flow, progressive sharpening is more pronounced with larger datasets (see Figure~\ref{fig:PS_data}).
    \item {\bf Network depth}: Progressive sharpening occurs to a greater degree as network depth increases. For instance, when training fully-connected networks with fixed width and varying depth using gradient flow, deeper networks exhibit more pronounced progressive sharpening (see Figure~\ref{fig:PS_depth}).
    \item {\bf Batch size}: Progressive sharpening occurs to a greater degree as the SGD batch size increases. When training with SGD, larger batch sizes lead to more pronounced progressive sharpening (see Figure~\ref{fig:PS_batch_size}).
    \item {\bf Learning rate of SGD}: Progressive sharpening occurs to a lesser degree as the SGD learning rate increases, especially when the batch size is small (see Figure~\ref{fig:PS_batch_size}).
\end{itemize}
\end{phenomenon}
%\vspace{-3mm}

\citet{cohen2021gradient} also observe that wider networks exhibit less progressive sharpening with cross-entropy loss. However, our experiments suggest that this trend may not hold for squared loss, as shown in Figure~\ref{fig:PS_width}. Due to this discrepancy, we exclude the network width from the key factors of progressive sharpening.

%\vspace{-5pt}
\section{Minimalist Model for Sharpness Dynamics}
%\vspace{-3pt}
\label{sec:minimal}

In this section, we introduce a \emph{minimalist model}---a deep linear network with a single neuron per layer---that effectively captures the key characteristics of progressive sharpening (\Cref{phe:PS}) as well as the edge of stability phenomenon observed in practical setups. 

% \vspace{-6pt}
\subsection{Problem Setup}
% \vspace{-5pt}
\noindent\textbf{Notation.~~} For a positive integer $n$, we use $[n]$ to denote $\{1,\ldots,n\}$. For a vector $v$, $\|v\|$ denotes its Euclidean norm. For a matrix $A$, let $\|A\|_2$ detnoes its spectral norm, and $\colsp(A)$ and $\rowsp(A)$ denote its column space and row space, respectively. For a symmetric matrix $M$, let $\lambda_{\max}(M)$ denote its maximum eigenvalue. For a linear subspace $S$ of $\mathbb{R}^d$, we use $S^\perp$ to denote its orthogonal complement.

\noindent\textbf{Minimalist Model.~~} We consider a simple deep linear network $f: \mathbb{R}^d \to \mathbb{R}$, where each hidden layer consists of a single neuron with the identity activation function and $D \geq 2$ is the depth. The network is defined as
\begin{align}\label{eq:model}
f(x; \theta) := (x^\top u) \prod\nolimits_{i=1}^{D-1} v_i,
\end{align}
where $\theta = (u, v_1, \ldots, v_{D-1})$ represents the collection of all model parameters. Here, $u \in \mathbb{R}^d$ is the weight of the first layer, and $v_i \in \mathbb{R}$ is the weight of the $(i+1)$-th layer. We use $p := d+D-1$ to denote the total number of parameters.
%One can view this model as a fully-connected network with one hidden neuron per layer, with the identity activation function.

\noindent\textbf{Task.~~} We study the empirical risk minimization problem under the squared loss. Let the training dataset be defined by the data matrix $X = \begin{bmatrix}x_1^\top & \cdots & x_N^\top\end{bmatrix}^\top \in \mathbb{R}^{N \times d}$ and the label vector $y = \begin{bmatrix} y_1 & \cdots & y_N\end{bmatrix}^\top \in \mathbb{R}^N$, which satisfy $y \neq \bm{0}$. The loss function $L: \mathbb{R}^p \to \mathbb{R}$ is given by
\begin{align*}
L(\theta)
\!:=\!\frac{1}{2N} \sum_{i=1}^N \left(f(x_i; \theta) - y_i\right)^2 
\!=\! \frac{1}{2N} \left \|X u \prod_{i=1}^{D-1} v_i - y \right \|^2\!\!.
\end{align*}
For convenience, we abbreviate the network output for the entire dataset as
\begin{align*}
f(X; \theta) := (X u) \prod\nolimits_{i=1}^{D-1} v_i\,,
\end{align*}
and the residual vector as
\begin{align*}
    z(\theta) := f(X;\theta)-y\,,
\end{align*}
so we can write $L(\theta) = \frac{1}{2N} \|f(X;\theta)-y\|^2=\frac{1}{2N}\|z(\theta)\|^2$.
\newpage
\textbf{Optimizers.~~} We consider three optimization algorithms:
\begin{itemize}[leftmargin=0.2cm]
\item Gradient Flow (GF):
%\begin{align*}
    $\dot{\theta}(t) = -\nabla L(\theta(t))$,
%\end{align*} 
\item Gradient Descent (GD):
%\begin{align*}
    $\theta^{(t+1)}\leftarrow \theta^{(t)} - \eta \nabla L(\theta^{(t)})$,
%\end{align*}
\item Mini-batch Stochastic Gradient Descent (SGD):
\begin{align*}
    \theta^{(t+1)}\leftarrow \theta^{(t)} - \eta \nabla \tilde{L}(\theta^{(t)}; P^{(t)})\,,
\end{align*}
\end{itemize}
%\vspace{-5pt}
where we use $\eta$ to denote the learning rate (a.k.a.\ step size). 
For SGD, $\tilde{L}(\theta; P^{(t)})$ is the mini-batch loss at step $t$:
\begin{align*}
\tilde{L}(\theta; P^{(t)}) 
\!=\! \frac{1}{2B} \big \|P^{(t)}(f(X;\theta)-y) \big \|^2 
\!=\! \frac{1}{2B} \big \|P^{(t)}z(\theta) \big \|^2,
\end{align*}
defined by an independently sampled random diagonal matrix $P^{(t)} \in \mathbb{R}^{N\times N}$ with exactly $B$ diagonal entries chosen uniformly at random and set to $1$ and the rest set to $0$. 

\noindent\textbf{Sharpness.~~} The primary goal of this paper is to understand how much the \emph{sharpness} of the loss increases along the training trajectory. The quantity \textbf{sharpness} $S(\theta)$ at $\theta$ is defined as the maximum eigenvalue of the loss Hessian at $\theta$:
\begin{align*}
S(\theta) := \lambda_{\max}(\nabla^2 L(\theta))\,.
\end{align*}

\subsection{Minimalist Model Replicates Sharpness Dynamics}\label{secsub:PS_Minimal}
We now demonstrate that our minimalist model successfully replicates most of the interesting phenomena in the sharpness dynamics of neural network training. Here, we empirically showcase that the model not only reproduces the key observations made in \cref{phe:PS}, but also the self-stabilization dynamics in the edge of stability regime. Our observations with minimalist model suggest that this simple deep linear network could offer a useful testbed for understanding the sharpness dynamics in deep learning.

\noindent\textbf{Progressive Sharpening.~~} 
For \cref{fig:PS_min_batch_size} and \cref{fig:PS_min_depth}, we trained our minimalist model on a 2-label (cat vs dog) subset of CIFAR10 ($d=3072$) with $N=1000$, where labels are set to $\pm 1$ depending on the class. For \cref{fig:PS_min_dataset_size}, we changed $N$ for different runs. In \cref{fig:PS_min_batch_size}, we used random-reshuffling SGD for $B<N$ cases, and GD for $B=N$ cases. For \cref{fig:PS_min_depth} and \cref{fig:PS_min_dataset_size}, we used Runge-Kutta~4 algorithm with adaptive step size of $\frac{1.0}{S(\theta)}$, where $S(\theta)$ is measured once every 50 iterations. In \cref{fig:PS_min_batch_size} and \cref{fig:PS_min_various_factors}, we can observe the same trend described in \cref{phe:PS} with our minimalist model: progressive sharpening happens to a greater degree with larger datasets, deeper networks, larger batch size, and smaller learning rate (when $B$ is small). While preserving the essential properties of progressive sharpening, the simplicity of our model makes it amenable to rigorous theoretical analyses, which we present in the subsequent sections.

\noindent\textbf{Edge of Stability.~~}
Before we present further investigations into progressive sharpening, we discuss our empirical findings on the edge of stability regime here.
To see whether our minimalist model also captures the characteristics of sharpness dynamics in this regime, we chose $X \in \mathbb{R}^{2 \times 2}$ and $y \in \mathbb{R}^2$ ($N=d=2$) whose entries were sampled from the standard normal distribution. We trained a depth $D=2$ minimalist model and observed the evolution of sharpness and training loss throughout the run of GD.
For comparison, we also trained a Transformer on SST2 dataset~\citep{socher2013recursive}. For more details, refer to \cref{sec:exp_detail_EoS}.

In \cref{fig:EoS_Minimal_Sharpness} and \cref{fig:EoS_Practical_Sharpness}, we observe that the sharpness dynamics exhibited by the minimalist model closely resembles the typical sharpness curve at the edge of stability. The model first goes through progressive sharpening until the sharpness reaches $2/\eta$, and then the sharpness oscillates around the threshold $2/\eta$. 
\cref{fig:EoS_Minimal_Loss} and \cref{fig:EoS_Practical_Loss} show the loss curve for the two models. As usually seen in the loss curves of practical models, the loss of our minimalist model decreases \emph{non-monotonically} with \emph{occasional spikes} during the edge of stability phase. Replicating this convergence behavior with a minimalist model is intriguing, because many existing studies based on other minimalist models fail to do so~\citep{zhu2023understanding,kreisler2023gradient,kalra2025universal}.

In practical models, the magnitude of the sharpness oscillation around $2/\eta$ often decays over time, as can be seen in \cref{fig:EoS_Practical_Sharpness} and also Figure~3 of \citet{damian2023selfstabilization}. To the best of our knowledge, the reason for this attenuated oscillation is not well-understood. Interestingly, we find that our minimalist model captures this characteristic as well (\cref{fig:EoS_Minimal_Sharpness}), suggesting its potential usefulness for deeper theoretical understanding. However, it looks quite challenging to analyze this decay theoretically, because we discovered that the sharpness at which the rapid sharpness drop starts in fact depends on \emph{machine precision}, and high precision can even make the loss \emph{blow up} instead of decaying with occasional spikes. See \Cref{sec:precision_dependence} for details.

\begin{figure}[t]
    \centering
    \begin{subfigure}{0.22\textwidth}
        \centering
        \includegraphics[width=\linewidth]{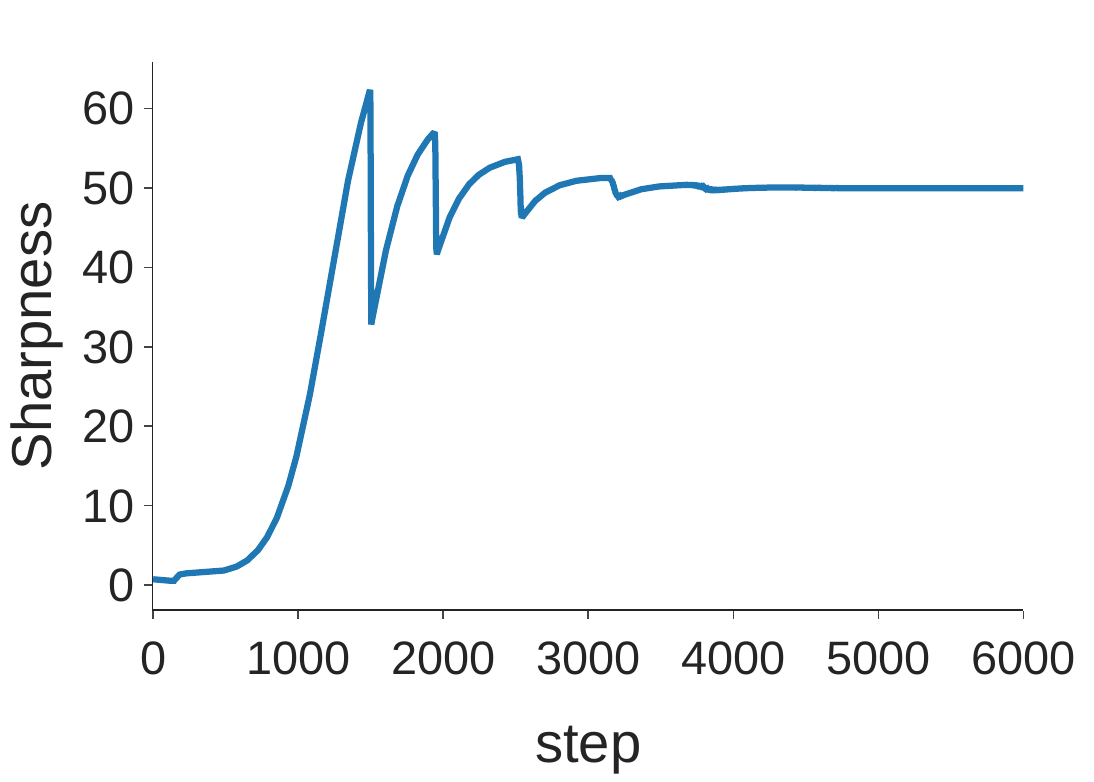}
        \caption{Sharpness of minimalist}
        \label{fig:EoS_Minimal_Sharpness}
    \end{subfigure}
    \begin{subfigure}{0.22\textwidth}
        \centering
        \includegraphics[width=\linewidth]{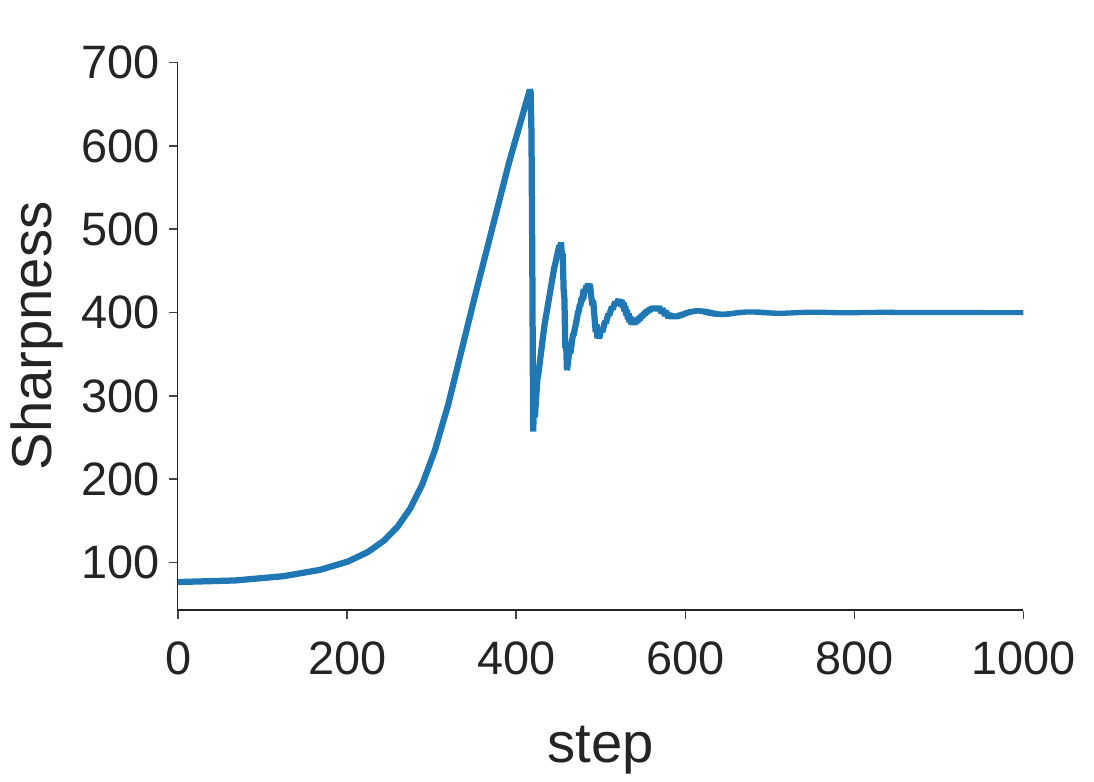}
        \caption{Sharpness of Transformer}
        \label{fig:EoS_Practical_Sharpness}
    \end{subfigure}
    
    \begin{subfigure}{0.22\textwidth}
        \centering
        \includegraphics[width=\linewidth]{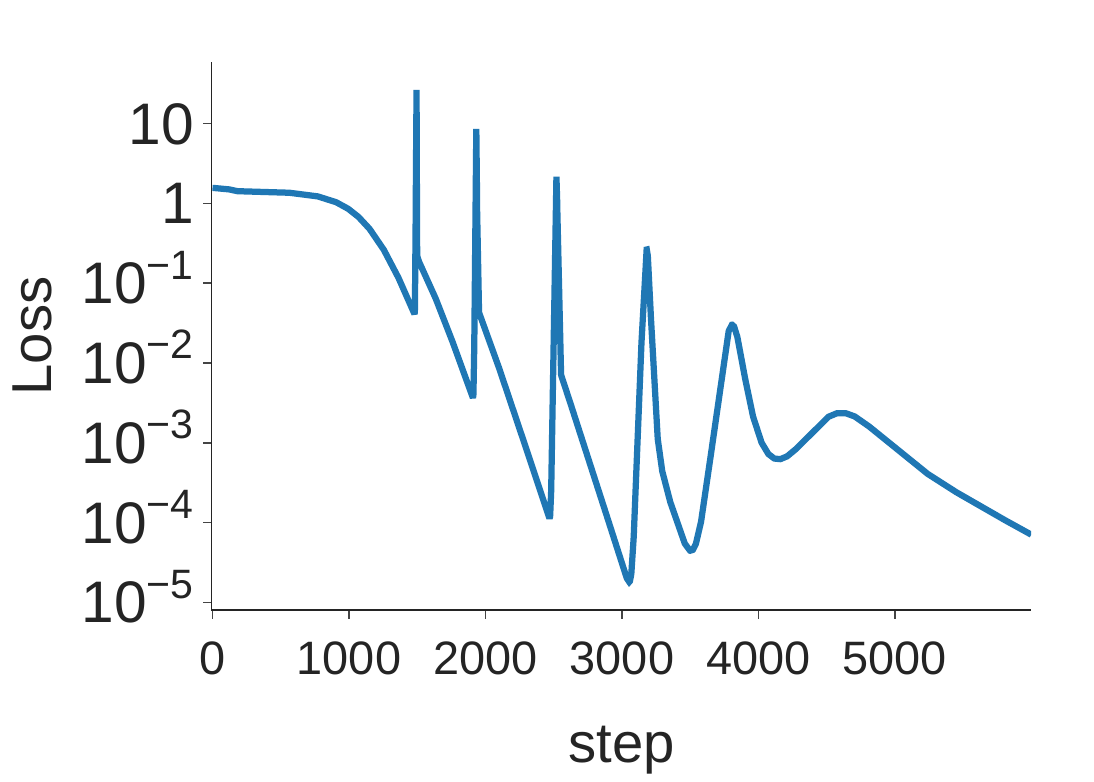}
        \caption{Loss of minimalist}
        \label{fig:EoS_Minimal_Loss}
    \end{subfigure}
    \begin{subfigure}{0.22\textwidth}
        \centering
        \includegraphics[width=\linewidth]{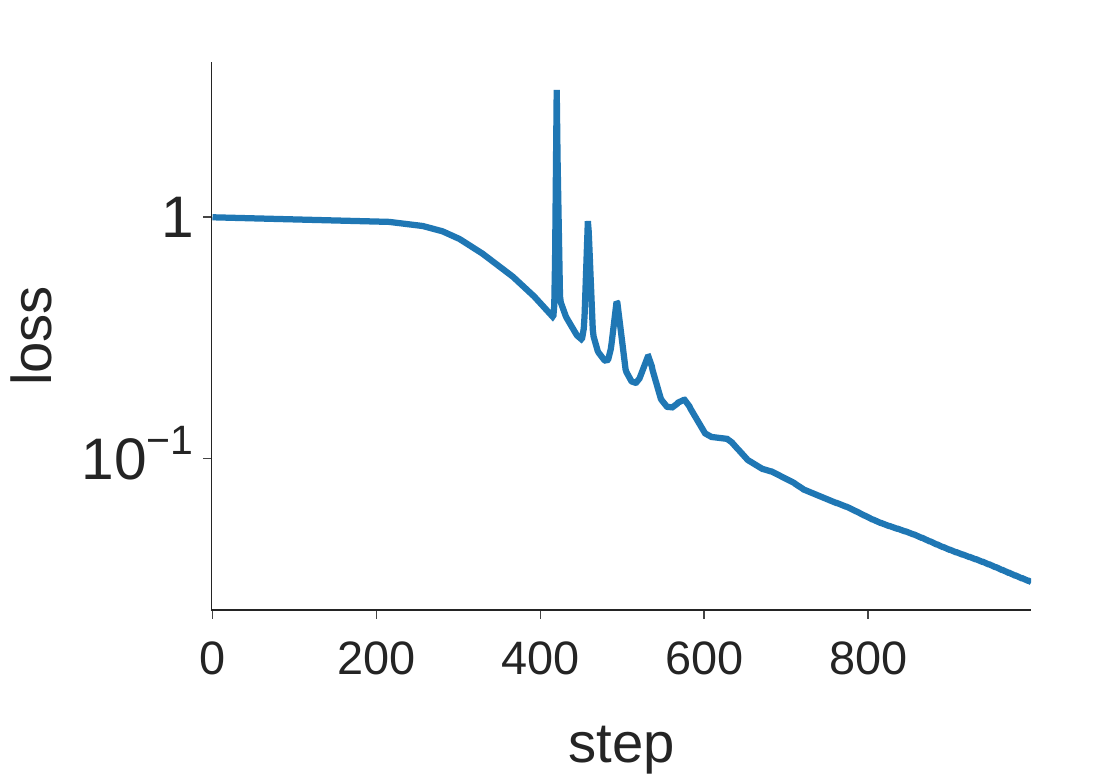}
        \caption{Loss of Transformer}
        \label{fig:EoS_Practical_Loss}
    \end{subfigure}
    \caption{Our model captures typical ``edge of stability'' behaviors.}
    \vspace{-5mm}
    \label{fig:EoS_plots}
\end{figure}

%\vspace{-2mm}
\section{Sharpness at Minimizer of Training Loss}
\label{sec:minimizer_sharpness}

In this section, we provide a theoretical characterization of sharpness $S(\theta^\star)$ for any given minimizer $\theta^\star$ of the minimalist model. We start by introducing some necessary notation.

We let $r:=\mathrm{rank}(X)$, and denote the singular value decomposition (SVD) of the data matrix $X \in \mathbb{R}^{N\times d}$ as
\begin{align}\label{eq:Xdecomp}
    X = \sum\nolimits_{i=1}^r \sigma_i e_i w_i^\top\,,
\end{align}
where $\sigma_1\ge\ldots\ge\sigma_r>0$ are the singular values, $e_i\in\mathbb{R}^N$ are the left singular vectors, and $w_i\in \mathbb{R}^d$ are the right singular vectors. 

For simplicity of exposition, we assume without loss of generality that $y \in \colsp(X)$, which allows the model to attain zero training loss $L(\theta^\star) = 0$ at global minima.\footnote{This is without loss of generality, because any $y$ can be decomposed into $y = y_\parallel + y_\perp$ with $y_\parallel \in \colsp(X)$ and $y_\perp \in \colsp(X)^\perp$. This then allows decomposing the loss into:
$L(\theta) = \frac{1}{2N} \|X u \prod_{i=1}^{D-1} v_i - y_\parallel \|^2 + \frac{1}{2N}  \| y_\perp \|^2$. The second constant term becomes the global minimum value of $L(\theta)$.}
The label vector $y$ can then be decomposed as
\begin{align}\label{eq:ydecomp}
    y = \sum\nolimits_{i=1}^r d_i e_i\,,
\end{align}
where $d_1,\ldots,d_r$ are scalars. 
Let $W := \rowsp(X) = \mathrm{span}(w_1,\ldots,w_r)$, and define $\Pi_W$ as the projection onto $W$, and $\Pi_W^\perp$ as the projection onto its orthogonal complement $W^\perp$.
Then, the first-layer weight $u\in \mathbb{R}^d$ can be decomposed as
\begin{align*}
    u = \sum\nolimits_{i=1}^r o_i w_i + \Pi_W^\perp u\,.
\end{align*}

Then, we can express the GF, GD, and SGD dynamics in terms of $\sigma_i$, $d_i$, $o_i$ for $i\in [r]$ and $v_1,\ldots,v_{D-1}$. A detailed derivation is provided in \cref{subsec:Derive_minimal}. It is worth noting that the updates to $u$ occur only within the subspace $W$.

Now we introduce a key concept that determines the degree of progressive sharpening.

\begin{definition}[Dataset Difficulty]
The \emph{difficulty} of a dataset $(X,y)$ is defined as
\[ Q := \sum\nolimits_{i=1}^r \frac{d_i^2}{\sigma_i^2}\,. \]
\end{definition}
Intuitively, the quantity $Q$ captures the overall difficulty for a model to perfectly fit the dataset. For illustration, let us temporarily consider the linear model: we would like to learn a vector $\beta \in \mathbb{R}^d$ that satisfies $X\beta = y$. From the decompositions in \eqref{eq:Xdecomp} and \eqref{eq:ydecomp}, it is straightforward to check that any solution $\beta^\star$ must satisfy $\beta^\star = \sum_{i=1}^r \frac{d_i}{\sigma_i}w_i + \Pi_W^\perp \beta^\star$. Hence, each $\frac{d_i}{\sigma_i}$ can be thought of as ``distance to travel'' for the model to achieve $e_i^\top (X\beta - y) = 0$. The squared sum of such distances thus captures the total amount of effort required to fit the entire dataset. 

Notice that $Q$ is only dependent on the dataset, independent of any architecture or optimization algorithm. We will show later that not only $Q$ is useful in our theoretical analysis of minimalist models, but also $Q$ can be used to predict the degree of progressive sharpening in larger fully-connected networks with nonlinear activations. \cref{tab:Q} shows average values of $Q$ computed for 2-label subsets of CIFAR10, SVHN and Google speech commands datasets. We see that $Q$ increases as $N$ increases.

\subsection{Two-layer Linear Networks}
\label{sec:minimizer_2layer}
We first consider the two-layer case, i.e., when $D=2$. Here, we prove upper and lower bounds on the sharpness $S(\theta^\star)$ at a given solution $L(\theta^\star) = 0$ as a function of the \emph{imbalance} between two layers of $\theta^\star$.

\begin{definition}[Layer Imbalance]\label{def:layer-imbalance}
The \emph{layer imbalance} of the two-layer minimalist model ($D=2$) with parameter $\theta = (u, v_1)$ is defined by
\[ C(\theta) := \|\Pi_W u\|^2 - v_1^2 =  \big ( \sum\nolimits_{i=1}^r o_i^2 \big ) - v_1^2 \,. \]
\end{definition}
As will be seen in \Cref{sec:optimizer}, layer imbalance is a preserved quantity for the trajectory of GF, and a slowly increasing quantity for GD and SGD. %if $C(\theta^{(t)}) \leq 0$.

In the next theorem, we show that the sharpness of a global minimum $\theta^\star$ can be characterized using $C(\theta^\star)$ and $Q$.

\begin{theorem}[Sharpness at minimizer, two-layer case]\label{thm:2layer} 
For a two-layer minimalist model~\eqref{eq:model} trained on a dataset $(X,y)$ with difficulty $Q$, the sharpness at a global minimizer $\theta^\star = (u^\star, v_1^\star)$ of $L(\theta)$ is bounded by
\begin{align*}
\frac{1}{N}\!\left[\sigma_1^2 (v_1^\star)^2 \!+\!\frac{d_1^2}{(v_1^\star)^2}\right]
\!\leq\! 
S(\theta^\star)
\!\leq\!
\frac{1}{N} \!\left[ \sigma_1^2 (v_1^\star)^2 \!+\!\frac{\sum_{i=1}^r d_i^2}{(v_1^\star)^2} \right]\!,
\end{align*}
where $v_1^\star$ satisfies 
$$(v^\star_1)^2 = \frac{\sqrt{C(\theta^\star)^2 + 4Q}-C(\theta^\star)}{2}\,.$$
% Specifically, if $\tilde{C}=0$,
Specifically, if the layers are balanced, i.e., $C(\theta^\star)=0$, then
\begin{align*}
\frac{1}{N}\left[ \sigma_1^2 Q^{1/2} \right . & \left . + d_1^2 Q^{-1/2} \right] \leq S(\theta^\star)\\  
&\leq  \frac{1}{N} \left[ \sigma_1^2 Q^{1/2} +\left(\sum\nolimits_{i=1}^r d_i^2\right) Q^{-1/2} \right].
\end{align*}
\end{theorem}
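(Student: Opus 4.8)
The plan is to exploit the zero-residual structure at the minimizer to collapse the Hessian into a small, explicitly structured matrix, and then bound its top eigenvalue. Writing the model output as $F(\theta) = v_1 X u \in \R^N$, the loss is $L = \frac{1}{2N}\|F(\theta) - y\|^2$. At a global minimizer the residual $F(\theta^\star) - y$ vanishes, so the residual-weighted second-order term in $\nabla^2 L$ drops out and the Hessian collapses to the Gauss--Newton form $\nabla^2 L(\theta^\star) = \frac1N (DF)^\top (DF)$, which is automatically PSD. First I would compute the Jacobian $DF = [\,v_1^\star X \;\; X u^\star\,] \in \R^{N\times(d+1)}$, so that $S(\theta^\star) = \frac1N \lambda_{\max}\big((DF)^\top DF\big)$.

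Next I would pass to the smaller Gram matrix. Since $(DF)^\top DF$ and $(DF)(DF)^\top$ share the same nonzero spectrum, I would instead analyze $(DF)(DF)^\top = (v_1^\star)^2 XX^\top + (Xu^\star)(Xu^\star)^\top$, an $N\times N$ matrix supported entirely on $\mathrm{span}(e_1,\dots,e_r)$. In that orthonormal basis, $XX^\top$ becomes $\Sigma^2 = \diagm(\sigma_1^2,\dots,\sigma_r^2)$. The key simplification comes from the minimizer condition $v_1^\star X u^\star = y$: matching coordinates against the SVD forces $o_i^\star = d_i/(v_1^\star \sigma_i)$ (and $v_1^\star\neq 0$ whenever $y\neq 0$), so $X u^\star = \sum_i \sigma_i o_i^\star e_i$ reduces to the vector $\tfrac{1}{v_1^\star}\bm d$ with $\bm d = (d_1,\dots,d_r)^\top$. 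Hence the sharpness equals $\frac1N$ times the top eigenvalue of the $r\times r$ matrix
\[ A := (v_1^\star)^2 \Sigma^2 + \frac{1}{(v_1^\star)^2}\bm d \bm d^\top, \]
a diagonal matrix plus a positive rank-one update.

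With this structure the two bounds are immediate. For the upper bound I would apply Weyl's inequality (subadditivity of $\lambda_{\max}$): $\lambda_{\max}(A) \le (v_1^\star)^2 \lambda_{\max}(\Sigma^2) + (v_1^\star)^{-2}\|\bm d\|^2 = (v_1^\star)^2\sigma_1^2 + (v_1^\star)^{-2}\sum_{i=1}^r d_i^2$. For the lower bound I would evaluate the Rayleigh quotient of $A$ at the first coordinate direction, giving $\lambda_{\max}(A) \ge (v_1^\star)^2\sigma_1^2 + (v_1^\star)^{-2} d_1^2$. The gap between the two reflects that the exact top eigenvalue of a diagonal-plus-rank-one matrix solves a secular equation with no clean closed form; the bounds bracket it and coincide exactly when $r=1$ (or when $\bm d$ aligns with the top mode), which is a useful sanity check.

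Finally I would convert $(v_1^\star)^2$ into the stated expression via the layer imbalance. Using $o_i^\star = d_i/(v_1^\star\sigma_i)$ gives $\sum_{i=1}^r (o_i^\star)^2 = Q/(v_1^\star)^2$, so $C(\theta^\star) = Q/(v_1^\star)^2 - (v_1^\star)^2$. Treating this as a quadratic in $z = (v_1^\star)^2$, namely $z^2 + C(\theta^\star)\, z - Q = 0$, and taking the positive root yields $(v_1^\star)^2 = \tfrac12\big(\sqrt{C(\theta^\star)^2 + 4Q} - C(\theta^\star)\big)$. Setting $C(\theta^\star)=0$ gives $(v_1^\star)^2 = Q^{1/2}$, and substituting into the two bounds recovers the balanced-case inequalities. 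I expect the \emph{main obstacle} to be the second step --- recognizing that the zero residual lets the Hessian reduce to the explicit rank-one-plus-diagonal matrix $A$; once that structure and the minimizer relation $o_i^\star = d_i/(v_1^\star\sigma_i)$ are in hand, the eigenvalue bounds and the imbalance bookkeeping are routine.
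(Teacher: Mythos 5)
Your proposal is correct and follows essentially the same route as the paper: collapse the Hessian at the zero-loss minimizer to the Gauss--Newton form, use $o_i^\star = d_i/(\sigma_i v_1^\star)$ to write $JJ^\top$ as a diagonal-plus-rank-one matrix in the $e_i$ basis, lower-bound via the Rayleigh quotient at $e_1$, upper-bound via subadditivity of the spectral norm, and recover $(v_1^\star)^2$ from the quadratic in the layer imbalance. The only cosmetic difference is that the paper phrases the upper bound as a triangle inequality on $\|\cdot\|_2$ rather than Weyl's inequality, which is the same estimate.
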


Theorem~\ref{thm:2layer} provides bounds on the sharpness at a global minimizer $\theta^\star$ when the layer imbalance $C(\theta^\star)$ is known. We can easily observe that if $(v_1^\star)^2 \ge \sigma_1^{-1}\sqrt{\sum_{i=1}^r d_i^2}$, both the lower and upper bounds of $S(\theta^\star)$ increase with $(v^\star_1)^2$. Furthermore, $(v^\star_1)^2$ is an increasing function of the dataset difficulty $Q$ and a decreasing function of the layer imbalance $C(\theta^\star)$. Therefore, if $C(\theta^\star)$ increases while $(v_1^\star)^2 \ge \sigma_1^{-1}\sqrt{\sum_{i=1}^r d_i^2}$, then both the lower and upper bounds decreases.
The condition $(v_1^\star)^2 \ge \sigma_1^{-1}\sqrt{\sum_{i=1}^r d_i^2}$ holds if and only if 
$$
C(\theta^\star) \le \tilde{C} := \frac{\sigma_1 Q}{\sqrt{\sum_{i=1}^r d_i^2}} - \frac{\sqrt{\sum_{i=1}^r d_i^2}}{\sigma_1}\,.
$$
Note that $\tilde C$ is again only dependent on the dataset, and \cref{tab:Cminimizer} shows its average numerical value for CIFAR10 (see \cref{tab:Cminimizer_svhn} for SVHN and Google speech commands). As can be seen later in \cref{fig:PS_min_C}, we observe in practice that $C(\theta^{(t)})$ is substantially smaller than $\tilde{C}$ throughout training. Thus, under the assumption $C(\theta^\star) \le \tilde{C}$, we establish the following relationship: 

\begin{remark}\label{rem:sharpness}
The sharpness at a global minimizer $S(\theta^\star)$ increases with the largest singular value $\sigma_1$ of the data matrix, increases with dataset difficulty $Q$, and decreases with layer imbalance $C(\theta^\star)$.
\end{remark}
This observation will play a crucial role in \cref{sec:optimizer_sgd}.

\subsection{Deep Linear Networks}
Now we extend our analysis to the general case, i.e., a minimalist model with arbitrary depth $D \geq 2$. For the deeper case, we focus on the case where all layers are balanced, for the sake of simplicity. To this end, we first introduce the following assumption.

\begin{assumption}[Balanced Layers]\label{asm:BalancedInit}
    For a minimalist model of depth $D\ge 2$ with parameter $\theta=(u,v_1,\ldots,v_{D-1})$, we say the parameter $\theta$ is \emph{balanced} if
    \begin{align*}
        \|\Pi_W u\|=|v_1|=\cdots=|v_{D-1}|\,.
    \end{align*}
\end{assumption}

The balancedness assumption is widely adopted in the deep linear network literature~\citep{arora2018on,arora2018a}. In our setting, the key difference is that we consider the norm of the first-layer weight $u$ projected onto the subspace $W$.

\begin{theorem}[Sharpness at minimizer, general case]\label{thm:arbitrary_layer}
For the minimalist model~\eqref{eq:model} of depth $D\ge 2$ trained on a dataset $(X,y)$ with difficulty $Q$, let $\theta^\star$ be a global minimizer of $L(\theta)$ that is balanced (\cref{asm:BalancedInit}). Then, the sharpness at $\theta^\star$ is bounded by
\begin{align*}
    \frac{1}{N}&\left[\sigma_1^2 Q^{\frac{D-1}{D}}+ (D-1) d_1^2 Q^{-\frac{1}{D}} \right] \leq  S(\theta^\star) \\
    &\leq \frac{1}{N} \left[\sigma_1^2 Q^{\frac{D-1}{D}}+ (D-1) \left( \sum\nolimits_{i=1}^r d_i^2 \right) Q^{-\frac{1}{D}} \right].
\end{align*}

\end{theorem}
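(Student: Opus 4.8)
The plan is to evaluate the sharpness as the top eigenvalue of the Gauss--Newton matrix at the zero-loss minimizer, reduce this to the largest eigenvalue of a low-dimensional \emph{arrowhead matrix}, and then pin that eigenvalue down through its secular (characteristic) equation. Writing $p:=\prod_{i=1}^{D-1}v_i$ for the product of the scalar layers, the prediction on the dataset is $p\,Xu$, so $L(\theta)=\frac{1}{2N}\|pXu-y\|^2$. Since a global minimizer has zero residual, the part of the loss Hessian weighted by the residual vanishes and $\nabla^2 L(\theta^\star)=\frac1N J^\top J$, where $J=[\,pX \mid p_1 Xu \mid \cdots \mid p_{D-1}Xu\,]$ is the Jacobian of $p\,Xu$ with $p_j:=p/v_j$. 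Hence $S(\theta^\star)=\frac1N\max_{\|z\|=1}\|Jz\|^2$.

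First I would identify the balanced minimizer. Using the SVDs \eqref{eq:Xdecomp}, \eqref{eq:ydecomp} and $Xu=\sum_i\sigma_i o_i e_i$, the zero-loss condition $pXu=y$ forces $o_i=d_i/(p\sigma_i)$, so $\|\Pi_W u\|^2=\sum_i o_i^2=Q/p^2$. Imposing balancedness (\cref{asm:BalancedInit}), i.e. $\|\Pi_W u\|=|v_j|=:a$ and $|p|=a^{D-1}$, then yields the clean identities $a^2=Q^{1/D}$ and $p^2=Q^{(D-1)/D}$, which I will use to evaluate every entry below.

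Next I would reduce the Rayleigh quotient. Writing $z=(z_u,z_1,\dots,z_{D-1})$, one has $Jz=X(p z_u + s u)$ with $s:=\sum_j p_j z_j$, so the $W^\perp$ component of $z_u$ lies in $\ker J$ and may be dropped, and decomposing $z_u=\sum_i\zeta_i w_i$ gives $\|Jz\|^2=\sum_i\sigma_i^2(p\zeta_i+s o_i)^2$. Since this depends on $z_1,\dots,z_{D-1}$ only through $s$, and balancedness makes all $|p_j|=a^{D-2}$ equal (signs can be absorbed into the $z_j$), Cauchy--Schwarz shows the cheapest way in $\sum_j z_j^2$-budget to realize a given $s$ is to take the $z_j$ equal, yielding $s=\beta c$ with $\beta=\sqrt{D-1}\,a^{D-2}$ and $c^2=\sum_j z_j^2$. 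The problem thus collapses to maximizing the quadratic form in $(\zeta_1,\dots,\zeta_r,c)$ whose matrix $M$ is an arrowhead matrix: diagonal block $\diagm(Q^{(D-1)/D}\sigma_i^2)$, border $b_i=\sqrt{D-1}\,Q^{(D-2)/(2D)}\sigma_i d_i$, and corner $\alpha=(D-1)Q^{-1/D}\sum_i d_i^2$, so that $S(\theta^\star)=\frac1N\lambda_{\max}(M)$. Justifying this reduction rigorously — that dropping kernel directions and symmetrizing the $z_j$ leaves the maximum unchanged — is the main obstacle, and I would prove it by two matching inequalities (any $z$ is dominated by a symmetric one via convexity of $\|Jz\|^2$ in $s$, and every point of the reduced sphere is realizable).

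Finally I would extract the bounds from the secular equation. Away from the diagonal entries $\lambda_i:=Q^{(D-1)/D}\sigma_i^2$, eigenvalues of $M$ solve $f(\mu):=\alpha-\mu-\sum_i \tfrac{b_i^2}{\lambda_i-\mu}=0$, and since $f'(\mu)<0$ on $(\lambda_1,\infty)$ the top eigenvalue $\mu_{\max}$ is the unique root there, so for $c>\lambda_1$ one has $\mu_{\max}\le c\iff f(c)\le 0$ and $\mu_{\max}\ge c\iff f(c)\ge 0$. For the upper bound I evaluate $f$ at $\lambda_1+\alpha$, bound $\tfrac{b_i^2}{\lambda_1+\alpha-\lambda_i}\le b_i^2/\alpha$, and use $\sum_i\sigma_i^2 d_i^2\le\sigma_1^2\sum_i d_i^2$ to obtain $f(\lambda_1+\alpha)\le0$, giving $\mu_{\max}\le\lambda_1+\alpha$. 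For the lower bound I evaluate $f$ at $\lambda_1+\gamma$ with $\gamma:=(D-1)d_1^2 Q^{-1/D}$; the $i=1$ term contributes exactly $b_1^2/\gamma=\lambda_1$, and discarding the remaining nonnegative terms leaves $f(\lambda_1+\gamma)\ge\alpha-\gamma\ge0$ since $\sum_i d_i^2\ge d_1^2$, hence $\mu_{\max}\ge\lambda_1+\gamma$. Dividing by $N$ yields the stated bounds; repeated singular values and the degenerate case $d_1=0$ are dispatched by the trivial estimate $\lambda_{\max}(M)\ge\lambda_1$.
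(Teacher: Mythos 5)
Your proposal is correct, and it shares the paper's setup --- the Gauss--Newton identity $\nabla^2 L(\theta^\star)=\frac1N J(\theta^\star)^\top J(\theta^\star)$ at a zero-residual minimizer, and the balanced-minimizer identities $(v_j^\star)^2=Q^{1/D}$, $o_i^\star = d_i/(\sigma_i\prod_j v_j^\star)$ --- but it diverges from the paper in the key spectral step. The paper passes to the NTK matrix $J J^\top$, which in the basis $\{e_i\}$ is explicitly the sum of a PSD diagonal part $Q^{\frac{D-1}{D}}\sum_i\sigma_i^2 e_ie_i^\top$ and a PSD rank-one part $(D-1)Q^{-\frac1D}\sum_{i_1,i_2}d_{i_1}d_{i_2}e_{i_1}e_{i_2}^\top$; the lower bound is then the single Rayleigh quotient $e_1^\top JJ^\top e_1$ and the upper bound is the triangle inequality applied to the two PSD summands. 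You instead stay with $J^\top J$, collapse the $D-1$ scalar-layer directions and the kernel directions to obtain an $(r+1)\times(r+1)$ arrowhead matrix, and locate its top eigenvalue via the secular equation; I checked your entries $\lambda_i$, $b_i$, $\alpha$, $\gamma$ and the evaluations $f(\lambda_1+\alpha)\le 0$, $f(\lambda_1+\gamma)\ge 0$, and they are all consistent and yield exactly the stated bounds. The trade-off: your route requires the reduction lemma you flag as ``the main obstacle'' (dropping $\ker J$ and symmetrizing the $z_j$ via convexity in $s$ --- which does go through, but must be argued) plus standard arrowhead/secular-equation facts, whereas the paper's $JJ^\top$ route avoids the reduction entirely because the nonzero spectra of $J^\top J$ and $JJ^\top$ coincide and the latter is already low-dimensional and diagonal-plus-rank-one; on the other hand, your secular-equation formulation would in principle let you extract sharper, dataset-dependent estimates of $\lambda_{\max}$ than the two coarse bounds, which the paper's triangle-inequality argument cannot. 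Do make sure to handle $Q=0$ (where $Q^{-1/D}$ is undefined) and $b_1=0$ explicitly as you indicate, since the secular equation's unique-root argument on $(\lambda_1,\infty)$ presumes $b_1\neq 0$ for the lower bound.
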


In \cref{thm:arbitrary_layer}, the effect of depth is dependent on the scale of $Q$. If $Q>1$, the increase of depth will result in higher sharpness, and vice versa if $Q<1$. In datasets of practical size, $Q$ is usually much larger than $1$, as \cref{tab:Q} suggests.
%Based on \cref{tab:key_quantity}, we know that $Q$ grows superlinearly, thus for large $N$, deeper networks will have sharper minima.

\begin{table}[t]%{0.9\textwidth}
    \centering

    \caption{Average $Q$ for different dataset size $N$. Samples were taken from two labels (cat vs dog, 3 vs 5, yes vs no) of each dataset, with standard deviation in parenthesis.}
    \vspace{0.1in}
    \begin{tabular}{|c|c|c|c|}
        \hline
        Name&$N=100$&$N=300$ & $N=1000$\\
        \hline
        CIFAR10&0.22(0.04) &1.70(0.16) & 44.44(3.32) \\
        SVHN& 1.16(0.26) & 21.13(2.90) & 859.4(67.2)\\
        Google speech & 0.26(0.04) & 1.67(0.18) & 26.34(2.35) \\
        \hline
    \end{tabular}
    \label{tab:Q}

    \centering
    \caption{Average $\tilde C$ that minimizes upper bound of \Cref{thm:2layer}, computed from 2-label subsets of CIFAR10.}
    \vspace{0.1in}
    \begin{tabular}{|c|c|c|c|c|}
        \hline
        $N\!=\!100$&$N\!=\!300$ & $N=500$ & $N=1000$\\
        \hline
        6.0(1.2) & 47.4(5.3) & 155.9(15.4) & 1246.1(100.2) \\
        
        \hline
    \end{tabular}
    \label{tab:Cminimizer}
    \centering
    
    \caption{Average $\hat S_D = \frac{\sigma_1^2}{N} Q^{\frac{D-1}{D}}$ computed from 2-label subsets of CIFAR10.}
    \vspace{0.1in}
    \begin{tabular}{|c|c|c|c|c|}
        \hline
        $D$&$N\!=\!100$&$N\!=\!300$ & $N=500$ & $N=1000$\\
        \hline
        2 & 365(61) &1017(91) &1851(128) &5243(296) \\
        3 & 283(53) &1111(114) &2465(202) &9876(662)  \\
        4 & 249(50) &1161(127) &2845(252) &13555(983) \\
        5 & 231(48) &1193(136) &3101(287) &16391(1244) \\
        
        \hline
    \end{tabular}
    \label{tab:sharpness_proxy}
    \centering
    
    \caption{Average $\frac{D-1}{N} \left( \sum_{i=1}^r  d_i^2\right)Q^{-\frac{1}{D}}$ computed from 2-label subsets of CIFAR10.}
    \vspace{0.1in}
    \begin{tabular}{|c|c|c|c|c|}
        \hline
        $D$&$N=100$&$N=300$ & $N=500$ & $N=1000$\\
        \hline
        2&2.18(0.18) &0.77(0.04) &0.43(0.02) &0.15(0.01) \\
        3&3.36(0.19) &1.68(0.05) &1.13(0.03) &0.56(0.01) \\
        4&4.42(0.18) &2.63(0.06) &1.96(0.04) &1.16(0.02) \\
        5&5.46(0.18) &3.60(0.07) &2.84(0.05) &1.87(0.03) \\
        \hline
    \end{tabular}
    \label{tab:sharpness_other_term}
\end{table}

\section{Optimizers of Minimalist Models}
\label{sec:optimizer}

So far, we have discussed how dataset difficulty and layer imbalance determine the sharpness at a global minimum of the training loss $L(\theta)$ in our minimalist model. 
In this section, we consider the trajectory of GF, GD, and SGD on $L(\theta)$.
For GF, using the fact that the layer imbalance is a conserved quantity, we can characterize the amount of progressive sharpening until convergence. Through numerical experiments we show that this prediction aligns well with the actual post-training sharpness value in both linear and \emph{nonlinear} neural networks.
For GD and SGD, we study how the discrete and stochastic nature of the algorithms affects the evolution of layer imbalance $C(\theta^{(t)})$, which has implications on the sharpness of final solutions (\cref{rem:sharpness}).

\subsection{Gradient Flow}
\label{sec:optimizer_gf}
We introduce following assumption for theoretical results.

\begin{assumption}[Gradient flow converges to a global minimum]
\label{asm:GFsol}
    The gradient flow dynamics $\theta(t)$ converges to the solution $\theta(\infty):=\lim_{t\to \infty}\theta(t)$, where $L(\theta(\infty))=0$.
    \vspace{-3mm}
\end{assumption}
We introduce the following useful property of gradient flow:
\begin{lemma}[Conservation of layer imbalance under gradient flow]\label{lemma:gf_conserve}
    Let $\theta(t)$ be a gradient flow trajectory trained on the minimalist model~\eqref{eq:model}. Then,
    \vspace{-3mm}
    \begin{itemize}[leftmargin=0.5cm]
        \item For $D=2$, the layer imbalance remains constant along the gradient flow trajectory: $C(\theta(t)) = C(\theta(0))$ for all $t \geq 0$.
        \item For $D > 2$ and balanced initialization $\theta(0)$, $\theta(t)$ remains balanced for all $t \geq 0$.
    \end{itemize}
    \vspace{-3mm}
\end{lemma}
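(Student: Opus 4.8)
The plan is to write the gradient flow ODEs for each parameter block and show that the time derivatives of $\|\Pi_W u\|^2$ and of each $v_j^2$ all coincide. First I would introduce the shorthand $p := \prod_{i=1}^{D-1} v_i$ for the product of scalar weights and $\rho := Xu\,p - y$ for the residual, so that $L(\theta) = \frac{1}{2N}\|\rho\|^2$. Differentiating and using $\partial \hat y/\partial u = pX$ and $\partial \hat y/\partial v_j = (p/v_j)\,Xu$ gives the gradient flow equations
$$\dot u = -\frac{p}{N} X^\top \rho, \qquad \dot v_j = -\frac{p}{N v_j}\,(Xu)^\top \rho \quad (j \in [D-1]).$$

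The crucial structural observation is that $X^\top \rho = \sum_{i=1}^r \sigma_i (e_i^\top \rho)\, w_i \in W$, so the entire update $\dot u$ lies in the row space $W$ (this is the remark in the excerpt that updates to $u$ occur only within $W$); in particular $\Pi_W^\perp u$ is constant and $\Pi_W \dot u = \dot u$. Using this I would compute
$$\frac{d}{dt}\|\Pi_W u\|^2 = 2 (\Pi_W u)^\top \dot u = -\frac{2p}{N}\, u^\top X^\top \rho = -\frac{2p}{N}\,(Xu)^\top \rho,$$
where the middle equality uses $(\Pi_W u)^\top X^\top \rho = u^\top X^\top \rho$ because $X^\top \rho \in W$. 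Analogously,
$$\frac{d}{dt} v_j^2 = 2 v_j \dot v_j = -\frac{2p}{N}\,(Xu)^\top \rho.$$
Thus every squared block evolves at one common rate, and hence all pairwise differences $\|\Pi_W u\|^2 - v_j^2$ and $v_j^2 - v_k^2$ have vanishing time derivative.

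From here the two cases follow immediately. For $D=2$, $\frac{d}{dt}C(\theta(t)) = \frac{d}{dt}\big(\|\Pi_W u\|^2 - v_1^2\big) = 0$, giving $C(\theta(t)) = C(\theta(0))$. For $D>2$ with balanced initialization, all squared blocks agree at $t=0$; since their pairwise differences are conserved, they remain equal for all $t$, and equality of squares is exactly the equality of absolute values $\|\Pi_W u\| = |v_1| = \cdots = |v_{D-1}|$ required by \cref{asm:BalancedInit}. The computation is essentially routine, so I do not expect a genuine obstacle; the only point requiring care is the projection bookkeeping, namely verifying $X^\top \rho \in W$ so that replacing $u$ by $\Pi_W u$ leaves the relevant inner products unchanged, and noting that equality of squares suffices for the norm-based balancedness statement. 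This invariant is precisely the row-space-restricted analogue of the standard balancedness conservation law for deep linear networks.
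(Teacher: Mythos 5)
Your proof is correct and follows essentially the same route as the paper's: the paper works in the reparameterized coordinates $o_i = w_i^\top u$ and shows $\sum_i 2o_i\dot o_i = 2v_j\dot v_j$ for every $j$, which is exactly your observation that $\tfrac{d}{dt}\|\Pi_W u\|^2$ and each $\tfrac{d}{dt}v_j^2$ equal the common quantity $-\tfrac{2}{N}(Xu)^\top\rho\prod_q v_q$, with the fact $X^\top\rho\in W$ playing the role of the paper's projection onto the $w_i$ basis. The only cosmetic nit is that writing $\dot v_j = -\tfrac{p}{Nv_j}(Xu)^\top\rho$ implicitly divides by $v_j$; it is cleaner to keep the gradient as $\prod_{q\neq j}v_q$ so that the identity $2v_j\dot v_j = -\tfrac{2p}{N}(Xu)^\top\rho$ holds even when some $v_j$ vanishes.
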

Combining \cref{lemma:gf_conserve} with \cref{thm:2layer} and \cref{thm:arbitrary_layer}:

\begin{corollary}[two-layer case]
\label{cor:GF_2layer}
Consider a two-layer minimalist model~\eqref{eq:model} trained with the gradient flow. Under \cref{asm:GFsol}, the sharpness at convergence $S(\theta(\infty))$ satisfies the bounds of \cref{thm:2layer} with $C(\theta^\star) = C(\theta(0))$. 
\end{corollary}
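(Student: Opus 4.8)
The plan is to combine \cref{lemma:gf_conserve} with \cref{thm:2layer}, using \cref{asm:GFsol} to identify the limit point of the trajectory as a global minimizer to which the theorem applies. First I would invoke \cref{asm:GFsol} to conclude that the gradient flow trajectory $\theta(t)$ converges to a well-defined limit $\theta(\infty)$ satisfying $L(\theta(\infty)) = 0$. Since $\theta(\infty)$ attains zero training loss, it is a global minimizer of $L$, so I can set $\theta^\star := \theta(\infty)$ and \cref{thm:2layer} applies to it directly.

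The key step is to transfer the conserved value of the layer imbalance from finite times to the limit. By the $D=2$ case of \cref{lemma:gf_conserve}, we have $C(\theta(t)) = C(\theta(0))$ for every finite $t \geq 0$. Because the layer imbalance $C(\theta) = \|\Pi_W u\|^2 - v_1^2$ is a continuous function of $\theta$, and $\theta(t) \to \theta(\infty)$ by \cref{asm:GFsol}, I would pass to the limit to obtain $C(\theta(\infty)) = \lim_{t\to\infty} C(\theta(t)) = C(\theta(0))$. This establishes $C(\theta^\star) = C(\theta(0))$.

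With $\theta^\star = \theta(\infty)$ a global minimizer and $C(\theta^\star) = C(\theta(0))$ in hand, I would finish by applying \cref{thm:2layer} verbatim: the sharpness at convergence $S(\theta(\infty))$ satisfies the stated upper and lower bounds, where $(v_1^\star)^2$ is determined by $C(\theta^\star) = C(\theta(0))$ through the formula $(v_1^\star)^2 = \tfrac{1}{2}\bigl(\sqrt{C(\theta(0))^2 + 4Q} - C(\theta(0))\bigr)$.

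The result is essentially immediate given the two ingredients, so there is no serious obstacle; the only point requiring minor care is the passage to the limit, which relies on continuity of $C$ together with the convergence of $\theta(t)$ guaranteed by \cref{asm:GFsol}. I would also emphasize that in the two-layer case no balancedness of initialization is required—unlike the deeper case treated by \cref{thm:arbitrary_layer}—because \cref{lemma:gf_conserve} conserves $C$ for \emph{arbitrary} initial imbalance when $D=2$, which is precisely what lets the bounds of \cref{thm:2layer} be stated in terms of the initial imbalance $C(\theta(0))$.
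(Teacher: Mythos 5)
Your proposal is correct and matches the paper's (implicit) argument exactly: the paper derives this corollary simply by combining \cref{lemma:gf_conserve} with \cref{thm:2layer}, and your additional care in passing $C(\theta(t)) = C(\theta(0))$ to the limit via continuity of $C$ and the convergence guaranteed by \cref{asm:GFsol} is a valid (and slightly more explicit) rendering of the same step.
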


\begin{corollary}[general case]
\label{cor:GF_Dlayer}
Consider a minimalist model~\eqref{eq:model} trained with the gradient flow using balanced initialization (\cref{asm:BalancedInit}). Under \cref{asm:GFsol}, the sharpness at convergence $S(\theta(\infty))$ satisfies the bounds of \cref{thm:arbitrary_layer}.
\end{corollary}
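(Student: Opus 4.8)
The plan is to derive this corollary directly by combining the conservation property of \cref{lemma:gf_conserve} with the minimizer characterization of \cref{thm:arbitrary_layer}. The essential observation is that \cref{thm:arbitrary_layer} applies to \emph{any} balanced global minimizer, irrespective of how it is reached; hence it suffices to certify that the gradient flow limit $\theta(\infty)$ is simultaneously a global minimizer and balanced, and then to quote \cref{thm:arbitrary_layer} verbatim.

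First I would invoke the second bullet of \cref{lemma:gf_conserve}: starting from a balanced initialization $\theta(0)$, the trajectory satisfies $\|\Pi_W u(t)\| = |v_1(t)| = \cdots = |v_{D-1}(t)|$ for every finite $t \ge 0$. Next I would argue that balancedness survives the passage to the limit. Each of the quantities $\|\Pi_W u\|$ and $|v_i|$ is a continuous function of $\theta$ (the projection $\Pi_W$ is linear, hence continuous, and the norm and the absolute value are continuous). Since \cref{asm:GFsol} guarantees $\theta(t) \to \theta(\infty)$, taking $t \to \infty$ in the chain of equalities yields $\|\Pi_W u(\infty)\| = |v_1(\infty)| = \cdots = |v_{D-1}(\infty)|$, so $\theta(\infty)$ is balanced in the sense of \cref{asm:BalancedInit}.

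Finally, \cref{asm:GFsol} also supplies $L(\theta(\infty)) = 0$, so $\theta(\infty)$ is a global minimizer. Having shown that $\theta(\infty)$ is a balanced global minimizer, I would apply \cref{thm:arbitrary_layer} with $\theta^\star = \theta(\infty)$, which produces exactly the stated upper and lower bounds on $S(\theta(\infty))$ with no further computation.

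I expect the only nontrivial step to be the limit argument in the second paragraph: one must be careful that balancedness, being an equality constraint rather than an inequality, is genuinely closed under limits---which it is, precisely because it is cut out by continuous functions. A secondary point worth checking is that \cref{asm:GFsol} indeed furnishes a well-defined limit $\theta(\infty)$ with finite coordinates, so that the continuity argument applies and the dataset difficulty $Q$ entering \cref{thm:arbitrary_layer} is the same dataset-dependent constant throughout. Once these are in place, the corollary follows immediately.
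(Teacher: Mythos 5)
Your proposal is correct and follows essentially the same route as the paper, which simply combines the balancedness-preservation statement of \cref{lemma:gf_conserve} with \cref{asm:GFsol} to conclude that $\theta(\infty)$ is a balanced global minimizer and then applies \cref{thm:arbitrary_layer}; your explicit continuity argument for passing balancedness to the limit is a reasonable (and welcome) filling-in of a step the paper leaves implicit. The only minor caveat is that for $D=2$ you would invoke the first bullet of \cref{lemma:gf_conserve} (conservation of $C$, with $C(\theta(0))=0$) rather than the second, which is stated only for $D>2$.
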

%\vspace{-5pt}

These corollaries show that, for GF, we can get the bounds of sharpness at convergence in \cref{thm:2layer} and \cref{thm:arbitrary_layer} only based on the information of initialization, without the need to run GF. 
The bounds obtained for $S(\theta(\infty))$ align with the observation in \cref{phe:PS} that larger datasets (larger $Q$) and deeper models (larger $D$ when $Q>1$) induce stronger progressive sharpening.

Our theoretical results so far only characterize the sharpness at convergence. We now present theoretical results based on the initial weight assumption specifically for our two-layer minimalist model, thereby deriving theoretical bounds for both the initial sharpness and the converged sharpness.

\begin{assumption}[$\alpha\beta$ Initialization]
For a two-layer minimalist model with parameter $\theta = (u, v_1)$, we say the parameter $\theta$ uses \emph{$\alpha \beta$ initialization} if $u^{(0)} \sim \mathcal{N}\left ({0} , \alpha^2 I_d \right)$ and $v_1^{(0)} \sim \mathcal{N}\left(0, \beta^2 \right)$, where $\alpha \in \R^+$ and $\beta \in \R^+$.
\label{asm:ab-init}
\end{assumption}

Under \cref{asm:ab-init}, our analysis of the minimalist model can encompass many widely used weight-initialization techniques, such as those of \citet{pmlr-v9-glorot10a}, \citet{he2015delving}, and \citet{lecun2002efficient}. 

We now present two theorems, one characterizing expected sharpness at initialization and the other at convergence.

\begin{theorem}[Initial Sharpness Bound with $\alpha \beta$ Initialization]
Under the \cref{asm:ab-init}, for a two-layer minimalist model trained on a dataset $(X,y)$, the expected sharpness at the initialization $\theta^{(0)} = (u^{(0)}, v_1^{(0)})$ is bounded by
\begin{align*}
&\tfrac{\sigma_1^2}{N}\left(\alpha^2 + \beta^2 \right)
\le \mathbb{E}\left[S(\theta^{(0)})\right]
\\ &\le\!\tfrac{1}{N}\left[ \textstyle \sum_{i=1}^r \alpha^2\sigma_i^2 + \beta^2 \sigma_1^2 +\! \sqrt{\textstyle \sum_{i=1}^r \sigma_i^2\left(\alpha^2 \beta^2\sigma_i^2+d_i^2\right)}\right].
\end{align*}
\label{thm:Ab-Init-Sharpness}
\end{theorem}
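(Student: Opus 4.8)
The plan is to evaluate the Hessian in the SVD coordinates of \eqref{eq:Xdecomp}--\eqref{eq:ydecomp} and reduce it to a small matrix. Writing $o_i := w_i^\top u^{(0)}$ and differentiating the two‑layer loss $L(\theta)=\frac{1}{2N}\|v_1 Xu-y\|^2$ twice, one checks that $\nabla^2 L$ annihilates the $W^\perp$‑component of $u$, so the sharpness equals $\lambda_{\max}(H)$ of the reduced $(r+1)\times(r+1)$ matrix
\[
H=\frac1N\begin{pmatrix} v_1^2\,\Sigma^2 & b\\ b^\top & \sum_{i=1}^r\sigma_i^2 o_i^2\end{pmatrix},\qquad b_i=\sigma_i\bigl(2v_1\sigma_i o_i-d_i\bigr),
\]
with $\Sigma^2=\diagm(\sigma_1^2,\dots,\sigma_r^2)$. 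Under \cref{asm:ab-init} the coordinates $o_1,\dots,o_r$ are i.i.d.\ $\mathcal N(0,\alpha^2)$ and $v_1\sim\mathcal N(0,\beta^2)$ is independent, so the task is to bound $\E[\lambda_{\max}(H)]$ for this random matrix.

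For the upper bound I would decompose $H=\frac1N(J^\top J+R)$ into a Gauss--Newton term with $J=[\,v_1X\mid Xu\,]$ and a residual term $R$, the off-diagonal matrix whose blocks are $X^\top g$ with $g:=v_1Xu-y$. Weyl's inequality gives $\lambda_{\max}(H)\le\frac1N\lambda_{\max}(J^\top J)+\frac1N\lambda_{\max}(R)$. Since $J^\top J$ and $JJ^\top=v_1^2XX^\top+(Xu)(Xu)^\top$ share the same nonzero spectrum, a second use of Weyl bounds $\lambda_{\max}(J^\top J)\le v_1^2\sigma_1^2+\|Xu\|^2=v_1^2\sigma_1^2+\sum_i\sigma_i^2 o_i^2$, with expectation $\beta^2\sigma_1^2+\alpha^2\sum_i\sigma_i^2$. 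Because $R$ is off-diagonal, $\lambda_{\max}(R)=\|X^\top g\|$; a short computation using $\E[v_1]=\E[o_i]=0$ gives $\E\|X^\top g\|^2=\sum_i\sigma_i^2(\alpha^2\beta^2\sigma_i^2+d_i^2)$, and Jensen's inequality $\E\|X^\top g\|\le(\E\|X^\top g\|^2)^{1/2}$ then yields the stated upper bound.

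For the lower bound I would avoid the mean-matrix estimate $\E[\lambda_{\max}(H)]\ge\lambda_{\max}(\E[H])$ (which turns out to be too weak) and instead use Cauchy interlacing. Passing to the $2\times2$ principal submatrix of $H$ on the coordinates $(w_1,v_1)$ only decreases the top eigenvalue, and replacing its $(v_1,v_1)$ entry $\sum_i\sigma_i^2 o_i^2$ by the smaller $\sigma_1^2 o_1^2$ decreases it further by monotonicity of $\lambda_{\max}$ under the PSD order. This reduces the bound to an explicit largest eigenvalue,
\[
\lambda_{\max}(H)\ \ge\ \frac{1}{2N}\Bigl[\sigma_1^2(v_1^2+o_1^2)+\sqrt{S}\,\Bigr],\qquad S=\sigma_1^4(v_1^2-o_1^2)^2+4\sigma_1^2(2v_1\sigma_1 o_1-d_1)^2 .
\]
In expectation the first summand contributes exactly $\frac{\sigma_1^2(\alpha^2+\beta^2)}{2N}$, so it remains to show $\E[\sqrt S]\ge\sigma_1^2(\alpha^2+\beta^2)$.

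This square-root estimate is the crux, and I expect it to be the main obstacle: $\sqrt S$ is only accessible pointwise, and the cross term $-4\sigma_1 d_1 v_1 o_1$ inside $S$ may be negative, so Jensen points the wrong way. I would instead view $\phi(d_1):=\E[\sqrt S]$ as a function of the scalar $d_1$. Differentiating twice shows $d_1\mapsto\sqrt S$ is convex (its second derivative equals $4\sigma_1^6(v_1^2-o_1^2)^2/S^{3/2}\ge0$), and the $o_1\mapsto-o_1$ symmetry of the law forces $\phi'(0)=0$, so $\phi$ is minimized at $d_1=0$. At $d_1=0$ one has the pointwise identity $\sqrt S=\sigma_1^2\sqrt{(v_1^2+o_1^2)^2+12v_1^2 o_1^2}\ge\sigma_1^2(v_1^2+o_1^2)$, whence $\phi(d_1)\ge\phi(0)\ge\sigma_1^2\,\E[v_1^2+o_1^2]=\sigma_1^2(\alpha^2+\beta^2)$. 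Adding this to the first summand recovers the claimed lower bound $\frac{\sigma_1^2(\alpha^2+\beta^2)}{N}$, while the upper bound is comparatively routine.
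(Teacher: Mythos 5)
Your proof is correct, and the upper half is essentially the paper's own argument: split the Hessian into the Gauss--Newton part $J^\top J$ and the off-diagonal residual $\langle H,z\rangle$ (your $R$), bound the first by $\sigma_1^2v_1^2+\sum_i\sigma_i^2o_i^2$ and the second by $\|X^\top z\|$ via the spectral norm of an off-diagonal block matrix, then apply Jensen to the square root. The lower bound is where you genuinely diverge. The paper evaluates the Rayleigh quotient of the full (reduced) Hessian at the single test direction $J(\theta)^\top e_1/\|J(\theta)^\top e_1\|$, obtaining pointwise $\sigma_1^2(v_1^2+o_1^2)+\frac{2\sigma_1^4o_1^2v_1^2-2\sigma_1^3d_1o_1v_1}{\sigma_1^2(v_1^2+o_1^2)}$; in expectation the $d_1$ cross-term vanishes by the $o_1\mapsto-o_1$ symmetry and the remaining nonnegative term is dropped, which immediately gives $\frac{\sigma_1^2}{N}(\alpha^2+\beta^2)$. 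You instead pass by Cauchy interlacing (and PSD monotonicity) to the exact top eigenvalue of the $(w_1,v_1)$ principal $2\times2$ block and then must control $\E[\sqrt S]$, which forces the extra convexity-in-$d_1$ plus evenness argument. Both are valid and land on the same constant; the paper's route is shorter because the awkward square root never appears, while yours starts from a pointwise bound (the exact $2\times2$ eigenvalue) that is tighter before taking expectations and makes explicit why naive Jensen on $\E[H]$ would not suffice. One cosmetic remark: a convex even function is minimized at the origin without invoking $\phi'(0)=0$, which sidesteps any differentiability worry at realizations where $S=0$.
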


\cref{thm:Ab-Init-Sharpness} provides bounds of the sharpness at initialization under the \cref{asm:ab-init}. We can easily observe that both the lower and upper bounds of $\E[S(\theta^{(0)})]$ increase with $\alpha^2$ and $\beta^2$, meaning that the larger variance of the initialization scheme results in sharper initialization. 

\vspace{6mm}

\begin{theorem}[Sharpness Bound at Convergence with $\alpha \beta$ Initialization]
\label{thm:Ab-Converge-Sharpness}
Under the \cref{asm:GFsol} and \cref{asm:ab-init}, for a two-layer minimalist model trained on a dataset $(X,y)$ with difficulty $Q$ and gradient flow, the expected sharpness at convergence $\theta^\star$ is lower bounded by:
\begin{align*}
\E[S(\theta^\star)] \geq \tfrac{1}{2N}  \Bigl[&\left(\sigma_1^2 + \tfrac{d_1^2}{Q} \right) \sqrt{\left(\E [C(\theta^\star)]\right)^2 + 4Q} \\ 
&+ \left( \tfrac{d_1^2}{Q} - \sigma_1^2 \right) \E [C(\theta^\star)] \Bigr]
\end{align*}
and upper bounded by:
\begin{align*}
\E[S(\theta^\star)] \leq & \tfrac{1}{2N} \Bigl[ \left(\sigma_1^2 + \tfrac{\sum_{i=1}^rd_i^2}{Q} \right) \\ 
& \sqrt{ \left(\E [C(\theta^\star)]\right)^2 + 2r\alpha^4 + 2\beta^4 + 4Q} \\
& + \left( \tfrac{\sum_{i=1}^rd_i^2}{Q} - \sigma_1^2 \right) \E [C(\theta^\star)] \Bigr]\,,
\end{align*}
where $\E [C(\theta^\star)] = \E [C(\theta^{(0)})] = r\alpha^2 - \beta^2$
\vspace{-1mm}
\end{theorem}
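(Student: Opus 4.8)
The plan is to combine the deterministic sharpness bounds of \cref{thm:2layer} (transported to convergence via \cref{cor:GF_2layer}) with two applications of Jensen's inequality and the Gaussian moments of the layer imbalance under \cref{asm:ab-init}. First I would record the algebraic identity that follows from $(v_1^\star)^2 = \tfrac12(\sqrt{C^2+4Q}-C)$, where I abbreviate $C := C(\theta^\star)$: multiplying through shows $(v_1^\star)^2 \cdot \tfrac12(\sqrt{C^2+4Q}+C) = Q$, hence $1/(v_1^\star)^2 = \tfrac{1}{2Q}(\sqrt{C^2+4Q}+C)$. Substituting this into the two bounds of \cref{thm:2layer} rewrites them purely in terms of $C$ and $Q$: the lower bound becomes $g(C) := \tfrac{1}{2N}[(\sigma_1^2 + d_1^2/Q)\sqrt{C^2+4Q} + (d_1^2/Q - \sigma_1^2)C]$ and the upper bound becomes $h(C) := \tfrac{1}{2N}[(\sigma_1^2 + (\sum_i d_i^2)/Q)\sqrt{C^2+4Q} + ((\sum_i d_i^2)/Q - \sigma_1^2)C]$. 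By \cref{lemma:gf_conserve} the imbalance is conserved along gradient flow, so $C(\theta^\star) = C(\theta^{(0)})$ as random variables, and the bounds $g(C) \le S(\theta^\star) \le h(C)$ hold for every realization of the initialization for which gradient flow converges to a global minimizer (\cref{asm:GFsol}).

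For the lower bound I would take expectations of $S(\theta^\star) \ge g(C)$ and then push the expectation inside using convexity. The map $C \mapsto \sqrt{C^2 + 4Q}$ is convex and its coefficient $\sigma_1^2 + d_1^2/Q$ is nonnegative, while the remaining term in $g$ is linear; hence $g$ is convex and Jensen gives $\E[S(\theta^\star)] \ge \E[g(C)] \ge g(\E[C])$. It then remains to evaluate $\E[C]$. Since $u^{(0)} \sim \mathcal N(0,\alpha^2 I_d)$, its projection onto the $r$-dimensional subspace $W$ is isotropic Gaussian on $W$, so $\|\Pi_W u^{(0)}\|^2 \sim \alpha^2 \chi^2_r$ and $\E[\|\Pi_W u^{(0)}\|^2] = r\alpha^2$; together with $\E[(v_1^{(0)})^2] = \beta^2$ this yields $\E[C] = r\alpha^2 - \beta^2$, and substituting into $g$ produces exactly the stated lower bound.

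For the upper bound the direction of Jensen reverses, which I expect to be the main obstacle. Taking expectations of $S(\theta^\star) \le h(C)$ gives $\E[S(\theta^\star)] \le \E[h(C)]$, but $h$ is convex, so bounding via $h(\E[C])$ would go the wrong way. Instead I would isolate the only nonlinear piece, $\E[\sqrt{C^2+4Q}]$, and bound it from above by concavity of $t \mapsto \sqrt t$: $\E[\sqrt{C^2+4Q}] \le \sqrt{\E[C^2] + 4Q} = \sqrt{(\E[C])^2 + \mathrm{Var}(C) + 4Q}$. The linear term in $h$ is handled exactly by $\E[C]$, and since the coefficient $\sigma_1^2 + (\sum_i d_i^2)/Q$ multiplying the square root is nonnegative, this produces an upper bound of the desired shape. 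The last ingredient is the variance of $C$: because $u^{(0)}$ and $v_1^{(0)}$ are independent, $\mathrm{Var}(C) = \mathrm{Var}(\|\Pi_W u^{(0)}\|^2) + \mathrm{Var}((v_1^{(0)})^2)$; using $\mathrm{Var}(\alpha^2 \chi^2_r) = 2r\alpha^4$ and $\mathrm{Var}(\beta^2\chi^2_1) = 2\beta^4$ gives $\mathrm{Var}(C) = 2r\alpha^4 + 2\beta^4$, which reproduces the term $\sqrt{(\E[C])^2 + 2r\alpha^4 + 2\beta^4 + 4Q}$ in the statement.

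The conceptually delicate point, and the step demanding the most care, is this opposite use of Jensen: convexity of the full envelope $g$ drives the lower bound, whereas for the upper bound one must first peel off the concave square-root factor before applying Jensen and verify that the sign of its coefficient makes the inequality point the right way. The remaining work is the exact $\chi^2$ moment bookkeeping for $\E[C]$ and $\mathrm{Var}(C)$, which is routine once one observes that $\Pi_W u^{(0)}$ is an isotropic Gaussian on the $r$-dimensional space $W$.
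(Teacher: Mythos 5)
Your proposal is correct and follows essentially the same route as the paper's proof: reparameterize the bounds of \cref{thm:2layer} in terms of $C$ and $Q$ via $(v_1^\star)^2=\tfrac12(\sqrt{C^2+4Q}-C)$, invoke conservation of the imbalance to identify $C(\theta^\star)$ with $C(\theta^{(0)})$, apply Jensen in the convex direction for the lower bound and via concavity of $\sqrt{\cdot}$ for the upper bound, and evaluate the Gaussian moments $\E[C]=r\alpha^2-\beta^2$ and $\E[C^2]=(\E[C])^2+2r\alpha^4+2\beta^4$. The only cosmetic difference is that you compute $\mathrm{Var}(C)$ directly from $\chi^2$ variances while the paper expands $\E[C^2]$ term by term; the results coincide.
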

These bounds are analogous to \cref{thm:2layer}. The lower bound of \cref{thm:2layer} can be reparameterized as follows:
\begin{align*}
&\tfrac{1}{N}\left[ \sigma_1^2 (v_1^\star)^2 + \tfrac{d_1^2}{(v_1^\star)^2} \right] \\
&=  \tfrac{1}{2N}\left[\left(\sigma_1^2 + \tfrac{d_1^2}{Q} \right)  \sqrt{C(\theta^\star)^2 + 4Q}  + \left(\tfrac{d_1^2}{Q} - \sigma_1^2 \right)  C(\theta^\star)  \right].
\end{align*}
The only difference between the lower bound of \cref{thm:Ab-Converge-Sharpness} and \cref{thm:2layer} is whether we consider the expectation or a specific value for $C(\theta^\star)$. For the upper bound, reparameterization shows similar results, except for the term $2r\alpha^4 + 2 \beta^4$. Therefore, the same characterization we stated in \cref{rem:sharpness} can also be applied to \cref{thm:Ab-Converge-Sharpness}.

\begin{figure}[t]
    \centering
    % 첫 번째 이미지
    \begin{subfigure}{0.22\textwidth}
        \centering
        \includegraphics[width=\linewidth]{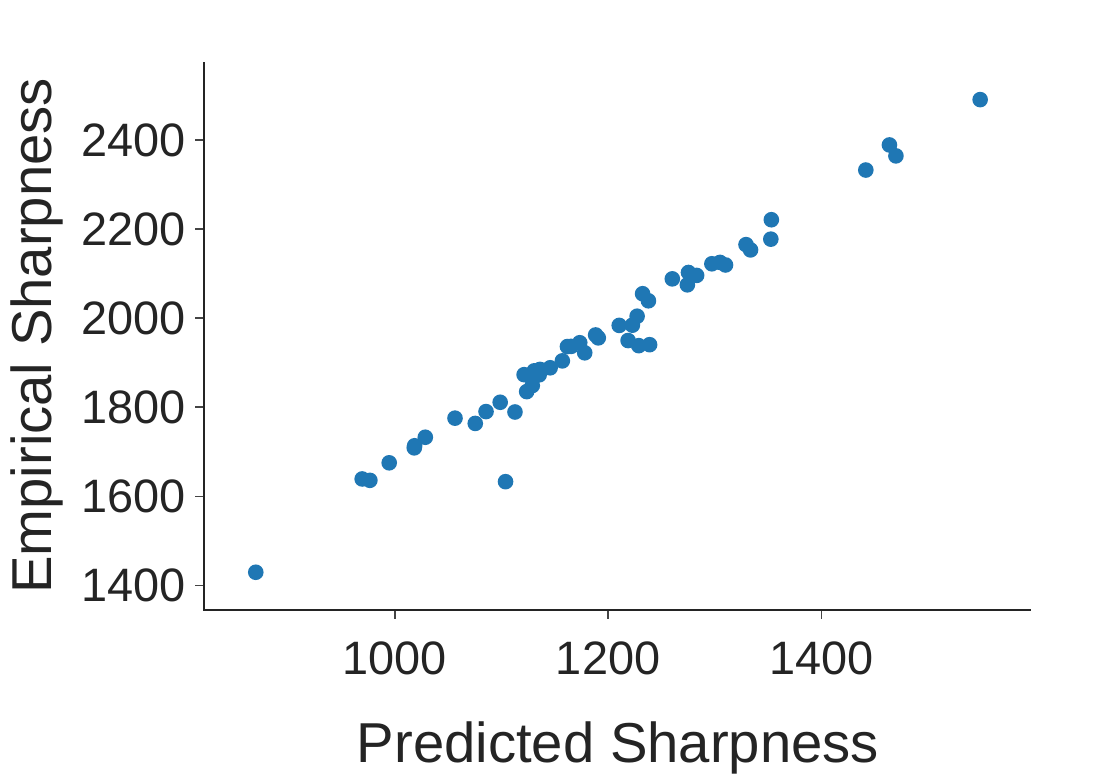}
        \subcaption{identity activation, depth 5, width 2048 in CIFAR 2 label ($N=300$, Correlation 0.99)}
        \label{fig:Correlation_Identity}
    \end{subfigure}
    ~
    \begin{subfigure}{0.22\textwidth}
        \centering
        \includegraphics[width=\linewidth]{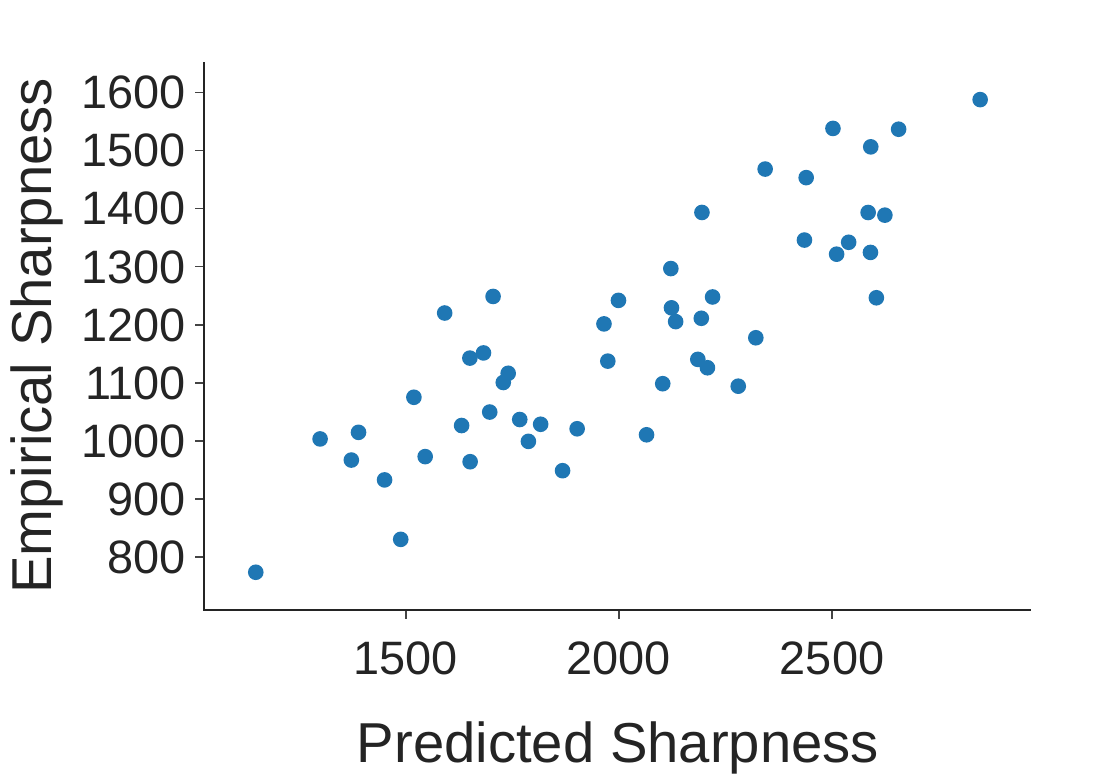}
        \subcaption{tanh activation, depth 4, width 1024 in SVHN 2 label ($N=100$, Correlation 0.81)}
        \label{fig:Correlation_tanh}
    \end{subfigure}
    %\vspace{-3mm}
    \caption{Correlation of $\hat S_D$ vs empirical $S(\theta(\infty))$, for more results, refer to \cref{subsec:correlation_many}.}
    \label{fig:correlation_main}
\end{figure}

\begin{figure}[t]
    %\vspace{-1mm}
    \centering
    \includegraphics[width=0.9\linewidth]{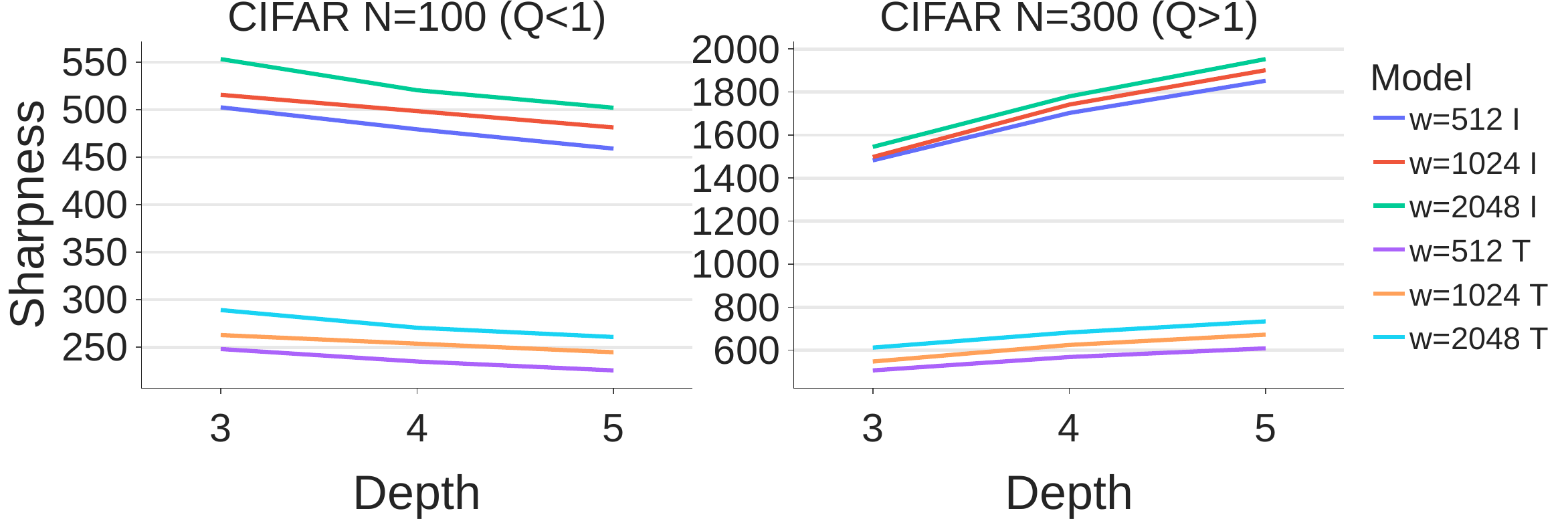}
    %\vspace{-3mm}
    \caption{Depth vs Sharpness plot. ``I'' denotes identity activation, ``T'' denotes $\tanh$ activation, and ``w'' means width.}
    %\vspace{-4mm}
    \label{fig:depth_vs_sharpness}
\end{figure}

\noindent\textbf{Experiments.~~} \label{par:sec5-exp}
To verify if our theory provides a good prediction of the post-training sharpness, we numerically calculate the terms that appear in \cref{thm:arbitrary_layer}.
We used 2-label (cat vs dog) subset of CIFAR10 ($d = 3072$) in \cref{tab:sharpness_proxy} and \cref{tab:sharpness_other_term}. We randomly selected 50 datasets of size $N$, while ensuring a balanced distribution between labels. For each dataset, we computed the quantities and report the average and standard deviation. We also run the same experiments on SVHN and Google speech commands; see \cref{subsec:SVHN_res}. Based on \cref{tab:sharpness_proxy} and \cref{tab:sharpness_other_term}, we can observe that the term $\frac{\sigma_1^2}{N}Q^{\frac{D-1}{D}}$ dominates both bounds, and the gap between the upper and lower bounds should be orders of magnitude smaller than $\frac{\sigma_1^2}{N}Q^{\frac{D-1}{D}}$. Therefore, we give a name for the quantity:%So we define the following term as sharpness proxy in minimal models:

\begin{definition}[Predicted Sharpness] We define the \emph{predicted sharpness} as the following:
$\hat S_D := \frac{\sigma_1^2}{N} Q^\frac{D-1}{D}$.
\vspace{-2mm}
\end{definition}
For fully-connected networks of varying width, depth, activation, and dataset size, we trained the model 50 times using different random 2-label subsets of size $N$ from CIFAR10, SVHN, and Google speech commands. For each run, GF was randomly initialized using the default Pytorch initialization scheme. All training runs were terminated when $L(\theta(t)) < 10^{-6}$, and we treated the iterate at termination as $\theta(\infty)$.

\cref{fig:correlation_main} shows the scatter plots of final sharpness $S(\theta(\infty))$ vs our predicted sharpness $\hat S_D$ on a 5-layer linear network of width 2048 trained on CIFAR10, and $\tanh$-activated 4-layer network of width 1024 trained on SVHN. We emphasize that the predicted sharpness provides a reasonable estimate of the post-training sharpness, even though the experiment settings were different from our theory in a number of ways\footnote{Due to this reason, the scale of empirical sharpness and predicted sharpness in \cref{fig:correlation_main} doesn't match exactly.}: width was not fixed to 1, activation was nonlinear, and initialization was not balanced. 

In \cref{fig:depth_vs_sharpness}, we show that our theory captures the effect of dataset difficulty $Q$ and depth $D$. As expected from predicted sharpness, we observe different effects of depth $D$ on $S(\theta(\infty))$, depending on $Q > 1$ (when $N=300$), and $Q<1$ (when $N=100$). 
More detailed results are deferred to \cref{tab:exp_GF} and \cref{tab:exp_GF_svhn} in \cref{subsec:correlation_many}.

We detail experiments for Theorem~\ref{thm:Ab-Init-Sharpness} and \ref{thm:Ab-Converge-Sharpness} in \cref{subsec:init-asm-sharpness}.

%\vspace{-1mm}
\begin{figure*}[ht]
    %\raggedleft
    % 첫 번째 이미지
    \centering
    \includegraphics[width=0.85\textwidth]{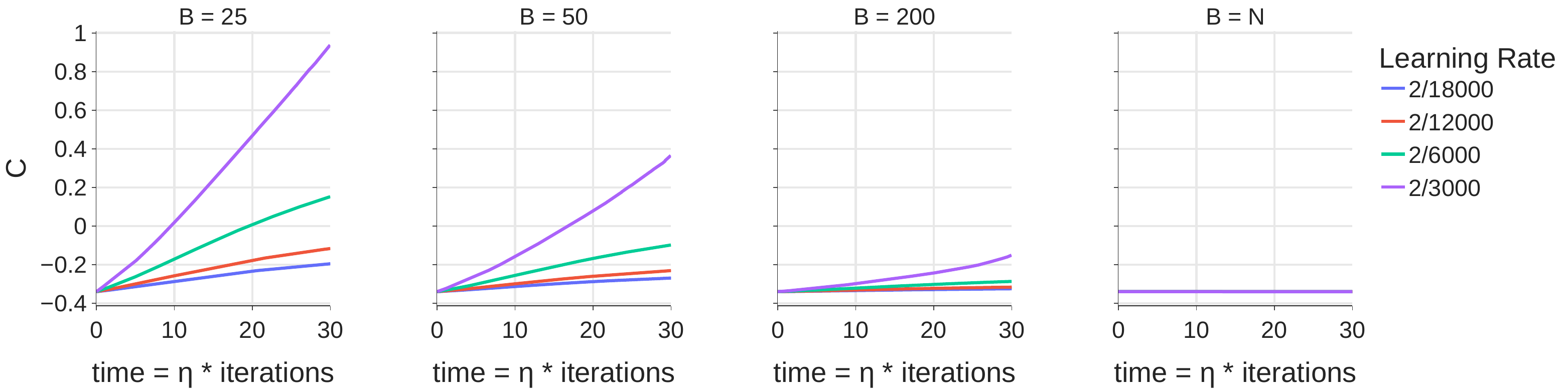} % 이미지 파일 경로
    %\vspace{-5pt}
    \caption{Effect of batch size and learning rate in minimal models in the dynamics of layer imbalance $C(\theta^{(t)})$, in $D=2$ \& $N=1000$.}
    \label{fig:PS_min_C}
    %\vspace{-4mm}
\end{figure*}
\subsection{Gradient Descent and Stochastic Gradient Descent}
\label{sec:optimizer_sgd}

For GD and SGD, we specifically focus on the case of $D=2$ and address the last two parts of \cref{phe:PS}: how batch size and step size affect progressive sharpening. 

By \cref{thm:2layer}, knowing $C(\theta^\star)$ at convergence determines the bound of sharpness. Therefore, we focus on the \emph{change} of $C$ after an update of GD and SGD. Starting at the same point $\theta$, let $\theta_{\text{GD}}^{+}$ and $\theta_{\text{SGD}}^{+}$ be the parameters after one step of GD and SGD, respectively. 
In the theorem below, we show that a step of SGD incurs a greater increase of $C$ compared to GD. Together with \cref{rem:sharpness}, \cref{thm:2layer_sgd_C_full} sheds light on why SGD induces less progressive sharpening.

\begin{theorem}[Increase of Layer Imbalance]
\label{thm:2layer_sgd_C_full} For the minimalist model~\eqref{eq:model} with $D=2$, the following holds for the change of $C$ for an update of GD and SGD:
\begin{align*}
    C (\theta_{\text{GD}}^{+})-C(\theta) 
    &= \tfrac{\eta^2}{N^2}[-\Psi_1(\theta) C(\theta) + \Omega_1(\theta)]\\
    \mathbb{E} [C (\theta_{\text{SGD}}^{+})] - C(\theta^+_\text{GD}) &= \tfrac{\eta^2 (N-B)}{BN^2(N-1)}[-(\Psi_2(\theta) - \Psi_1(\theta))C(\theta)\\
    &+ (\Omega_2(\theta) - \Omega_1(\theta))] \,,
\end{align*}
where
\begin{align*}
\Psi_1(\theta)\! &:= \textstyle\sum\nolimits_{i=1}^r \sigma_i^2 (z(\theta)^\top e_i)^2, \\  
\Psi_2(\theta)\! &:= N \textstyle\sum_{i=1}^r \sigma_i^2 \|z(\theta) \odot e_i\|^2,\\
\Omega_1(\theta)\!&:=\! \textstyle\sum_{i} \sum_{j>i}\! \left[ \sigma_i (z(\theta)^\top  e_i) o_j\! -\! \sigma_j (z(\theta)^\top  e_j) o_i \right ]^2\!, \\
\Omega_2(\theta)\! &:= N \textstyle\sum_i \sum_{j>i} \lVert \sigma_i(z(\theta) \odot e_i) o_j - \sigma_j(z(\theta) \odot e_j) o_i \rVert^2.
\end{align*}
Also, $\Psi_2(\theta) \geq \Psi_1 (\theta) \geq 0$ and $\Omega_2(\theta) \geq \Omega_1(\theta) \geq 0$.
\end{theorem}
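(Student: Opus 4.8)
The plan is to compute the change in $C$ directly from the update rules and exploit the fact that every term linear in $\eta$ cancels—this is the discrete analog of the gradient-flow conservation in \cref{lemma:gf_conserve}—so that both $C(\theta_{\text{GD}}^{+})-C(\theta)$ and $\E[C(\theta_{\text{SGD}}^{+})]-C(\theta)$ are purely $O(\eta^2)$ with an explicit closed form. First I would write the residual $g:=Xuv_1-y$ and note that $\nabla_u L = \tfrac{v_1}{N}X^\top g$ lies in $W=\rowsp(X)$, so $\Pi_W u$ updates within $W$ and $X\Pi_W u = Xu$. Expanding $\|\Pi_W u^{+}\|^2-(v_1^{+})^2$ for one GD step, the two linear terms are both $-\tfrac{2\eta v_1}{N}(Xu)^\top g$ and cancel, leaving $C(\theta_{\text{GD}}^{+})-C(\theta)=\tfrac{\eta^2}{N^2}\big[v_1^2\|X^\top g\|^2-((Xu)^\top g)^2\big]$. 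Substituting the SVD, $Xu=\sum_i\sigma_i o_i e_i$ and expanding $g$ in the left-singular basis gives $\|X^\top g\|^2=\Psi_1$ and $((Xu)^\top g)^2=R$; the Lagrange identity $(\sum_i o_i^2)\Psi_1-R=\Omega_1$ together with $v_1^2=\sum_i o_i^2-C$ then yields the stated $\tfrac{\eta^2}{N^2}[-\Psi_1 C+\Omega_1]$.

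For SGD the same cancellation gives $C(\theta_{\text{SGD}}^{+})-C(\theta)=\tfrac{\eta^2}{B^2}\big[v_1^2\|X_{\mathcal B}^\top g_{\mathcal B}\|^2-((X_{\mathcal B}u)^\top g_{\mathcal B})^2\big]$, where I would write the minibatch quantities as sums over $n\in\mathcal B$ of $\xi_n:=g_n x_n$ and $\zeta_n:=(x_n^\top u)g_n$. Taking the expectation over a size-$B$ subset drawn without replacement uses only $\Pr[n\in\mathcal B]=B/N$ and $\Pr[m,n\in\mathcal B]=B(B-1)/(N(N-1))$; separating diagonal from cross terms (with $\sum_{m\neq n}\xi_m^\top\xi_n=\|X^\top g\|^2-\sum_n\|\xi_n\|^2$, and analogously for $\zeta$) and subtracting the GD expression, the elementary coefficient algebra collapses to the finite-population factor $\tfrac{N-B}{BN^2(N-1)}$, multiplying the bracket $v_1^2(\Psi_2-\Psi_1)-\big(N\sum_n\zeta_n^2-R\big)$. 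Here I use the identification $\sum_n\|\xi_n\|^2=\sum_n g_n^2\|x_n\|^2=\Psi_2/N$, which follows from $\sum_i\sigma_i^2(e_i)_n^2=(XX^\top)_{nn}=\|x_n\|^2$; the vanishing prefactor at $B=N$ is a useful sanity check.

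The main work, and the step I expect to be most delicate, is showing this bracket equals $-(\Psi_2-\Psi_1)C+(\Omega_2-\Omega_1)$. This needs a second, per-sample Lagrange identity: writing $x_n=\sum_i\sigma_i(e_i)_n w_i$, one has $\|x_n\|^2\sum_i o_i^2-(x_n^\top u)^2=\sum_{i<j}(\sigma_i(e_i)_n o_j-\sigma_j(e_j)_n o_i)^2$, and multiplying by $g_n^2$ and summing over $n$ gives $(\sum_i o_i^2)\Psi_2-N\sum_n\zeta_n^2=\Omega_2$. Combining this with the earlier $(\sum_i o_i^2)\Psi_1-R=\Omega_1$ and $v_1^2=\sum_i o_i^2-C$, the $\sum_i o_i^2$ contributions telescope and the bracket reduces exactly as claimed; the bookkeeping of matching the two Lagrange identities to the correct $\Psi,\Omega$ pair is where care is needed.

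Finally, the orderings $\Psi_2\ge\Psi_1\ge0$ and $\Omega_2\ge\Omega_1\ge0$ follow from Cauchy--Schwarz against the all-ones vector: $\Psi_1=\|\sum_n\xi_n\|^2\le N\sum_n\|\xi_n\|^2=\Psi_2$, and applying the same bound termwise with $c_n:=\sigma_i(e_i)_n o_j-\sigma_j(e_j)_n o_i$ gives $(\sum_n g_n c_n)^2\le N\sum_n g_n^2 c_n^2$, hence $\Omega_1\le\Omega_2$ after summing over $i<j$; nonnegativity is immediate from the sum-of-squares forms.
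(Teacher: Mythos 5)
Your proposal is correct and follows essentially the same route as the paper's proof: cancellation of the first-order terms so that only the squared updates survive, the Lagrange identity to rewrite $v_1^2\Psi-(\cdot)^2$ as $-\Psi C+\Omega$, pairwise inclusion probabilities for without-replacement sampling (the paper packages these as $\E[pp^\top]=\tfrac{B(N-B)}{N(N-1)}I+\tfrac{B(B-1)}{N(N-1)}\vone\vone^\top$), and Cauchy--Schwarz for $\Psi_2\ge\Psi_1$, $\Omega_2\ge\Omega_1$. The only difference is ordering: the paper applies a single Lagrange identity to the generic $P$-dependent expression before taking expectations, whereas you take expectations first and then invoke an aggregate and a per-sample Lagrange identity; the algebraic content is the same.
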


The symbol $\odot$ in the theorem statement denotes the element-wise product. \cref{thm:2layer_sgd_C_full} shows that whenever $C(\theta) \leq \frac{\Omega_1(\theta)}{\Psi_1(\theta)}=:T_1(\theta)$, GD is guaranteed to increase $C$ and SGD increases $C$ even more whenever $C(\theta) \leq \frac{\Omega_2(\theta) - \Omega_1(\theta)}{\Psi_2(\theta) - \Psi_1(\theta)} =:T_2(\theta)$. 
In \cref{fig:PS_min_C}, we visualize how $C(\theta^{(t)})$ evolves in our minimal models with a 2-label subset of CIFAR10 dataset. The plots are obtained from the same runs as \cref{fig:PS_min_batch_size}. First, we observe that $C(\theta^{(t)})$ keeps increasing.
Also in Figure~\ref{fig:SGD_minimal_T1} and \ref{fig:SGD_minimal_T2}, we observe that $C(\theta^{(t)}) \leq T_1(\theta^{(t)})$ and $C(\theta^{(t)}) \leq T_2(\theta^{(t)})$ hold across all settings. 

There are a few important implications of the theorem. Indeed, this theorem does not fully prove that SGD has larger value of $C(\theta^\star)$ at convergence and hence smaller final sharpness $S(\theta^\star)$ (\cref{rem:sharpness}) than GD. Nevertheless, the theorem offers useful insights on how the batch size $B$ and step size $\eta$ affects the degree of progressive sharpening. 

Note that $\E[C(\theta^+_\text{SGD})] - C(\theta^+_\text{GD})$ is more pronounced for smaller batch size $B$, highlighting the role of stochasticity. Smaller $B$ results on greater increase of $C$ in SGD, resulting in less progressive sharpening. Larger step size $\eta$ also amplifies this mechanism and leads to even smaller increase of sharpness.
In Figure~\ref{fig:SGD_minimal_Psi1}–\ref{fig:SGD_minimal_Omega2}, we numerically calculate $\Psi_1(\theta^{(t)})$, $\Psi_2(\theta^{(t)})$, $\Omega_1(\theta^{(t)})$ and $\Omega_2(\theta^{(t)})$ values of GD and SGD, which offers helpful insights on how to interpret \cref{thm:2layer_sgd_C_full} and \cref{fig:PS_min_C}.
As for GD, although the increase of $C$ is proportional to $\eta^2$, its $\Psi_1(\theta^{(t)})$ and $\Omega_1(\theta^{(t)})$ are relatively small, so combined with the factors $\frac{\eta^2}{N^2}$ the increase becomes very small and largely unaffected by the step size. 
For SGD, $\Psi_2(\theta^{(t)})$ is far larger than $\Psi_1(\theta^{(t)})$, and likewise $\Omega_2(\theta^{(t)})$ greatly exceeds $\Omega_1(\theta^{(t)})$. Consequently, $C$ increases much more noticeably when $B$ is small and $\eta$ is large. This dynamics of $C$ nicely correlates with the degree of progressive sharpening in \cref{fig:PS_min_batch_size}.

\section{Conclusion}
\label{sec:conclusion}
%\vspace{-3pt}

Throughout this paper, we studied how problem parameters influence the sharpness dynamics of neural networks. We introduced a minimalist model that reproduces progressive sharpening and edge of stability behavior observed in practice. Theoretically, we derived sharpness bounds at both initialization and convergence as functions of problem parameters. Empirically, we showed these bounds are numerically tight and can predict convergence sharpness under gradient flow. For GD and SGD, we established a theorem that offers insights into how batch size and learning rate influence sharpness dynamics.

As a step toward broader generalization, \Cref{sec:nonlin} presents a preliminary theoretical analysis of networks with nonlinear activations. We believe that further extending the problem setup, such as varying the architecture, width, or optimizer, can help bridge the gap between theoretical understanding and empirical observations of progressive sharpening.

% Acknowledgements should only appear in the accepted version.
\section*{Acknowledgements}

This work was supported by two National Research Foundation of Korea (NRF) grants funded by the Korean government (MSIT) (No.\ RS-2023-00211352; No.\ RS-2024-00421203).

\section*{Impact Statement}

This paper presents work whose goal is to advance the field of  Machine Learning. There are many potential societal consequences of our work, none which we feel must be specifically highlighted here.

% In the unusual situation where you want a paper to appear in the
% references without citing it in the main text, use \nocite
% \nocite{langley00}

\bibliography{ref}
\bibliographystyle{icml2025}

%%%%%%%%%%%%%%%%%%%%%%%%%%%%%%%%%%%%%%%%%%%%%%%%%%%%%%%%%%%%%%%%%%%%%%%%%%%%%%%
%%%%%%%%%%%%%%%%%%%%%%%%%%%%%%%%%%%%%%%%%%%%%%%%%%%%%%%%%%%%%%%%%%%%%%%%%%%%%%%
% APPENDIX
%%%%%%%%%%%%%%%%%%%%%%%%%%%%%%%%%%%%%%%%%%%%%%%%%%%%%%%%%%%%%%%%%%%%%%%%%%%%%%%
%%%%%%%%%%%%%%%%%%%%%%%%%%%%%%%%%%%%%%%%%%%%%%%%%%%%%%%%%%%%%%%%%%%%%%%%%%%%%%%
\newpage
\appendix
\onecolumn

\section{More Related Works}
\label{sec:more-related-works}

\paragraph{Edge of Stability.~~} The edge of stability regime, where sharpness stabilizes near $2/\eta$, has been extensively studied in recent years~\citep{ahn2022understanding,arora2022understanding,damian2023selfstabilization,wu2023implicit}. \citet{damian2023selfstabilization} propose the self-stabilization mechanism, attributing this phenomenon to a negative feedback loop arising from the third-order term in the loss's Taylor expansion. Recent theoretical works have further analyzed training dynamics under simplified models. For instance, \citet{ahn2023learning} study a loss function of the form $(x,y)\mapsto \ell(xy)$, and \citet{song2023trajectory} extend these results to 2-layer linear networks. Similarly, \citet{zhu2023understanding} characterize the edge of stability in a 4-layer scalar network, and \citet{kreisler2023gradient} generalize this analysis to deep scalar networks. 

Specifically, \citet{kreisler2023gradient} consider a scalar linear network with loss $\mathcal{L}(\bf w)$, for depth $D\in \mathbb{N}$ and weights $\bf w \in \R^D$. Their Theorem 3.2 shows that gradient descent does not increase the sharpness of the gradient flow solution initialized at GD iterates (referred to as GFS sharpness). Similarly, our \cref{thm:2layer_sgd_C_full} shows that $C(\theta)$ increases over time when mild conditions on $C(\theta)$ are satisfied. Together with our \cref{rem:sharpness}, these results imply that GFS sharpness decreases as training progresses under GD/SGD in our minimalist model.

While these works provide valuable insights, they focus on training with a single data point, limiting their applicability to more general settings. In contrast, our minimalist model considers general training data, enabling us to capture how dataset properties influence sharpness dynamics.

\paragraph{Linear Diagonal Neural Networks.~~} Our minimalist model shares similarities with diagonal linear networks in a sparse regression setting. \citet{pesme2021implicit} show that SGD leads to solutions with better generalization than GD. Similarly, our \Cref{thm:2layer_sgd_C_full} shows that SGD induces less progressive sharpening than GD, leading to lower sharpness at convergence. Considering that lower sharpness correlates with improved generalization in diagonal linear networks \citep{pmlr-v162-nacson22a}, they both unveil how stochasticity can help generalization.

\paragraph{Connection between sharpness and generalization.~~} SAM \citep{foret2021sharpnessaware} introduce the hypothesis that minimizing sharpness improves generalization, and its benefit is demonstrated in practical training. Moreover, GD with a large learning rate is shown to implicitly find flatter solutions \citep{cohen2021gradient}, which often generalize better than those obtained with small learning rates \citep{NEURIPS2019_bce9abf2}. 
While these works suggest a correlation between sharpness and generalization, \citet{pmlr-v202-andriushchenko23a} show that this relationship is data-dependent.

\paragraph{Potential practical implications on learning rate scheduling.~~}  The study by \citet{zhu2024catapults} highlights that the catapult mechanism contributes positively to model generalization, and catapults can be induced by designing a proper learning rate schedule. In light of this, predicting sharpness evolution can offer practical value when designing such schedulers.

\clearpage

\section{Technical Details}

In this section, we present proofs of the main theorems and derivations of key formulas. For simplicity, we use the following notation: for a function $Z$ mapping model parameters to a scalar, vector, or matrix, we write $Z(t) := Z(\theta(t))$ for gradient flow (GF) and $Z^{(t)} := Z(\theta^{(t)})$ for gradient descent (GD) and stochastic gradient descent (SGD).

\subsection{Reparameterization of the Minimalist Model}
\label{subsec:Derive_minimal}

In this subsection, we reparameterize the gradient flow (GF), gradient descent (GD), and stochastic gradient descent (SGD) dynamics for the minimalist model~\eqref{eq:model} in terms of $\sigma_i$, $d_i$, $o_i$ for $i \in [r]$, and $v_1, \ldots, v_{D-1}$, as introduced in Section~\ref{sec:minimizer_sharpness}. This reparameterization serves as the foundation for subsequent theoretical proofs.

\paragraph{Network output and residual.~~} We can decompose network output $f(X;\theta)$ into $e_1,\ldots,e_r$ components by
\begin{align}
    f(X;\theta) = \sum_{i=1}^r (e_i^\top f(X;\theta))e_i = \sum_{i=1}^r \left(e_i^\top Xu\prod_{j=1}^{D-1}v_j\right)e_i = \sum_{i=1}^r \left(\sigma_i o_i \prod_{j=1}^{D-1}v_j \right)e_i\,,\label{eq:output}
\end{align}
where the last equality is obtained by replacing $X$ with $\sum_{i=1}^r\sigma_i(e_iw_i^\top)$. Similarly, we can decompose residual $z(\theta)$ by
\begin{align}
    z(\theta) = f(X;\theta) - y = \sum_{i=1}^r \left( \sigma_i o_i \prod_{j=1}^{D-1}v_j - d_i \right) e_i\,.\label{eq:residual}
\end{align}
Hence, network output~\eqref{eq:output} and residual~\eqref{eq:residual} can be reparameterized in terms of $\sigma_i$, $d_i$, $o_i$ for $i \in [r]$, and $v_1, \ldots, v_{D-1}$.

\paragraph{Gradient Flow.~~} Recall that the loss at $\theta$ is
$L(\theta) = \frac{1}{2N} \|f(X;\theta)-y\|^2=\frac{1}{2N}\|z(\theta)\|^2$. The GF dynamics is given by
\begin{align}
    \dot{u}(t) &= - \frac{\partial L}{\partial u} (t) = - \frac{1}{N} X^\top z(t)\prod_{j=1}^{D-1}v_j(t)\,,\label{eq:GF_u}\\
    \dot{v}_j(t) &= - \frac{\partial L}{\partial v} (t) = - \frac{1}{N} z(t)^\top X u(t)\prod_{q\ne j}v_q(t)\,,\quad \forall j\in [D-1]\,.\label{eq:GF_v}
\end{align}
We can replace $X$ with $\sum_{i=1}^r\sigma_i(e_iw_i^\top)$ in \eqref{eq:GF_v} and obtain
\begin{align*}
    \dot{v}_j(t) = - \frac{1}{N} \sum_{i=1}^{r} \sigma_i (z(t)^\top e_i)(w_i^\top u(t))\prod_{q\ne j}v_q(t)
    = - \frac{1}{N} \sum_{i=1}^{r} \sigma_i (z(t)^\top e_i)o_i(t)\prod_{q\ne j}v_q(t)\,,
\end{align*}
for each $j\in [D-1]$. Hence, we have
\begin{align}
    \dot{v}_j(t) = - \frac{1}{N} \sum_{i=1}^{r} \sigma_i \left(e_i^\top z(t)\right) o_i(t)\prod_{q\ne j}v_q(t)\,. \label{eq:GF_v_reparam}
\end{align}
Similarly, inner product with $w_i$ to both hand sides of \eqref{eq:GF_u} and replacing $X$ with $\sum_{i=1}^r\sigma_i(e_iw_i^\top)$ gives
\begin{align}
    \dot{o}_i(t) = w_i^\top \dot{u}(t) = -\frac{1}{N} \sigma_i \left(e_i^\top z(t)\right) \prod_{j=1}^{D-1}v_j(t)\,,\quad\forall i\in [r]. \label{eq:GF_o_reparam}
\end{align}
Therefore, \eqref{eq:GF_v_reparam} and \eqref{eq:GF_o_reparam} together give the reparameterization of the GF dyanmics.

\paragraph{Gradient Descent.~~}
Similar to the GF dynamics, the GD dynamics with step size $\eta$ can be reparameterized as
\begin{align}
    v_j^{(t+1)} &= v_j^{(t)}- \frac{\eta}{N} \sum_{i=1}^{r} \sigma_i \left(e_i^\top z^{(t)}\right)o_i^{(t)}\prod_{q\ne j}v_q^{(t)}\,,\quad \forall j\in [D-1]. \label{eq:GD_v_reparam}
\end{align}
and
\begin{align}
    o_i^{(t+1)} = o_i^{(t)} -\frac{\eta}{N} \sigma_i \left(e_i^\top z^{(t)}\right)\prod_{j=1}^{D-1}v_j^{(t)}\,,\quad\forall i\in [r]. \label{eq:GD_o_reparam}
\end{align}

\paragraph{Mini-batch Stochastic Gradient Descent.~~}

Recall that the mini-batch loss $\tilde{L}(\theta; P^{(t)})$ at step $t$ is given by
\begin{align*}
\tilde{L}(\theta; P^{(t)}) 
\!=\! \frac{1}{2B} \big \|P^{(t)}(f(X;\theta)-y) \big \|^2 
\!=\! \frac{1}{2B} \big \|P^{(t)}z(\theta) \big \|^2 = \frac{1}{2B} z(\theta)^\top P^{(t)}z(\theta)\,,
\end{align*}
where $P^{(t)} \in \mathbb{R}^{N\times N}$ is an independently sampled random diagonal matrix with exactly $B$ diagonal entries chosen uniformly at random and set to $1$ and the rest set to $0$.

The update rule of SGD is given by
\begin{align}
    u^{(t+1)} &= u^{(t)} - \frac{\eta}{B} X^\top P^{(t)}z^{(t)}\prod_{j=1}^{D-1}v_j^{(t)}\,,\label{eq:sgd_u}\\
    v_j^{(t+1)} &= v_j^{(t)} - \frac{\eta}{B} (z^{(t)})^\top P^{(t)}Xu^{(t)}\prod_{q\ne j} v_q^{(t)}\,,\quad \forall j\in [D-1]\,.\label{eq:sgd_v}
\end{align}
We can replace $X$ with $\sum_{i=1}^r \sigma_i(e_i w_i^\top)$ and rewrite \eqref{eq:sgd_v} as
\begin{align}
v_j^{(t+1)} = v_j^{(t)} - \frac{\eta}{B} \sum_{i=1}^r \sigma_i\left(e_i^\top P^{(t)} z^{(t)}\right) o_i^{(t)} \prod_{q\neq j} v_q^{(t)}\,\quad\forall j\in [D-1]\,, \label{eq:sgd_v_reparam}
\end{align}
and similarly rewrite \eqref{eq:sgd_u} as
\begin{align}
o_i^{(t+1)} = o_i^{(t)} - \frac{\eta}{B} \sigma_i\left(e_i^\top P^{(t)} z^{(t)}\right) \prod_{j=1}^{D-1} v_j^{(t)}\,\quad\forall i\in[r]\,. \label{eq:sgd_o_reparam}
\end{align}
Note that GD is a special case of SGD when $B=N$ and $P^{(t)}=I$, where $I$ is an $N$-by-$N$ identity matrix.

\subsection{Proof of \cref{thm:2layer}}\label{subsec:proof_2layer}

Let $\theta^\star = (u^\star,v_1^\star)$ be a global minimizer of $L(\theta)$ for a two-layer minimalist model~\eqref{eq:model} trained on a dataset $(X,y)$ with difficulty $Q$. We denote $o_i^\star = w_i^\top u^\star$ for each $i\in[r]$. Since $L(\theta^\star) = \frac{1}{2N}\|z(\theta^\star)\|^2$, the residual $z(\theta^\star)$ is a zero vector. Combining with \eqref{eq:residual}, we have
\begin{align*}
    e_i^\top z(\theta^\star) = \sigma_io_i^\star v_1^\star - d_i = 0\,,\quad\forall i\in[r]\,.
\end{align*}
Moreover, we have
\begin{align*}
    C(\theta^\star) = \left( \sum_{i=1}^r (o_i^\star)^2 \right) - \left(v_1^\star\right)^2\,,
\end{align*}
by the definition of the layer imbalance. Substituting $o_i^\star=\frac{d_i}{\sigma_i v_1^\star}$ gives
\begin{align*}
    C(\theta^\star) = \left( \sum_{i=1}^r \frac{d_i^2}{\sigma_i^2 (v_1^\star)^2} \right) - \left(v_1^\star\right)^2 = \frac{Q}{\left(v_1^\star\right)^2} - \left(v_1^\star\right)^2\,,
\end{align*}
which can be rewritten as a quadratic equation in $(v_1^\star)^2$:
\begin{align*}
    ((v_1^\star)^2)^2 + C(\theta^\star)(v_1^\star)^2 - Q = 0\,.
\end{align*}
This quadratic equation has a unique positive solution, given by
\begin{align}
    (v_1^\star)^2 = \frac{-C(\theta^\star) + \sqrt{C(\theta^\star)^2 + 4Q}}{2}\,.\label{eq:v_minimum}
\end{align}
It is worth noting that, for a given dataset \emph{difficulty} and \emph{layer imbalance} at a global minimum $\theta^\star=(u^\star, v_1^\star)$, the second-layer weight $v_1^\star$ and $o_i^\star$ for each $i\in [r]$ are \emph{uniquely} determined up to sign, as given by \eqref{eq:v_minimum} and $o_i^\star = \frac{d_i}{\sigma_i v_1^\star}$.

Based on these facts, we will bound the sharpness $S(\theta^\star)$. First, note that the loss Hessian matrix exactly matches with the (normalized) Gauss-Newton (GN) matrix at a global minimum:
\begin{align*}
    \nabla^2 L(\theta^\star) = \frac{1}{N}J(\theta^\star)^\top J(\theta^\star)\,,
\end{align*}
where $J(\theta^\star) = \frac{\partial f}{\partial \theta}(\theta^\star) = \begin{bmatrix}Xv_1^\star & Xu^\star\end{bmatrix}\in \mathbb{R}^{N\times p}$ is a Jacobian matrix of the minimalist model. Hence, the sharpness at $\theta^\star$ matches (up to scaling) the spectral norm of the NTK matrix $J(\theta^\star) J(\theta^\star)^\top$~\citep{jacot2018neural}:
\begin{align*}
    S(\theta^\star) = \lambda_{\max}(\nabla^2L(\theta^\star)) = \frac{1}{N}\|J(\theta^\star)^\top J(\theta^\star)\|_2 = \frac{1}{N}\|J(\theta^\star) J(\theta^\star)^\top\|_2
\end{align*}
The NTK matrix can be written as
\begin{align*}
    J(\theta^\star) J(\theta^\star)^\top = XX^\top (v_1^\star)^2 + Xu^\star(u^\star)^\top X^\top = (v_1^\star)^2\sum_{i=1}^r\sigma_i^2 e_ie_i^\top + \sum_{i_1,i_2=1}^r \sigma_{i_1}\sigma_{i_2}o_{i_1}^\star o_{i_2}^\star e_{i_1}e_{i_2}^\top\,.
\end{align*}
Hence, the NTK matrix at $\theta^\star$ is \emph{uniquely} determined by the dataset and $C(\theta^\star)$ as
\begin{align}
    J(\theta^\star) J(\theta^\star)^\top = \left(\frac{-C(\theta^\star) + \sqrt{C(\theta^\star)^2 + 4Q}}{2}\right) \sum_{i=1}^r\sigma_i^2 e_ie_i^\top + \left(\frac{2}{-C(\theta^\star) + \sqrt{C(\theta^\star)^2 + 4Q}}\right)\sum_{i_1,i_2=1}^r d_{i_1}d_{i_2} e_{i_1}e_{i_2}^\top\,.\label{eq:ntk}
\end{align}
Based on \eqref{eq:ntk}, we can lower bound the spectral norm of the NTK matrix by
\begin{align*}
    \|J(\theta^\star) J(\theta^\star)^\top\|_2 \ge e_1^\top J(\theta^\star)J(\theta^\star)^\top e_1 = \sigma_1^2 (v_1^\star)^2 + \frac{d_1^2}{(v_1^\star)^2}\,,
\end{align*}
and upper bound by
\begin{align*}
     \|J(\theta^\star) J(\theta^\star)^\top\|_2 \le \|XX^\top (v_1^\star)^2\|_2 + \|Xu^\star(u^\star)^\top X^\top\|_2 \le \sigma_1^2(v_1^\star)^2 + \|Xu^\star\|_2^2 = \sigma_1^2(v_1^\star)^2  + \frac{\sum_{i=1}^r d_i^2}{(v^\star)^2}\,.
\end{align*}
Therefore, we can obtain the desired bound on the sharpness:
\begin{align*}
    \frac{1}{N}\left[\sigma_1^2 (v_1^\star)^2 + \frac{d_1^2}{(v_1^\star)^2}\right] \le S(\theta^\star) \le \frac{1}{N} \left[\sigma_1^2(v_1^\star)^2  + \frac{\sum_{i=1}^r d_i^2}{(v_1^\star)^2}\right]\,.
\end{align*}

\subsection{Proof of \cref{thm:arbitrary_layer}}
The proof of \cref{thm:arbitrary_layer} is analogous to the proof of \cref{thm:2layer} presented in \cref{subsec:proof_2layer}.
Let $\theta^\star = (u^\star,v_1^\star,\ldots,v_{D-1}^\star)$ be a global minimizer of $L(\theta)$ for a minimalist model~\eqref{eq:model} of depth $D$ trained on a dataset $(X,y)$ with difficulty $Q$. We assume that $\theta^\star$ is balanced, i.e., $\|\Pi_Wu\|=|v_1|=\cdots=|v_{D-1}|$.
We denote $o_i^\star = w_i^\top u^\star$ for each $i\in[r]$. Since $L(\theta^\star) = \frac{1}{2N}\|z(\theta^\star)\|^2$, the residual $z(\theta^\star)$ is a zero vector. Combining with \eqref{eq:residual}, we have
\begin{align*}
    e_i^\top z(\theta^\star) = \sigma_io_i^\star \prod_{j=1}^{D-1} v_j^\star - d_i = 0\,,\quad\forall i\in[r]\,.
\end{align*}
Moreover, since $\theta^\star$ is balanced,
\begin{align*}
    \sum_{i=1}^r o_i^2 = \|\Pi_W u\|^2 = (v_1^\star)^2=\cdots=(v_{D-1}^\star)^2\,.
\end{align*}
Substituting $o_i^\star = d_i / (\sigma_i\prod_{j=1}^{D-1}v_j^\star)$ gives
\begin{align*}
    \sum_{i=1}^r \frac{d_i^2}{\sigma_i^2 \prod_{j=1}^{D-1}(v_j^\star)^2} = (v_1^\star)^2=\cdots=(v_{D-1}^\star)^2\,.
\end{align*}
Multiplying both sides by $(v_1^\star)^{2D-2}$ gives
\begin{align*}
    (v_1^\star)^{2D}=\cdots=(v_{D-1}^\star)^{2D} = \sum_{i=1}^r \frac{d_i^2}{\sigma_i^2} = Q\,.
\end{align*}
Hence, we can observe that
\begin{align*}
    (v_1^\star)^2=\cdots=(v_{D-1}^\star)^2 = Q^{\frac{1}{D}}\,,
\end{align*}
and
\begin{align*}
    (o_i^\star)^2 = \frac{d_i^2}{\sigma_i^2\prod_{j=1}^{D-1}(v_j^\star)^2}=\frac{d_i^2}{\sigma_i^2}Q^{\frac{1-D}{D}}\,\quad\forall i\in[r]\,.
\end{align*}
It is worth noting that $v_1^\star,\ldots,v_{D-1}^\star$ and $o_1^\star,\ldots,o_r^\star$ are \emph{uniquely} determined up to sign.

Based on these facts, we will bound the sharpness $S(\theta^\star)$. First, note that the loss Hessian matrix exactly matches with the (normalized) Gauss-Newton (GN) matrix at a global minimum:
\begin{align*}
    \nabla^2 L(\theta^\star) = \frac{1}{N}J(\theta^\star)^\top J(\theta^\star)\,,
\end{align*}
where $J(\theta^\star) = \frac{\partial f}{\partial \theta}(\theta^\star) = (\prod_{j=1}^{D-1}v_j^\star) \begin{bmatrix}X\ \frac{1}{v_1^\star}Xu^\star \ \cdots \ \frac{1}{v_{D-1}^\star}Xu^\star \end{bmatrix}\in \mathbb{R}^{N\times p}$ is a Jacobian matrix of the minimalist model. Hence, the sharpness at $\theta^\star$ matches (up to scaling) the spectral norm of the NTK matrix $J(\theta^\star) J(\theta^\star)^\top$~\citep{jacot2018neural}:
\begin{align*}
    S(\theta^\star) = \lambda_{\max}(\nabla^2L(\theta^\star)) = \frac{1}{N}\|J(\theta^\star)^\top J(\theta^\star)\|_2 = \frac{1}{N}\|J(\theta^\star) J(\theta^\star)^\top\|_2
\end{align*}
The NTK matrix can be written as
\begin{align}
    J(\theta^\star) J(\theta^\star)^\top &= XX^\top \prod_{j=1}^{D-1}(v_1^\star)^2 + \left(\sum_{j=1}^{D-1} Xu^\star(u^\star)^\top X^\top (v_j^\star)^{-2}\right) \prod_{j=1}^{D-1}(v_1^\star)^2 \label{eq:ntk_deep1}\\
    &= Q^{\frac{D-1}{D}}\sum_{i=1}^r \sigma_i^2e_ie_i^\top + (D-1)Q^{\frac{D-2}{D}} \sum_{i_1,i_2}^r \sigma_{i_1}\sigma_{i_2}o_{i_1}o_{i_2}e_{i_1}e_{i_2}^\top \\
    &= Q^{\frac{D-1}{D}}\sum_{i=1}^r \sigma_i^2e_ie_i^\top + (D-1)Q^{\frac{D-2}{D}} \sum_{i_1,i_2}^r d_{i_1}d_{i_2}Q^{\frac{1-D}{D}} e_{i_1}e_{i_2}^\top \\
    &= Q^{\frac{D-1}{D}}\sum_{i=1}^r \sigma_i^2e_ie_i^\top + (D-1)Q^{-\frac{1}{D}} \sum_{i_1,i_2}^r d_{i_1}d_{i_2} e_{i_1}e_{i_2}^\top\,.\label{eq:ntk_deep2}
\end{align}
Note that the NTK matrix is also \emph{uniquely} determined. Based on \eqref{eq:ntk_deep1}-\eqref{eq:ntk_deep2}, we can lower bound the spectral norm of the NTK matrix by
\begin{align*}
    \|J(\theta^\star) J(\theta^\star)^\top\|_2 \ge e_1^\top J(\theta^\star) J(\theta^\star)^\top e_1 = \sigma_1^2 Q^{\frac{D-1}{D}} + (D-1) d_1^2 Q^{-\frac{1}{D}}\,
\end{align*}
and upper bound by
\begin{align*}
    \|J(\theta^\star) J(\theta^\star)^\top\|_2 &\le \|XX^\top\|_2 \prod_{j=1}^{D-1}(v_1^\star)^2 + \left(\sum_{j=1}^{D-1} \left\|Xu^\star(u^\star)^\top X^\top \right\|_2(v_j^\star)^{-2}\right) \prod_{j=1}^{D-1}(v_1^\star)^2\\
    &= \sigma_1^2 Q^{\frac{D-1}{D}} + (D-1)Q^{\frac{D-2}{D}} \left\|Xu^\star(u^\star)^\top X^\top \right\|_2\\
    &= \sigma_1^2 Q^{\frac{D-1}{D}} + (D-1)Q^{\frac{D-2}{D}} \|Xu^\star\|_2^2\\
    &= \sigma_1^2 Q^{\frac{D-1}{D}} + (D-1)Q^{\frac{D-2}{D}} \sum_{i=1}^r \sigma_1^2(o_i^\star)^2\\
    &= \sigma_1^2 Q^{\frac{D-1}{D}} + (D-1)Q^{\frac{D-2}{D}} \sum_{i=1}^r d_i^2 Q^{\frac{1-D}{D}}\\
    &=\sigma_1^2 Q^{\frac{D-1}{D}} + (D-1)Q^{-\frac{1}{D}} \sum_{i=1}^r d_i^2\,.
\end{align*}
Therefore, we can obtain the desired bound on the sharpness:
\begin{align*}
    \frac{1}{N}\left[\sigma_1^2 Q^{\frac{D-1}{D}}+ (D-1) d_1^2 Q^{-\frac{1}{D}} \right] \leq  S(\theta^\star) 
    \leq \frac{1}{N} \left[\sigma_1^2 Q^{\frac{D-1}{D}}+ (D-1) \left( \sum_{i=1}^r d_i^2 \right) Q^{-\frac{1}{D}} \right]\,.
\end{align*}

%\subsection{Trial on GF Sol}
%\input{GFSol_theorem_proof_trial}

\subsection{Proof of \cref{lemma:gf_conserve}}
We provide a proof for $D=2$ and $D>2$ separately. 

\paragraph{(1) $D=2$.~~}
It suffices to prove that $\dot{C}(\theta(t)) = 0$ for all $t\ge 0$. Recall that the layer imbalance is defined as
\begin{align*}
    C(\theta(t)) = \left(\sum_{i=1}^ro_i(t)^2\right)-v_1(t)^2\,.
\end{align*}
Differentiating both sides with respect to $t$ gives
\begin{align*}
    \dot{C}(\theta(t)) &= \left(\sum_{i=1}^r2o_i(t)\dot{o}_i(t)\right)-2v_1(t)\dot{v}_1(t)\\
    &= -\frac{2}{N}\sum_{i=1}^r \sigma_i \left(e_i^\top z(t)\right)o_i(t) v_1(t) + \frac{2}{N} \sum_{i=1}^r \sigma_i \left(e_i^\top z(t)\right) o_i(t) v_1(t)\\
    &=0\,,
\end{align*}
where we used \eqref{eq:GF_v_reparam} and \eqref{eq:GF_o_reparam} for the second equality. 

\paragraph{(2) $D>2$.~~} 
It suffices to prove that 
\begin{align*}
    \frac{\partial}{\partial t}(\|\Pi_W u(t)\|^2) = \frac{\partial}{\partial t}(v_1(t)^2) =\cdots =\frac{\partial}{\partial t}(v_{D-1}(t)^2)
\end{align*}
holds for any $t\ge 0$. Using \eqref{eq:GF_v_reparam} and \eqref{eq:GF_o_reparam}, we have
\begin{align*}
    \frac{\partial}{\partial t}(\|\Pi_W u (t)\|^2) = \sum_{i=1}^r \frac{\partial}{\partial t}(o_i(t)^2) =  \sum_{i=1}^r 2\dot{o}_i(t)o_i(t)=-\frac{2}{N}\sum_{i=1}^r \sigma_i (e_i^\top z(t)) o_i(t) \prod_{j=1}^{D-1} v_j(t)\,,
\end{align*}
and 
\begin{align*}
    \frac{\partial}{\partial t}(v_j(t)^2) = 2\dot{v}_j(t)v_j(t) = -\frac{2}{N} \sum_{i=1}^r \sigma_i (e_i^\top z(t)) o_i(t) \prod_{j'=1}^{D-1} v_{j'}(t)\,,
\end{align*}
for any $j\in [D-1]$. This completes the proof.

\subsection{Auxiliary Lemmas and Proofs}

In this subsection, we provide auxiliary lemmas for the proof of Theorem~\ref{thm:Ab-Init-Sharpness} and \ref{thm:Ab-Converge-Sharpness}.

\begin{lemma}\label{lem:special-A-eig} Given a matrix $A = \begin{bmatrix}
0_{k \times k} & v \\
v^\top & 0
\end{bmatrix} \,,$
and $k \in \mathbb{N}$ dimensional arbitrary vector $v \in \R^k$, 
\begin{align*}
\lVert A \rVert_2 = \lVert v \rVert.
\end{align*}
\end{lemma}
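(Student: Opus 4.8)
The plan is to exploit the symmetry of $A$: since $A$ is a real symmetric $(k+1)\times(k+1)$ matrix, its spectral norm equals the largest absolute eigenvalue, so it suffices to determine the spectrum of $A$. I would set up the eigenvalue equation $A\begin{bmatrix} x \\ t\end{bmatrix} = \lambda \begin{bmatrix} x \\ t\end{bmatrix}$ with $x \in \mathbb{R}^k$ and $t \in \mathbb{R}$, which unfolds into the two coupled scalar/vector equations $vt = \lambda x$ and $v^\top x = \lambda t$.

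For any nonzero eigenvalue $\lambda$, the first equation gives $x = (t/\lambda)\,v$; substituting into the second yields $(t/\lambda)\lVert v\rVert^2 = \lambda t$. Here $t$ cannot vanish (otherwise $x=0$ as well, so the eigenvector is trivial), hence $\lambda^2 = \lVert v\rVert^2$, i.e. $\lambda = \pm \lVert v\rVert$. The remaining eigenvalues must be $0$, since $A$ has rank at most two. Thus the eigenvalues of $A$ are $\lVert v\rVert$, $-\lVert v\rVert$, and zeros, giving $\lVert A\rVert_2 = \max\{\lVert v\rVert, 0\} = \lVert v\rVert$.

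An even cleaner route I would consider is to compute $A^2 = \begin{bmatrix} vv^\top & 0 \\ 0 & \lVert v\rVert^2 \end{bmatrix}$, which is block diagonal. Because $\lVert A\rVert_2 = \sqrt{\lambda_{\max}(A^2)}$ for symmetric $A$, and the eigenvalues of $vv^\top$ are $\lVert v\rVert^2$ together with zeros while the lower block contributes exactly $\lVert v\rVert^2$, we get $\lambda_{\max}(A^2) = \lVert v\rVert^2$ and therefore $\lVert A\rVert_2 = \lVert v\rVert$ at once.

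There is essentially no serious obstacle in this lemma; it is a standard spectral computation for a rank-two symmetric block matrix. The only points requiring a small amount of care are the degenerate case $v = 0$ (where $A = 0$ and both sides are trivially zero) and confirming that the $(k-1)$-dimensional kernel contributes no eigenvalue of larger magnitude, which is settled immediately by the rank bound on $A$.
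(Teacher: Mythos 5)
Your primary argument (the eigenvalue equation $vt=\lambda x$, $v^\top x=\lambda t$, yielding $\lambda=\pm\lVert v\rVert$ for nonzero eigenvalues and zeros elsewhere) is correct and is essentially identical to the paper's own proof. The alternative route via $A^2=\begin{bmatrix} vv^\top & 0\\ 0 & \lVert v\rVert^2\end{bmatrix}$ is also valid and arguably cleaner, but the main line of reasoning matches the paper.
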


\begin{proof}
Since \(A\) is symmetric, its spectral norm is equal to the maximum absolute eigenvalue of \(A\).

Let \((x,y)^\top \in \mathbb{R}^{k+1}\) be an eigenvector corresponding to an eigenvalue \(\lambda\), where \(x \in \mathbb{R}^k\) and \(y \in \mathbb{R}\). Then the eigenvalue equation
\[
A \begin{bmatrix} x \\ y \end{bmatrix} = \lambda \begin{bmatrix} x \\ y \end{bmatrix}
\]
becomes
\[
\begin{bmatrix} 0_{k \times k} & v \\ v^\top & 0 \end{bmatrix}
\begin{bmatrix} x \\ y \end{bmatrix} =
\begin{bmatrix} v\,y \\ v^\top x \end{bmatrix} = 
\lambda \begin{bmatrix} x \\ y \end{bmatrix}.
\]
This yields the system:
\[
\begin{cases}
v\,y = \lambda x, \\
v^\top x = \lambda y.
\end{cases}
\]

\textbf{Case 1: \(y \neq 0\).}  
In this case, from the first equation we obtain
\[
x = \frac{y}{\lambda} v,
\]
assuming \(\lambda \neq 0\). Substituting this expression into the second equation gives:
\[
v^\top \left(\frac{y}{\lambda}v\right) = \lambda y \quad \Longrightarrow \quad \frac{y}{\lambda} \lVert v \rVert^2 = \lambda y.
\]
Since \(y \neq 0\), canceling \(y\) we get:
\[
\lVert v \rVert^2 = \lambda^2 \quad \Longrightarrow \quad \lambda = \pm \lVert v \rVert.
\]

\textbf{Case 2: \(y = 0\).}  
If \(y = 0\), the first equation becomes:
\[
0 = \lambda x.
\]
For a nontrivial eigenvector (i.e., \(x \neq 0\)), we must have \(\lambda = 0\).

Thus, the eigenvalues of \(A\) are:
\[
\lambda = \lVert v \rVert, \quad \lambda = -\lVert v \rVert, \quad \text{and} \quad \lambda = 0 \quad (\text{with multiplicity at least } k-1).
\]
Therefore, the spectral norm of \(A\) is
\[
\lVert A \rVert_2 = \max\{|\lambda|\} = \lVert v \rVert.
\]
\end{proof}

\begin{lemma}\label{lem:convex-g}
Define 
\[
g(x) = \sqrt{x^2 + 4Q}, 
\]
where \(Q \ge 0\).  Then \(g\) is convex on \(\R\).
\end{lemma}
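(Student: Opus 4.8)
The plan is to prove convexity by the cleanest route available, namely by recognizing $g$ as a norm of an affine image of $x$. Observe that since $Q \ge 0$ we may write $4Q = (2\sqrt{Q})^2$, so that
\[
g(x) = \sqrt{x^2 + (2\sqrt{Q})^2} = \left\| \begin{bmatrix} x \\ 2\sqrt{Q} \end{bmatrix} \right\|_2 .
\]
The map $x \mapsto (x, 2\sqrt{Q})^\top$ is affine in $x$, and the Euclidean norm $\|\cdot\|_2$ is a convex function on $\R^2$ (being a norm, it satisfies the triangle inequality and absolute homogeneity, which together yield convexity). Since the composition of a convex function with an affine map is convex, $g$ is convex on all of $\R$. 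The appeal of this argument is that it handles every $Q \ge 0$ uniformly, including the degenerate boundary case, with no differentiability concerns.

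As an alternative, more computational route, I would differentiate twice. A direct calculation gives $g'(x) = x / \sqrt{x^2 + 4Q}$ and then
\[
g''(x) = \frac{4Q}{(x^2 + 4Q)^{3/2}}.
\]
For $Q > 0$ the numerator is positive and the denominator is strictly positive, so $g''(x) > 0$ everywhere and $g$ is (strictly) convex. The main obstacle in this second approach is the case $Q = 0$: there $g(x) = \sqrt{x^2} = |x|$, which fails to be twice differentiable at the origin, so the second-derivative test does not directly apply. I would dispatch this case separately by invoking the elementary fact that the absolute value is convex (e.g.\ via the triangle inequality $|\lambda a + (1-\lambda) b| \le \lambda |a| + (1-\lambda)|b|$).

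For these reasons I would favor the norm-composition argument as the primary proof, since it is short, avoids case analysis, and sidesteps the non-smoothness at $Q = 0$ entirely; the second-derivative computation can be mentioned as a remark that additionally yields \emph{strict} convexity whenever $Q > 0$.
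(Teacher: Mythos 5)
Your primary argument is correct but takes a different route from the paper. The paper proves the lemma by the computational approach you list as your alternative: it computes $g'(x) = x/\sqrt{x^2+4Q}$ and $g''(x) = 4Q/(x^2+4Q)^{3/2} \ge 0$ and concludes convexity from the sign of the second derivative. Your norm-composition argument --- writing $g(x) = \bigl\| (x, 2\sqrt{Q})^\top \bigr\|_2$ and invoking convexity of norms together with preservation of convexity under affine precomposition --- is equally valid and is in fact slightly more careful: the paper's second-derivative proof silently assumes $x^2 + 4Q > 0$, and at $Q = 0$ the function degenerates to $|x|$, which is not twice differentiable at the origin, a boundary case you correctly flag and that your primary argument handles without any case split. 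What the paper's computation buys in exchange is the explicit formula for $g''$, which shows strict convexity whenever $Q > 0$ (though the lemma as stated, and its use via Jensen's inequality in the proof of the convergence-sharpness theorem, only requires plain convexity). Both proofs are complete and correct; yours is marginally more robust, the paper's marginally more informative.
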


\begin{proof}
Compute the first and second derivatives:
\[
g'(x) = \frac{x}{\sqrt{x^2 + 4Q}}, 
\qquad
g''(x)
= \frac{\sqrt{x^2+4Q} - \frac{x^2}{\sqrt{x^2+4Q}}}{x^2+4Q}
= \frac{4Q}{(x^2 + 4Q)^{3/2}} \ge 0.
\]
Since \(g''(x)\ge0\) for all \(x\in\R\), \(g\) is convex.
\end{proof}

\begin{lemma}[Sharpness Bounds for Two-Layer Minimalist Model with Arbitrary $\theta$]\label{lem:arbitrary-theta}
For a two-layer minimalist model trained on a dataset $(X,y)$, the sharpness at $\theta = (u, v_1)$ is bounded by
\begin{align*}
\frac{1}{N}\!\left[\sigma_1^2 v_1^2 + \sigma_1^2 o_1^2 + \frac{2 \sigma_1^4 o_1^2 v_1^2}{v_1^2 \sigma_1^2 + \sigma_1^2 o_1^2} - \frac{2 \sigma_1^3 d_1\, o_1 v_1}{v_1^2 \sigma_1^2 + \sigma_1^2 o_1^2}\right] \leq S(\theta) \le \frac{1}{N}\!\left[\sigma_1^2 v_1^2 + \sum_{i=1}^r \sigma_i^2 o_i^2 + \sqrt{\sum_{i=1}^r \left(\sigma_i\left(\sigma_i o_i v_1 - d_i\right)\right)^2}\right].
\end{align*}
\end{lemma}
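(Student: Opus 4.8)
The plan is to bound the sharpness $S(\theta) = \frac{1}{N}\|J(\theta)J(\theta)^\top\|_2$ for an arbitrary (not necessarily minimizing) parameter $\theta = (u,v_1)$, reusing the NTK-matrix machinery from the proof of \cref{thm:2layer}. The crucial difference from that proof is that the residual $z(\theta)$ no longer vanishes, so $o_i$ is a free variable rather than being pinned to $\frac{d_i}{\sigma_i v_1}$. First I would write down the NTK matrix explicitly: since $J(\theta) = \begin{bmatrix} Xv_1 & Xu\end{bmatrix}$, we have
\begin{align*}
J(\theta)J(\theta)^\top = v_1^2 \sum\nolimits_{i=1}^r \sigma_i^2 e_i e_i^\top + \sum\nolimits_{i_1,i_2=1}^r \sigma_{i_1}\sigma_{i_2} o_{i_1} o_{i_2} e_{i_1} e_{i_2}^\top\,,
\end{align*}
exactly as in \eqref{eq:ntk} but with general $o_i$. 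The sharpness is $\frac{1}{N}$ times the spectral norm of this matrix.

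For the \textbf{upper bound}, I would apply the triangle inequality for the spectral norm to split $J(\theta)J(\theta)^\top$ into the two pieces above. The first piece has spectral norm $\sigma_1^2 v_1^2$. For the second piece $Xu(Xu)^\top$, I would use that it is rank one, so its spectral norm equals $\|Xu\|^2 = \sum_{i=1}^r \sigma_i^2 o_i^2$. However, the stated upper bound has the form $\sigma_1^2 v_1^2 + \sum_i \sigma_i^2 o_i^2 + \sqrt{\sum_i (\sigma_i(\sigma_i o_i v_1 - d_i))^2}$, so the correct decomposition is not the two-piece split but rather expressing the NTK matrix via the residual. I would instead write $J(\theta)J(\theta)^\top$ as a ``diagonal-plus-residual'' form: separate out the terms that would appear at a minimizer and push the deviation $\sigma_i o_i v_1 - d_i = e_i^\top z(\theta)$ into a correction term with the block structure of \cref{lem:special-A-eig}. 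The triangle inequality then bounds the norm by the diagonal part plus the norm of a $\begin{bmatrix}0 & v\\ v^\top & 0\end{bmatrix}$-type block, whose norm is $\|v\| = \sqrt{\sum_i(\sigma_i(\sigma_i o_i v_1 - d_i))^2}$ by \cref{lem:special-A-eig}.

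For the \textbf{lower bound}, I would use the variational characterization $\|M\|_2 \ge \frac{x^\top M x}{\|x\|^2}$ for any nonzero test vector $x$. Rather than the crude choice $x = e_1$ used in \cref{thm:2layer}, the asymmetric-looking lower bound suggests optimizing over a two-dimensional test subspace. Since the only off-diagonal coupling in the $e_1$-direction relevant to the leading behavior comes from the $\sigma_1^2 o_1^2$ term interacting with the $d_1$ residual, I would take $x$ to lie in $\mathrm{span}(e_1)$ but account for the full quadratic form restricted to the relevant block, or equivalently apply a Rayleigh-quotient bound using a carefully chosen vector aligned with $e_1$. The specific rational expression $\frac{2\sigma_1^4 o_1^2 v_1^2 - 2\sigma_1^3 d_1 o_1 v_1}{v_1^2\sigma_1^2 + \sigma_1^2 o_1^2}$ strongly indicates that the test vector is chosen to diagonalize (or partially optimize) the $2\times 2$ principal submatrix of the NTK associated with the $e_1$ coordinate and the rank-one direction $Xu/\|Xu\|$; I would compute the Rayleigh quotient of this $2\times2$ block and simplify.

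The main obstacle I expect is the \textbf{lower bound}: getting the exact rational form requires choosing the right test vector and carefully carrying out the $2\times2$ Rayleigh-quotient algebra, since a naive $e_1$ choice gives only $\sigma_1^2 v_1^2 + \sigma_1^2 o_1^2$ and misses the cross term $-\frac{2\sigma_1^3 d_1 o_1 v_1}{\cdots}$ that encodes the residual's effect. By contrast, the upper bound should follow routinely once the residual-plus-block decomposition is set up and \cref{lem:special-A-eig} is invoked. I would verify at the end that substituting $o_1 = \frac{d_1}{\sigma_1 v_1}$ (the minimizer condition, where the residual vanishes) collapses both bounds back to the bounds of \cref{thm:2layer}, as a consistency check.
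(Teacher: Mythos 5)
Your starting point contains a genuine error that the rest of the plan cannot recover from: you identify $S(\theta)$ with $\frac{1}{N}\lVert J(\theta)J(\theta)^\top\rVert_2$. That identity holds only at a global minimizer, where the residual vanishes and the Hessian coincides with the Gauss--Newton matrix. At an arbitrary $\theta$ --- which is the entire point of this lemma --- the Hessian is
\begin{align*}
\nabla^2 L(\theta) = \frac{1}{N}\left(J(\theta)^\top J(\theta) + \langle H(\theta), z(\theta)\rangle\right),
\qquad
\langle H(\theta), z(\theta)\rangle = \begin{bmatrix} \bm{0}_{d\times d} & X^\top z(\theta) \\ z(\theta)^\top X & 0\end{bmatrix},
\end{align*}
and it is exactly this second, residual-weighted term that carries all of the $d_i$-dependence in both bounds. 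Your subsequent idea of writing $J(\theta)J(\theta)^\top$ in a ``diagonal-plus-residual'' form to make the deviations $\sigma_i o_i v_1 - d_i$ appear cannot succeed, because $J(\theta)J(\theta)^\top$ depends only on $X$, $u$, and $v_1$ --- the labels $y$ never enter it, so no algebraic re-decomposition of that matrix will produce the $\sqrt{\sum_i(\sigma_i(\sigma_i o_i v_1 - d_i))^2}$ term or the $-\frac{2\sigma_1^3 d_1 o_1 v_1}{\cdots}$ cross term. You correctly sense that \cref{lem:special-A-eig} must be invoked on a $\begin{bmatrix}0 & v\\ v^\top & 0\end{bmatrix}$-type block, but that block is $\langle H(\theta), z(\theta)\rangle$ itself, not a correction hiding inside the NTK matrix.

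Once the correct Hessian is in place, the paper's argument is close to what you sketched for the remaining steps: the upper bound is the triangle inequality $\lVert J^\top J\rVert_2 + \lVert\langle H, z\rangle\rVert_2$, with the first term bounded by $\sigma_1^2 v_1^2 + \sum_i\sigma_i^2 o_i^2$ via your two-piece split and the second equal to $\lVert z^\top X\rVert$ by \cref{lem:special-A-eig}. For the lower bound the paper does not optimize over a two-dimensional subspace; it evaluates the Rayleigh quotient of the full Hessian at the single test vector $J(\theta)^\top e_1/\lVert J(\theta)^\top e_1\rVert$, which yields $e_1^\top(JJ^\top)^2 e_1/\lVert J^\top e_1\rVert^2 \ge \sigma_1^2 v_1^2 + \sigma_1^2 o_1^2$ from the Gauss--Newton part and the rational cross terms from $e_1^\top J\langle H,z\rangle J^\top e_1 = 2\sigma_1^3 o_1 v_1(\sigma_1 o_1 v_1 - d_1)$. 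Your consistency check at the minimizer is sound, but it would not have caught the missing Hessian term, since that term vanishes there.
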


\begin{proof}

Consider a two-layer minimalist model with parameters \(\theta=(u,v_1)\) and Jacobian \(J(\theta)\). By differentiating the loss twice,
\begin{align*}
\nabla^2 \frac{1}{2N}\left(z(\theta)^\top z(\theta) \right) = \nabla \frac{1}{N} \left(z(\theta)^\top J(\theta) \right) = \frac{1}{N} \left(J(\theta)^\top J(\theta) + \langle H(\theta), z(\theta) \rangle \right),
\end{align*}
where we define $H(\theta) \in \R^{n\times p\times p}$ as $H_{i,j,k}(\theta) = \frac{\partial^2 z_i(\theta)}{ \partial \theta_j \partial \theta_k}$ for $i \in [N],\, j,k \in[p]$ and $$[\langle H(\theta), z(\theta )\rangle]_{jk} = \sum_{i=1}^N z_i(\theta) H_{i,jk}(\theta), \quad j, k \in [p].$$

Therefore, sharpness at $\theta$ is given by
\[
S(\theta)=\left\lVert \frac{1}{N}\left[J(\theta)^\top J(\theta)+\langle H(\theta), z(\theta)\rangle\right] \right\rVert_2.
\]

We may write $J(\theta)J(\theta)^\top = XX^\top v_1^2 + Xuu^\top X^\top$. Using the singular value decomposition  $XX^\top = \sum_{i=1}^r \sigma_i^2\, e_i e_i^\top$, and representing
$Xu = \sum_{i=1}^r \sigma_i\, o_i\, e_i$,
it follows that
\[
J(\theta)J(\theta)^\top = v_1^2 \sum_{i=1}^r \sigma_i^2\, e_i e_i^\top + \sum_{i,j=1}^r \sigma_i \sigma_j\, o_i o_j\, e_i e_j^\top.
\]

\paragraph{Lower bound}

For any unit vector \( w \in \R^{d+1} \), we have
\[
\lVert J(\theta)^\top J(\theta) + \langle H(\theta), z(\theta) \rangle \rVert_2 \ge w^\top (J(\theta)^\top J(\theta) + \langle H(\theta), z(\theta) \rangle ) w.
\]
Choosing $\frac{e_1^\top J(\theta)}{\lVert e_1^\top J(\theta)\rVert}$, a straightforward calculation on the term $J (\theta)^\top J(\theta)$ shows that
\[
\frac{1}{\lVert e_1^\top J(\theta) \rVert^2} \, e_1^\top \bigl(J(\theta)J(\theta)^\top\bigr)^2 e_1 = \frac{(v_1)^4 \sigma_1^4 + 2 (v_1)^2 \sigma_1^4 (o_1)^2 + \sum_{i=1}^r \sigma_1^2 \sigma_i^2 (o_1)^2 (o_i)^2}{(v_1)^2 \sigma_1^2 + \sigma_1^2 (o_1)^2}.
\]
Since $\sum_{i=1}^r (o_i)^2 \ge (o_1)^2$,
it follows that
\[
\frac{1}{\lVert e_1^\top J(\theta) \rVert^2} \, e_1^\top \bigl(JJ^\top\bigr)^2 e_1 \ge (v_1)^2 \sigma_1^2 + \sigma_1^2 (o_1)^2.
\]

Next, noting that
\[
\langle H(\theta), z(\theta) \rangle = \begin{bmatrix} \bm{0}_{d\times d} & X^\top z(\theta) \\ z(\theta)^\top X & 0 \end{bmatrix},
\]
a similar calculation shows that
\[
e_1^\top J(\theta) \langle H(\theta), z(\theta) \rangle J(\theta)^\top e_1 = 2 \sigma_1^3\, o_1\, v_1 \Bigl(\sigma_1\, o_1v_1 - d_1\Bigr),
\]
where $e_1^\top J(\theta) = \sigma_1\begin{bmatrix} v_1 w_1 \\ o_1 \end{bmatrix}$.
Dividing by \(\lVert e_1^\top J(\theta) \rVert^2\) yields the term
\[
\frac{2 \sigma_1^4 (o_1)^2 (v_1)^2}{(v_1)^2 \sigma_1^2 + \sigma_1^2 (o_1)^2} - \frac{2 \sigma_1^3 d_1\, o_1 v_1}{(v_1)^2 \sigma_1^2 + \sigma_1^2 (o_1)^2}.
\]
Thus, combining these results, we obtain the following inequality:
\[
\frac{1}{N}\!\left[\sigma_1^2 (v_1)^2 + \sigma_1^2 (o_1)^2 + \frac{2 \sigma_1^4 (o_1)^2 (v_1)^2}{(v_1)^2 \sigma_1^2 + \sigma_1^2 (o_1)^2} - \frac{2 \sigma_1^3 d_1\, o_1 v_1}{(v_1)^2 \sigma_1^2 + \sigma_1^2 (o_1)^2}\right] \le S(\theta).
\]

\paragraph{Upper bound}

To provide an upper bound, we use the triangular inequality as follows:

\[
\frac{1}{N}\left\lVert  J(\theta)^\top J(\theta) +\langle H(\theta), z(\theta)\rangle \right\rVert_2 \leq \frac{1}{N} \left \lVert J(\theta)^\top J(\theta) \right \rVert_2 + \frac{1}{N} \lVert \langle H(\theta), z(\theta)\rangle  \rVert_2
\]

By standard norm inequalities,
\[
\lVert J(\theta)^\top J(\theta)\rVert_2 = \lVert J(\theta) J(\theta)^\top\rVert_2 \le \lVert XX^\top \rVert_2\, (v_1)^2 + \lVert Xu\rVert_2^2.
\]
Since \( \lVert XX^\top \rVert_2 = \sigma_1^2 \) and $\lVert Xu\rVert_2^2 = \sum_{i=1}^r \sigma_i^2 (o_i)^2$,
we have
\[
\lVert J(\theta)^\top J(\theta)\rVert_2 \le \sigma_1^2 (v_1)^2 + \sum_{i=1}^r \sigma_i^2 (o_i)^2.
\]

Similarly,
\begin{align*}
\phantom{=} &\lVert \langle H(\theta), z (\theta) \rangle \rVert_2 
= \lVert z(\theta)^\top X\rVert 
= \left \lVert \sum_{i=1}^r \sigma_i (e_i^\top z(\theta)) w_i \right \rVert \\
= &\left \lVert \sum_{i=1}^r \sigma_i (\sigma_i o_i v_1 - d_i) w_i \right \rVert 
= \sqrt{\sum_{i=1}^r \left(\sigma_i\left(\sigma_i\, o_iv_1 - d_i\right)\right)^2}.
\end{align*}

The first equality is based on \cref{lem:special-A-eig}.

Therefore, the upper bound for the sharpness is
\[
S(\theta) \le \frac{1}{N}\!\left[\sigma_1^2 (v_1)^2 + \sum_{i=1}^r \sigma_i^2 (o_i)^2 + \sqrt{\sum_{i=1}^r \Bigl(\sigma_i\Bigl(\sigma_i\, o_iv_1 - d_i\Bigr)\Bigr)^2}\right].
\]
\end{proof}

\subsection{Proof of \cref{thm:Ab-Init-Sharpness}}

Note that the initialization follows
\[
u^{(0)} \sim \mathcal{N}\left(0,\alpha^2 I_d\right), \quad v_1^{(0)} \sim \mathcal{N}\left(0,\beta^2\right).
\]
Then, 
\[
o_i^{(0)} \sim \mathcal{N}\left(0,\alpha^2\right), \quad \mathbb{E}\left[(o_i^{(0)})^2\right] = \alpha^2, \quad \mathbb{E}\left[(v_1^{(0)})^2\right] = \beta^2.
\]

From the lower bound of \cref{lem:arbitrary-theta},
\begin{align*}
\frac{1}{N}\!\left[\sigma_1^2 (v_1^{(0)})^2 + \sigma_1^2 (o_1^{(0)})^2 + \frac{2 \sigma_1^4 (o_1^{(0)})^2 (v_1^{(0)})^2}{v_1^2 \sigma_1^2 + \sigma_1^2 o_1^2} - \frac{2 \sigma_1^3 d_1\, o_1^{(0)} v_1^{(0)}}{(v_1^{(0)})^2 \sigma_1^2 + \sigma_1^2 (o_1^{(0)})^2}\right] \leq S(\theta^{(0)}).
\end{align*}

Hence, the expectation of the lower bound becomes
\[
\mathbb{E}\bigl[S(\theta^{(0)})\bigr] \ge \frac{\sigma_1^2}{N}\left(\alpha^2 + \beta^2 \right),
\]
where the contribution of $- \frac{2 \sigma_1^3 d_1\, o_1^{(0)} v_1^{(0)}}{(v_1^{(0)})^2 \sigma_1^2 + \sigma_1^2 (o_1^{(0)})^2}$ term vanishes by symmetry, and we may drop $\frac{2 \sigma_1^4 (o_1^{(0)})^2 (v_1^{(0)})^2}{(v_1^{(0)})^2 \sigma_1^2 + \sigma_1^2 (o_1^{(0)})^2} \geq 0$ for brevity.

From the upper bound of \cref{lem:arbitrary-theta},

\begin{align}\label{eq:arb-theta-upp}
S(\theta^{(0)}) \le \frac{1}{N}\!\left[\sigma_1^2 (v_1^{(0)})^2 + \sum_{i=1}^r \sigma_i^2 (o_i^{(0)})^2 + \sqrt{\sum_{i=1}^r \left(\sigma_i\left(\sigma_i\, o_i^{(0)} v_1^{(0)} - d_i\right)\right)^2}\right].
\end{align}

Applying Jensen’s inequality to the square-root term in \cref{eq:arb-theta-upp} and taking the expectation yields
\begin{align*}
\mathbb{E}\left[\sqrt{\sum_{i=1}^r \left(\sigma_i\left(\sigma_i\, o_i^{(0)}v_1^{(0)}-d_i\right)\right)^2}\right]
& \leq \sqrt{\sum_{i=1}^r \E \left[\sigma_i^2\left(\sigma_i\, o_i^{(0)}v_1^{(0)}-d_i\right)^2\right]}  \\
& = \sqrt{\sum_{i=1}^r \E \left[\sigma_i^4 \left(o_i^{(0)} \right)^2 \left(v_1^{(0)} \right)^2 -2 d_i \sigma_i^3 o_i^{(0)} v_1^{(0)} + \sigma_i^2 d_i^2 \right]} \\
& = \sqrt{\sum_{i=1}^r \left(\alpha^2 \beta^2\sigma_i^4+\sigma_i^2d_i^2\right)}
\end{align*}

Therefore, taking the expectation to \cref{eq:arb-theta-upp} results in the following inequality.

\begin{align*}
\mathbb{E}\left[S(\theta^{(0)})\right]
\le \frac{1}{N}\left[ \sum_{i=1}^r \alpha^2\sigma_i^2 + \beta^2 \sigma_1^2 + \sqrt{\sum_{i=1}^r \sigma_i^2\left(\alpha^2 \beta^2\sigma_i^2+d_i^2\right)}\right].
\end{align*}

Thus, we obtain the overall bounds
\[
\frac{\sigma_1^2}{N}\left(\alpha^2 + \beta^2 \right)
\le \mathbb{E}\left[S(\theta^{(0)})\right]
\le \frac{1}{N}\left[ \sum_{i=1}^r \alpha^2\sigma_i^2 + \beta^2 \sigma_1^2 + \sqrt{\sum_{i=1}^r \sigma_i^2\left(\alpha^2 \beta^2\sigma_i^2+d_i^2\right)}\right].
\]

This finishes the proof.

\subsection{Proof of \cref{thm:Ab-Converge-Sharpness}}

Our analysis aims to derive the expectation of both the lower and upper bounds of \cref{thm:2layer}. By substituting the expression for $(v_1^\star)^2$ and simplifying the resulting terms for the lower bound, we obtain:

\begin{align}
S(\theta^\star) \geq \frac{1}{N} \left [ \sigma_1^2 (v_1^\star)^2 + \frac{d_1^2}{(v_1^\star)^2} \right ] &=\frac{1}{N} \left [ \sigma_1^2 \frac{\sqrt{C(\theta^\star)^2 + 4Q} - C(\theta^\star)}{2} + \frac{2 d_1^2}{\sqrt{C(\theta^\star)^2 + 4Q} - C(\theta^\star)} \right ] \notag \\
&=\frac{1}{N} \left [ \sigma_1^2 \frac{\sqrt{C(\theta^\star)^2 + 4Q} - C(\theta^\star)}{2} + \frac{ d_1^2 \left(\sqrt{C(\theta^\star)^2 + 4Q} + C(\theta^\star)\right)}{2Q} \right ] \notag \\
&= \frac{1}{2N} \left [ \left(\sigma_1^2 + \frac{d_1^2}{Q} \right) \sqrt{C(\theta^\star)^2 + 4Q} + \left( \frac{d_1^2}{Q} - \sigma_1^2 \right) C(\theta^\star) \right ]\,. \label{eq:thm-init-conv-proof-1}
\end{align}

For the upper bound, we obtain the following in the same manner:

\begin{align}
S(\theta^\star) \leq \frac{1}{N} \left [ \sigma_1^2 (v_1^\star)^2 + \frac{\sum_{i=1}^rd_i^2}{(v_1^\star)^2} \right ] &= \frac{1}{2N} \left [ \left(\sigma_1^2 + \frac{\sum_{i=1}^rd_i^2}{Q} \right) \sqrt{C(\theta^\star)^2 + 4Q} + \left( \frac{\sum_{i=1}^rd_i^2}{Q} - \sigma_1^2 \right) C(\theta^\star) \right ]\,. \label{eq:thm-init-conv-proof-2}
\end{align}

To derive expectations for both the lower bound and the upper bound, we first characterize $\E [C(\theta^\star)]$:

\begin{align}
\E[C(\theta^\star)] = \E[C(\theta^{(0)})] = \E \left[\sum_{i=1}^r \left[(o_i^{(0)})^2 \right] - (v_1^{(0)})^2 \right] = r \alpha^2 - \beta^2\,, \label{eq:thm-init-conv-proof-3}
\end{align}

where the first equality can be directly derived by \cref{cor:GF_2layer}

We characterize $\E[\sqrt{C(\theta^\star)^2 + 4Q}]$ by introducing two-sided bounds, using the following auxiliary lemma.

We now apply Jensen’s inequality to bound
\(\E\bigl[\sqrt{C(\theta^\star)^2 + 4Q}\bigr]\) from below and above.

\begin{itemize}
  \item \textbf{Lower bound via convexity.}  
    By Lemma~\ref{lem:convex-g}, \(g(x)=\sqrt{x^2 + 4Q}\) is convex.  Hence
    \[
      g\left(\E[C(\theta^\star)]\right)
      \;\le\;
      \E\left[g(C(\theta^\star))\right]
      \;\Longrightarrow\;
      \sqrt{\left(\E[C(\theta^\star)]\right)^2 + 4Q}
      \;\le\;
      \E\!\left[\sqrt{C(\theta^\star)^2 + 4Q}\right].
    \]

  \item \textbf{Upper bound via concavity.}  
    The function \(h(y)=\sqrt{y}\) is concave on \([0,\infty)\).  Let
    \(Y = C(\theta^\star)^2 + 4Q \ge0\).  Jensen’s inequality then gives
    \begin{align*}
      \E\left[h(Y)\right]
      \;\le\;
      h\left(\E[Y]\right)
      \;\Longrightarrow\;
      \E\!\left[\sqrt{C(\theta^\star)^2 + 4Q}\right]
      \;\le\;
      \sqrt{\E\left[C(\theta^\star)^2\right] + 4Q}.
    \end{align*}
\end{itemize}

Combining the two yields the desired two‐sided bound:
\begin{align}
 \sqrt{\left(\E[C(\theta^\star)] \right)^2 + 4Q} \leq \E \left[ \sqrt{C(\theta^\star)^2 + 4Q} \right] \leq \sqrt{\E[C(\theta^\star)^2] + 4Q}\,. \label{eq:thm-init-conv-proof-4}
\end{align}

For numerical bounds for both sides, we need to derive $\E [C(\theta^\star)^2]$ as follows:

\begin{align}
\E[C(\theta^\star)^2] &= \E [C(\theta^{(0)})^2] \notag \\
&= \E\left[\left(\left(\sum_{i=1}^r \left(o_i^{(0)} \right)^2\right) - \left(v_1^{(0)}\right)^2\right)^2\right] \notag \\
&= \E\left[\left(\sum_{i=1}^r \left(o_i^{(0)} \right)^2\right)^2 - 2 \left(\sum_{i=1}^r \left(o_i^{(0)} \right)^2\right) \left(v_1^{(0)}\right)^2 + \left(v_1^{(0)}\right)^4\right] \notag \\
&= \E\left[\left(\sum_{i=1}^r \left(o_i^{(0)} \right)^4\right) + \left( 2 \sum_i \sum_{j> i} \left(o_i^{(0)} \right)^2 \left( o_j^{(0)} \right)^2 \right) - 2 \left(\sum_{i=1}^r \left(o_i^{(0)} \right)^2\right) \left(v_1^{(0)}\right)^2 + \left(v_1^{(0)}\right)^4\right] \notag \\
&= r \cdot (3 \alpha^4) + 2 \binom{r}{2} \alpha^4 - 2 r \alpha^2 \beta^2 + 3 \beta^4  \notag \\
&= (r^2 + 2r)\alpha^4  - 2r\alpha^2 \beta^2 + 3 \beta^4\,. \label{eq:thm-init-conv-proof-5}
\end{align}

Combining \cref{eq:thm-init-conv-proof-4} with \cref{eq:thm-init-conv-proof-3} and \cref{eq:thm-init-conv-proof-5}, we can derive bounds as follows:

\begin{align}
 \sqrt{\left(r\alpha^2 - \beta^2 \right)^2 + 4Q} \leq \E \left[ \sqrt{C(\theta^\star)^2 + 4Q} \right] \leq \sqrt{(r^2 + 2r)\alpha^4  - 2r\alpha^2 \beta^2 + 3 \beta^4+ 4Q}\,. \label{eq:thm-init-conv-proof-6}
\end{align}

Finally, to derive our desired lower bound for the expectation of the sharpness at convergence, we combine the expectation of \cref{eq:thm-init-conv-proof-1} with \cref{eq:thm-init-conv-proof-3} and the lower bound of \cref{eq:thm-init-conv-proof-6} as follows:

\begin{align*}
\E [S(\theta^\star)] &\geq \frac{1}{2N} \E \left [ \left(\sigma_1^2 + \frac{d_1^2}{Q} \right) \sqrt{C(\theta^\star)^2 + 4Q} + \left( \frac{d_1^2}{Q} - \sigma_1^2 \right) C(\theta^\star) \right ] \\
&=  \frac{1}{2N} \left[\left(\sigma_1^2 + \frac{d_1^2}{Q} \right)\E \left [  \sqrt{C(\theta^\star)^2 + 4Q} \right ]  + \left( \frac{d_1^2}{Q} - \sigma_1^2 \right) \E [C(\theta^\star)]  \right] \\
& \geq \frac{1}{2N}  \left[ \left(\sigma_1^2 + \frac{d_1^2}{Q} \right) \sqrt{\left(r\alpha^2 - \beta^2 \right)^2 + 4Q} + \left( \frac{d_1^2}{Q} - \sigma_1^2 \right) (r\alpha^2 - \beta^2) \right]\,.
\end{align*}

We derive the desired upper bound in the same manner:

\begin{align*}
\E [S(\theta^\star)] &\leq \frac{1}{2N} \E \left [ \left(\sigma_1^2 + \frac{\sum_{i=1}^rd_i^2}{Q} \right) \sqrt{C(\theta^\star)^2 + 4Q} + \left( \frac{\sum_{i=1}^rd_i^2}{Q} - \sigma_1^2 \right) C(\theta^\star) \right ] \\
&=  \frac{1}{2N} \left[\left(\sigma_1^2 + \frac{\sum_{i=1}^rd_i^2}{Q} \right)\E \left [  \sqrt{C(\theta^\star)^2 + 4Q} \right ]  + \left( \frac{\sum_{i=1}^rd_i^2}{Q} - \sigma_1^2 \right) \E [C(\theta^\star)]  \right] \\
& \leq \frac{1}{2N}  \left[ \left(\sigma_1^2 + \frac{\sum_{i=1}^rd_i^2}{Q} \right) \sqrt{(r^2 + 2r)\alpha^4  - 2r\alpha^2 \beta^2 + 3 \beta^4+ 4Q}+ \left( \frac{\sum_{i=1}^rd_i^2}{Q} - \sigma_1^2 \right) (r\alpha^2 - \beta^2) \right]\,.
\end{align*}

This finishes the proof.

\subsection{Proof of \cref{thm:2layer_sgd_C_full}}
\label{proof:2layer_sgd_C}

For the proof, we consider the update equations of SGD, as it can unify the analysis for both GD and SGD. Starting from $\theta$ and its corresponding values of $o_1, \dots, o_r$, the update equations under one SGD update can be written as
\begin{align*}
\Delta o_i &= - \frac{\eta}{B} \sigma_i (z^\top P  e_i) v_1 \text{ \quad for $i \in [r]$}\,,\\
\Delta v_1 &= - \frac{\eta}{B} \sum_i \sigma_i (z^\top P  e_i) o_i\,,
\end{align*}
where we write $z := z(\theta)$ for simplicity.
Note that plugging in $P = I$ and $B = N$ recovers the GD update.
We now analyze change of $C$ after the update:
\begin{align}
C(\theta_\text{SGD}^+)-C(\theta) &= \sum_{i=1}^r \left (o_i+ \Delta o_i \right)^2 - (v_1+\Delta v_1)^2 - \left(\sum_{i=1}^r o_i^2 - v
_1^2 \right)\notag\\
&= \sum_{i=1}^r \left (2 o_i\Delta o_i + (\Delta o_i)^2 \right ) -  \left (2 v_1 \Delta v_1 + (\Delta v_1)^2 \right) \notag\\
&= \sum_{i=1}^r (\Delta o_i)^2-(\Delta v_1)^2\,. \label{eq:thmC-proof-2}
\end{align}
Now, substituting the definitions of $\Delta o_i$ and $\Delta v_1$,
\begin{align}
\sum_{i=1}^r (\Delta o_i)^2 -(\Delta v_1)^2 
= \frac{\eta^2}{B^2}\left[\sum_{i=1}^r \sigma_i^2 (z^\top P  e_i)^2 v_1^2 - \left(\sum_{i=1}^r  \sigma_i (z^\top P  e_i)   o_i\right)^2 \right]\,\label{eq:thmC-proof-before-CS}
\end{align}

For simplification, we denote $\bm \psi_i^\odot = \sigma_i(z(\theta) \odot e_i)$, and $\psi_i^P = \sigma_i(z(\theta)^\top P e_i)$ for any arbitrary matrix $P \in \mathbb{R}^{N \times N}$. Then, we can derive the following exact characterization:
\begin{align}
C(\theta^+_\text{SGD}) - C(\theta)& = \sum_{i=1}^r (\Delta o_i)^2 -(\Delta v_1)^2 \notag \\
&= \frac{\eta^2}{B^2}\left[\sum_{i=1}^r \left(\psi_i^P \right)^2 v_1^2 - \left(\sum_{i=1}^r  \psi_i^P   o_i\right)^2 \right] \notag \\
&= \frac{\eta^2}{B^2}\left[\sum_{i=1}^r \left[ \left(\psi^P_i \right)^2(  v_1^2 -  o_i^2) \right] - \sum_{i} \sum_{j>i} \left[ 2 \psi^P_i \psi^P_j o_i o_j\right] \right] \notag \\
&= \frac{\eta^2}{B^2}\left[\sum_{i=1}^r \left[ \left(\psi^P_i \right)^2 \left(  v_1^2 - \sum_{j=1}^r o_j^2 \right) \right] + \sum_i \sum_{j\neq i} \left( \psi_i^P \right)^2o_j^2 - \sum_{i} \sum_{j>i} \left[ 2 \psi^P_i \psi^P_j o_i o_j\right] \right] \notag \\
&= \frac{\eta^2}{B^2}\left[ - \sum_{i=1}^r  \left(\psi^P_i \right)^2 C(\theta)  + \sum_{i} \sum_{j>i} \left[ \psi_i^P o_j - \psi_j^P o_i\right]^2 \right]\,. \label{eq:thmCFull-proof-1}
\end{align}

For GD, setting $P=I$ and $B=N$ finishes the proof. For SGD, we need to take the expectation over randomness of $P$. Recall that $P \in \mathbb{R}^{N\times N}$ a random diagonal matrix with exactly $B$ diagonal entries chosen uniformly at random and set to $1$ and the rest set to $0$. 

To make the expectation easier to calculate, let us define $p \in \{0,1\}^N$ satisfying $P = \diagm(p)$, where $\diagm(\cdot)$ denotes a diagonal matrix whose diagonal entries are equal to the components of the input vector. With this notation, we can write

\begin{align}
    \E_P \left [ \psi_i^P \psi_j^P \right ] 
    = \E_p \left [ \sigma_i \sigma_j z^\top \diagm(e_i) p p^\top \diagm(e_j)z \right ]
    = \sigma_i \sigma_j z^\top \diagm(e_i) \E_p \left [ p p^\top \right ] \diagm(e_j)z.
    \label{eq:thmCFull-proof-2}
\end{align}

Now calculate the expectation of the random variable $pp^\top$. Recall that the $1$ entries of $p$ are sampled without replacement. For $i \in [N]$, let $p_ \in \{0,1\}$ denote the $i$-th entry of $p$. Then,
\begin{align*}
    \E[p_i^2] = \E[p_i] = \frac{B}{N},\quad \text{and}\quad \E[p_i p_j] = \frac{B(B-1)}{N(N-1)} \text{ for any $i\neq j$}.
\end{align*}
From this, $\E_p \left [ p p^\top \right ]$ is a matrix whose diagonal entries are all $\frac{B}{N}$ and off-diagonals are $\frac{B(B-1)}{N(N-1)}$. We can write
\begin{align}
    \E_p \left [ p p^\top \right ] 
    &= \left( \frac{B}{N} - \frac{B(B-1)}{N(N-1)}\right ) I + \frac{B(B-1)}{N(N-1)} \vone \vone^\top \nonumber \\
    &= \frac{B(N-B)}{N(N-1)}I + \frac{B(B-1)}{N(N-1)} \vone \vone^\top\,, \label{eq:thmC_ppT_expect}
\end{align}
where $\vone \in \R^N$ is a vector filled with $1$'s.

Substituting \cref{eq:thmC_ppT_expect} to \cref{eq:thmCFull-proof-2} yields

\begin{align}
    \E_P \left [ \psi_i^P \psi_j^P \right ] 
    &= \sigma_i \sigma_j \left[ \frac{B(N-B)}{N(N-1)} z^\top \diagm(e_i) \diagm(e_j) z + \frac{B(B-1)}{N(N-1)} z^\top \diagm(e_i) \vone\vone^\top \diagm(e_j)z \right]\notag\\
    &= \sigma_i \sigma_j \left[ \frac{B(N-B)}{N(N-1)}  (z \odot e_i)^\top (z \odot e_j) + \frac{B(B-1)}{N(N-1)} (z^\top e_i) (z^\top e_j) \right] \notag \\
    &= \frac{B(N-B)}{N(N-1)} (\bm \psi_i^\odot)^\top \bm \psi_j^\odot + \frac{B(B-1)}{N(N-1)} \psi_i^I \psi_j^I \,. \label{eq:thmCFull-proof-3}
\end{align}

Combining \cref{eq:thmCFull-proof-1} and \cref{eq:thmCFull-proof-3} together, we get

\begin{align*}
    \E_P \left [ C(\theta_\text{SGD}^+) \right ]-C(\theta)
    & = \frac{\eta^2 (N-B)}{BN(N-1)} \left[ - \sum_{i=1}^r \lVert \bm \psi_i^\odot \rVert^2 C(\theta) + \sum_i \sum_{j>i} \left\lVert \bm \psi_i^\odot o_j - \bm \psi_j^\odot o_i \right\rVert^2 \right] \\ 
    & \phantom{=} +  \frac{\eta^2(B-1)}{BN(N-1)} \left[ - \sum_{i=1}^r (\psi_i^I)^2 C(\theta) + \sum_i \sum_{j>i} \left( \psi_i^I o_j - \psi_j^I o_i \right)^2 \right] \\
    & = \frac{\eta^2 (N-B)}{BN^2(N-1)} \left[ - \underbrace{N \sum_{i=1}^r \lVert \bm \psi_i^\odot \rVert^2 }_{=:\Psi_2(\theta)} C(\theta) + \underbrace{N \sum_i \sum_{j>i} \left\lVert \bm \psi_i^\odot o_j - \bm \psi_j^\odot o_i \right\rVert^2}_{=:\Omega_2(\theta)} \right] \\
    & \phantom{=} +  \frac{\eta^2(B-1)}{BN(N-1)} \left[ - \underbrace{\sum_{i=1}^r (\psi_i^I)^2}_{=:\Psi_1(\theta)} C(\theta) + \underbrace{\sum_i \sum_{j>i} \left( \psi_i^I o_j - \psi_j^I o_i \right)^2}_{=: \Omega_1(\theta)} \right] \\
    & = \underbrace{\frac{\eta^2}{N^2} [-\Psi_1(\theta)C(\theta) + \Omega_1(\theta)]}_{=C(\theta^+_\text{GD}) - C(\theta)} + \frac{\eta^2 (N-B)}{BN^2(N-1)}[-(\Psi_2(\theta) - \Psi_1(\theta))C(\theta) + (\Omega_2(\theta) - \Omega_1(\theta))]\,,
\end{align*}
where the last equality used 
\begin{equation*}
    \frac{N-B}{B N^2 (N-1)} + \frac{B-1}{B N (N-1)} = \frac{1}{N^2}\,.
\end{equation*}

The last thing to show is $\Psi_2(\theta) \geq \Psi_1(\theta)$ and $\Omega_2(\theta) \geq \Omega_1(\theta)$. For $\Psi_2(\theta) \geq \Psi_1(\theta)$, it suffices to show $N \| z \odot e_i \|^2 \geq (z^\top e_i)^2$. If we denote $a_k := [z \odot e_i]_k$, the $k$-th entry of $z \odot e_i$, then the inequality boils down to 
\begin{equation*}
    N \sum_{i=1}^N a_i^2 \geq \left ( \sum_{i=1}^N a_i\right )^2,
\end{equation*}
which is true by Jensen's inequality. The same holds for $\Omega_2(\theta) \geq \Omega_1(\theta)$, if we denote $a_k := [\psi_i^\odot o_j - \psi_j^\odot o_i]_k$. This finishes the proof.

\section{Experimental Details and More Results}

We bootstrapped our own implementation from the work of \citet{damian2023selfstabilization}. We mainly used JAX for all the experiments except for \cref{fig:arbitrary_precision_minimal}. To get the sharpness, we used LOBPCG \citep{lobpcg} algorithm. If there's no specific mention in each experiment for initialization, we replicated pytorch default initialization, which has distribution of $\mathcal{U} \left(-\frac{1}{\sqrt{\text{fan\_in}} } , \frac{1}{\sqrt{\text{fan\_in}}}\right)$. Also, dataset difficulty $Q$ is calculated as follows to avoid the numerical instability: $Q \approx \sum_{i=1}^r \frac{d_i^2}{\max(\sigma_i^2, 10^{-9})}$. 

Code is available at \href{https://github.com/Yoogeonhui/understand_progressive_sharpening}{\faGithub}
\href{https://github.com/Yoogeonhui/understand_progressive_sharpening}{Yoogeonhui/understand\_progressive\_sharpening}.

\subsection{Experimental Details for Figure~\ref{fig:PS_GD_GF}--\ref{fig:PS_batch_size}}
\label{sec:exp_detail_real_PS}

In all cases, we trained tanh activation networks. 
More details are given as follows:

\begin{itemize}
\item \Cref{fig:PS_GD_GF}: We trained a width 512, 3-layer NN. For the dataset, we trained a $N=2000$ subset of CIFAR10 ($d=3072$) dataset. Except for the GF curve in the plot, we trained using GD with learning rates $2/500, 2/350, 2/200$.
\item \cref{fig:PS_data}: We trained a width 512, 3-layer NN. For each case we trained $N=10000$, $N=5000$, $N=2000$, $N=1000$ subsets of CIFAR10 ($d=3072$) dataset. 
\item \cref{fig:PS_depth}: We trained width 512, with 3-, 4-, and 5-layer NN. For the dataset, we trained a $N=10000$ subset of CIFAR10 ($d=3072$) dataset. 
\item \cref{fig:PS_width}: We trained 3-layer NNs of widths 64, 128, 256, 512, 1024, and 2048. For the dataset, we trained a $N=10000$ subset of CIFAR10 ($d=3072$) dataset. 
\item \cref{fig:PS_batch_size}: We trained a width 512, 3-layer NN. For $B=N$ case we used GD, and in other cases, we used random reshuffling SGD (batch size 125, 250, and 500). Both are trained with learning rates 2/1000, 2/800, 2/600, 2/400, and 2/200. For the dataset, we trained on full CIFAR10 ($d=3072$) dataset. 
\end{itemize}
For \Cref{fig:PS_various_factors} and GF curve of \Cref{fig:PS_GD_GF}, we used Runge-Kutta 4 algorithm with the adaptive step size of $\frac{1.0}{S(\theta)}$, where $S(\theta)$ is measured once every 50 iterations. Note that our choice of experimental setting of GF followed Appendix I.5 of \citet{cohen2021gradient}.
All images have been normalized and standardized on a per-channel (RGB) basis. 

\subsection{Data Processing Details on Google Speech Commands}

We provide details on the preprocessing of the Google Speech Commands dataset \citep{speechcommandsv2} in \cref{subsec:plot_many} and \cref{subsec:correlation_many}.

We filtered the audio with equal or more than 1 second duration. These audio files are cropped to 1 second, and then converted to mel-spectrogram with the following code.

\begin{lstlisting}{language=Python}
mel = librosa.feature.melspectrogram(y=audio_array, sr=16000, n_mels=64)
mel_db = librosa.power_to_db(mel, ref=np.max)
\end{lstlisting}

Then, mel-spectrograms were normalized and standardized globally. 

\subsection{Experimental Details for \cref{fig:EoS_plots}}
\label{sec:exp_detail_EoS}

For minimal model training, we sampled $X, y$ from pytorch normal random as follows:

\begin{align*}
X = \begin{bmatrix}
1.54099607 & -0.2934289 \\
-2.17878938 &  0.56843126
\end{bmatrix}, y=\begin{bmatrix} -1.08452237 \\ -1.39859545 \end{bmatrix}
\end{align*}

for initialization, $u^{(0)} = \begin{bmatrix} 0.01 & 0.01 \end{bmatrix}, v_1^{(0)} = \begin{bmatrix} 0.01 \end{bmatrix}$.
For GD learning rate we used $\eta = 2/50$.

For real-world experiments, we used the same transformer model architecture provided in \citet{damian2023selfstabilization}, which is a 2-layer Transformer with hidden dimension 64, and two attention heads. We used GELU activation, and $N=100$ samples of SST2 with binary MSE loss, using GD learning rate $\eta = 2/400$.

\subsection{Experiments \cref{fig:PS_various_factors}–\cref{fig:PS_min_various_factors} on SVHN and Google Speech Commands Dataset}
\label{subsec:plot_many}

We present experimental results on SVHN and Google Speech datasets that mirror \cref{fig:PS_various_factors}–\cref{fig:PS_min_various_factors}. In particular, \cref{fig:PS_various_factors_SVHN}–\cref{fig:PS_min_various_factors_SVHN} reproduce our results in the main text, while \cref{fig:PS_various_factors_SPEECH}–\cref{fig:PS_min_various_factors_SPEECH} do so for the speech data. All experimental settings are consistent across datasets, with only the dataset itself varying.

These confirm that the observed trends in \cref{fig:PS_various_factors}–\cref{fig:PS_min_various_factors} hold across CIFAR-10, SVHN, and Google speech dataset.

\clearpage

\subsubsection{Experiments on SVHN}

\begin{figure*}[h!]
    \centering
    \begin{minipage}{0.3
    \textwidth}
        \centering
        \includegraphics[width=\textwidth]{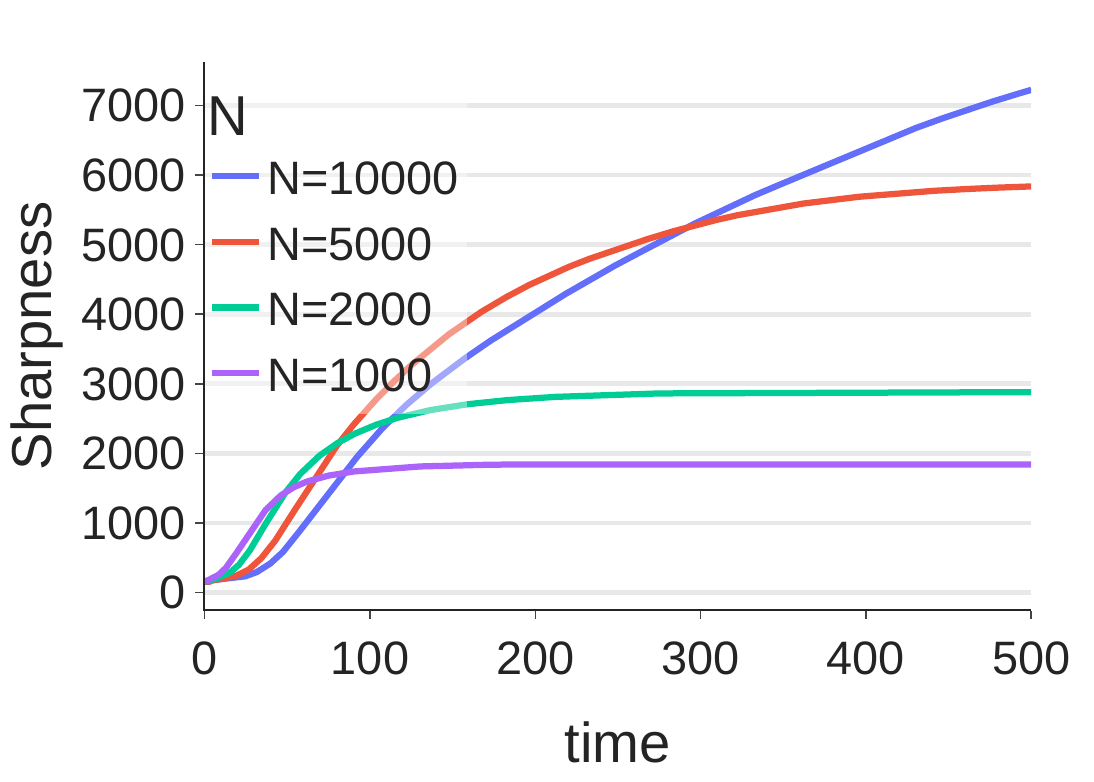} % 이미지 파일 경로
        \subcaption{Dataset size}
        \label{fig:PS_data_SVHN}
    \end{minipage}
    \begin{minipage}{0.3\textwidth}
        \centering
        \includegraphics[width=\textwidth]{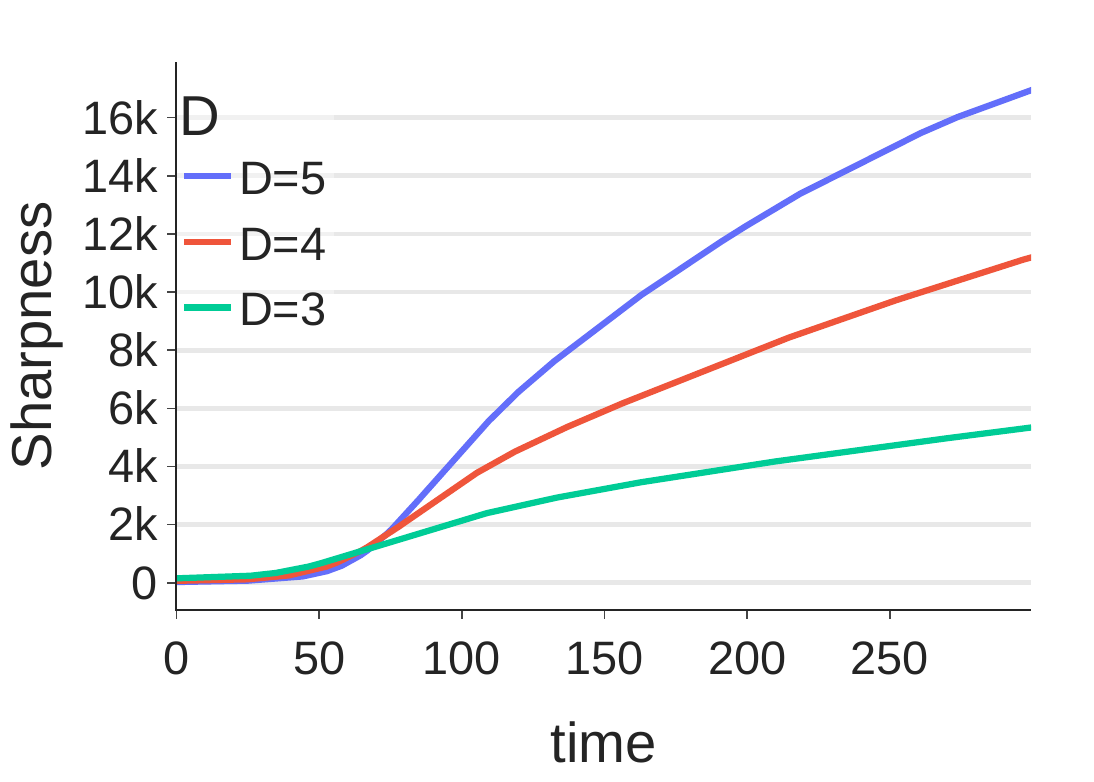} % 이미지 파일 경로
        \subcaption{Network depth}
        \label{fig:PS_depth_SVHN}
    \end{minipage}
    \begin{minipage}{0.3\textwidth}
        \centering
        \includegraphics[width=\textwidth]{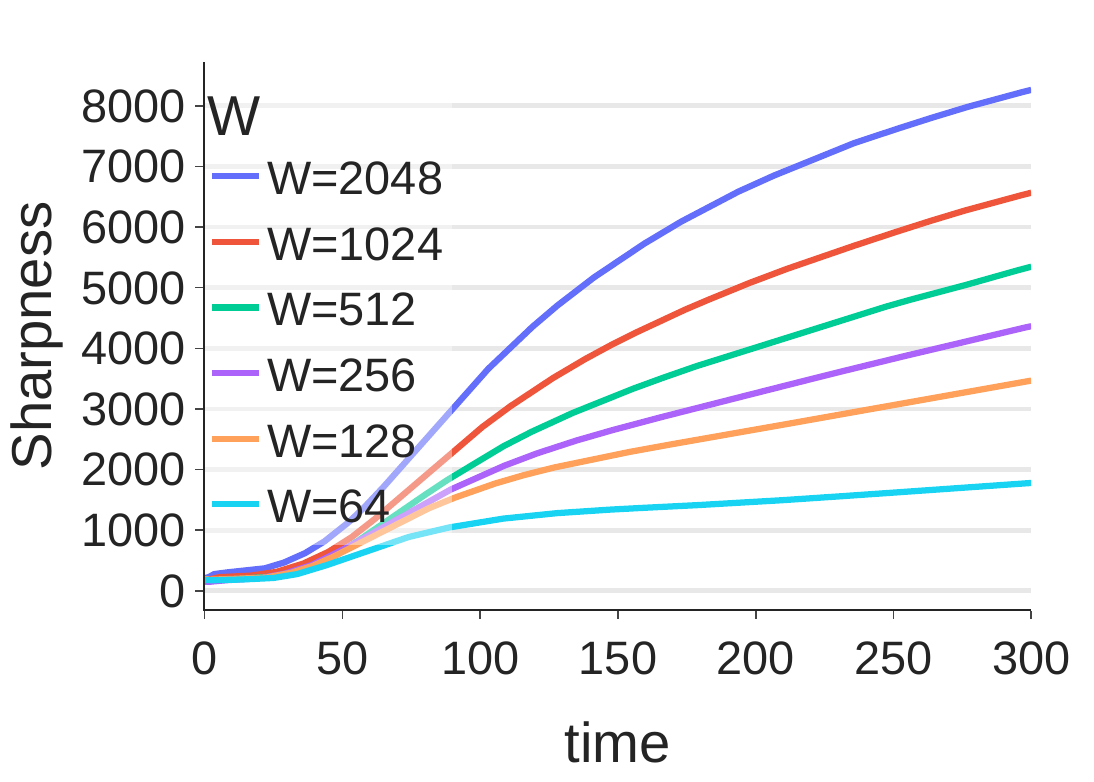} % 이미지 파일 경로
        \subcaption{Network width}
        \label{fig:PS_width_SVHN}
    \end{minipage}
    
    \caption{Effect of dataset size, network depth, and network width of tanh NN, SVHN dataset}
    \label{fig:PS_various_factors_SVHN}
\end{figure*}

\begin{figure*}[h!]
\centering
    % 첫 번째 이미지
    \includegraphics[width=\textwidth]{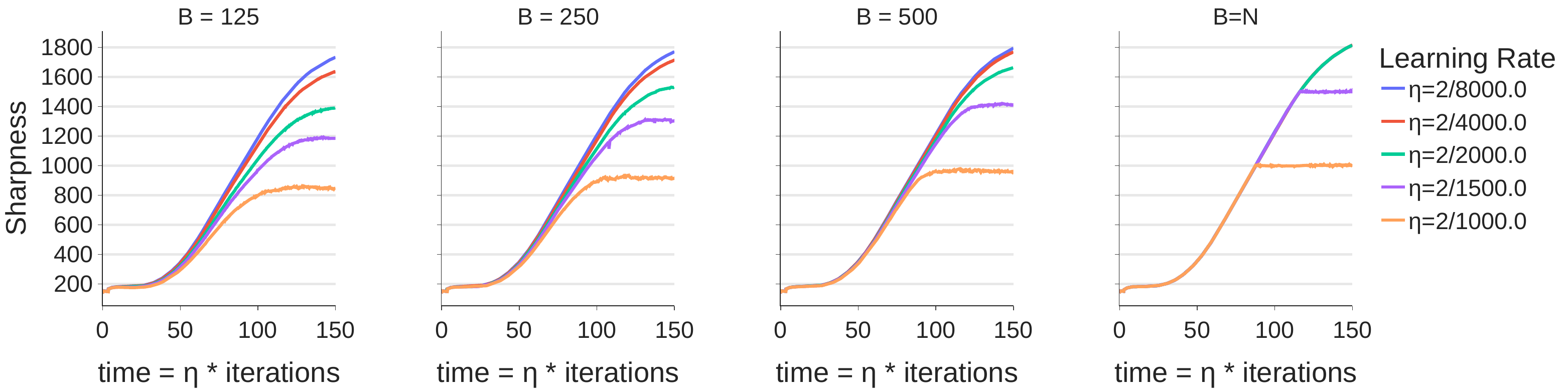} % 이미지 파일 경로
    
    \caption{Effect of batch size, and learning rate in SGD and GD of tanh NN, SVHN dataset.}
    \label{fig:PS_batch_size_SVHN}
\end{figure*}

\begin{figure*}[h!]
\centering
    % 첫 번째 이미지
    \includegraphics[width=\textwidth]{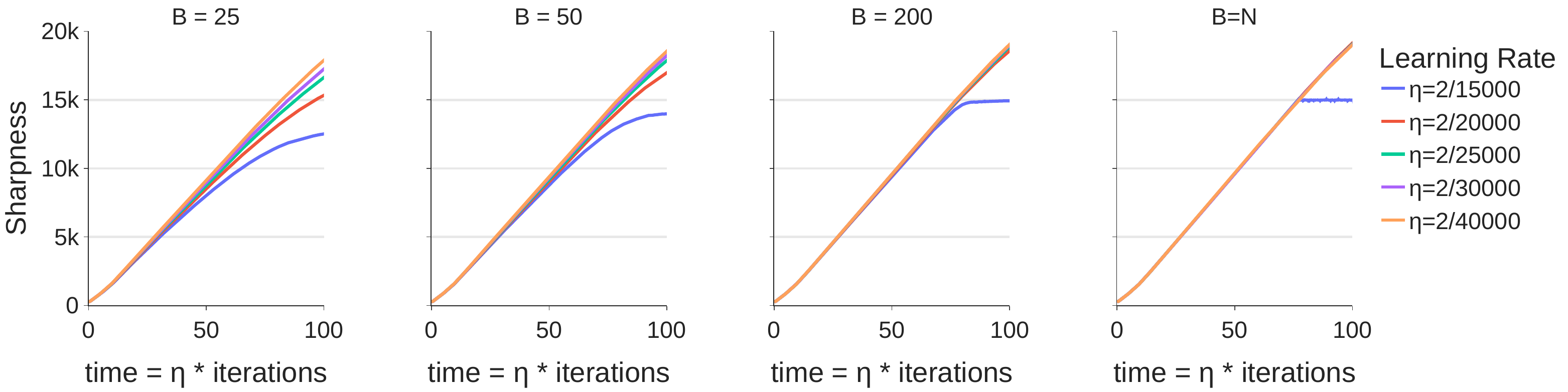} % 이미지 파일 경로
    
    \caption{Effect of batch size, and learning rate in SGD and GD of minimal model, SVHN dataset.}
    \label{fig:PS_batch_size_minimal_SVHN}
\end{figure*}

\begin{figure*}[h!]
    \centering
    \begin{minipage}{0.25
    \textwidth}
        \centering
        \includegraphics[width=\textwidth]{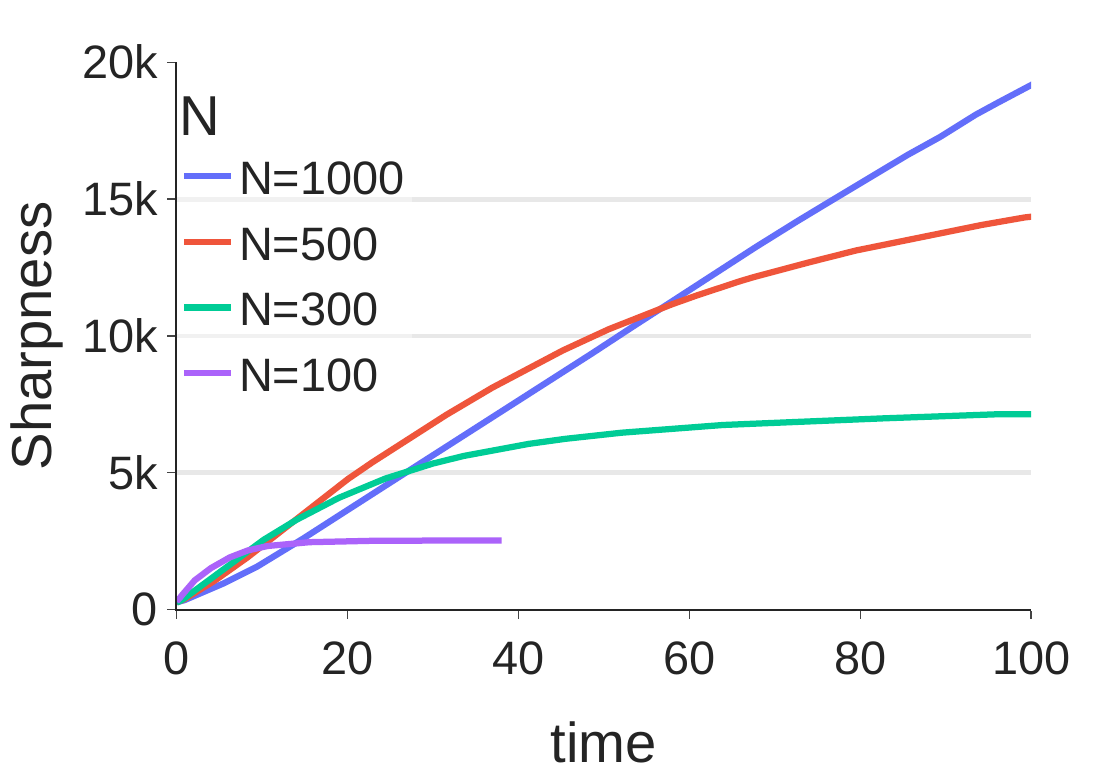} % 이미지 파일 경로
        \subcaption{Dataset size}
        \label{fig:PS_data_minimal_SVHN}
    \end{minipage}
    \begin{minipage}{0.25\textwidth}
        \centering
        \includegraphics[width=\textwidth]{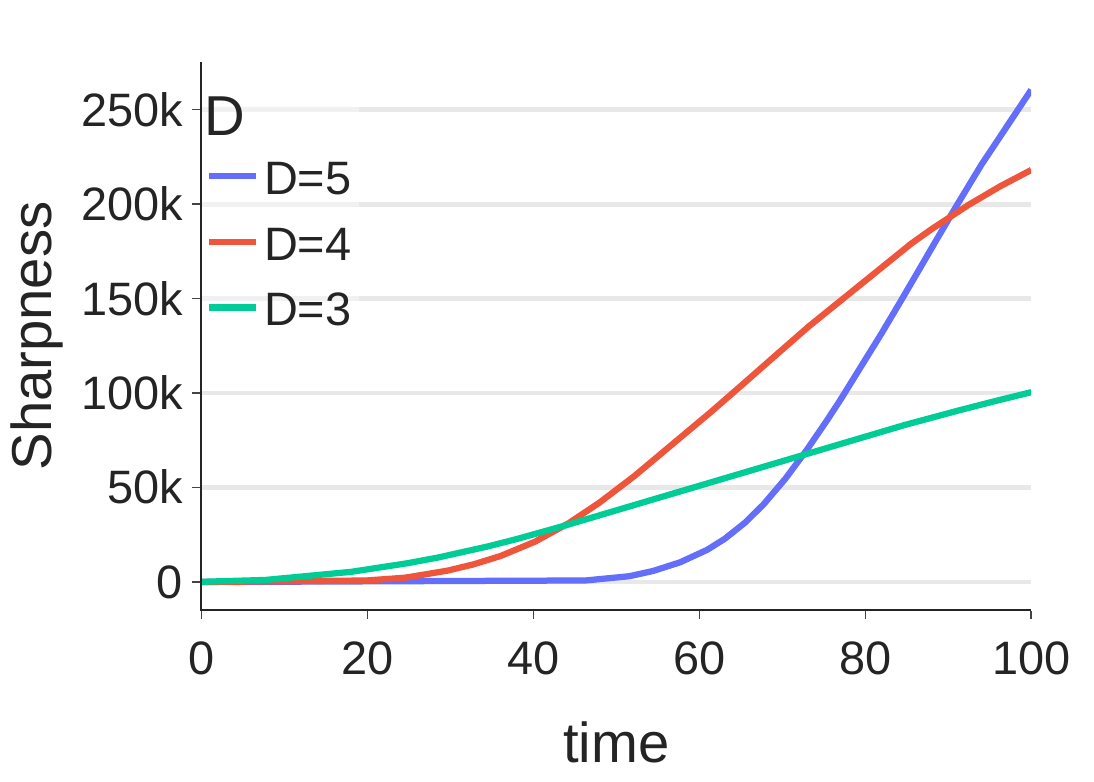} % 이미지 파일 경로
        \subcaption{Network depth}
        \label{fig:PS_depth_minimal_SVHN}
    \end{minipage}    
    \caption{Effect of dataset size, network depth of minimal model, SVHN dataset}
    \label{fig:PS_min_various_factors_SVHN}
\end{figure*}

\clearpage

\subsubsection{Experiments on Google Speech Commands}

\begin{figure*}[h!]
    \centering
    \begin{minipage}{0.3
    \textwidth}
        \centering
        \includegraphics[width=\textwidth]{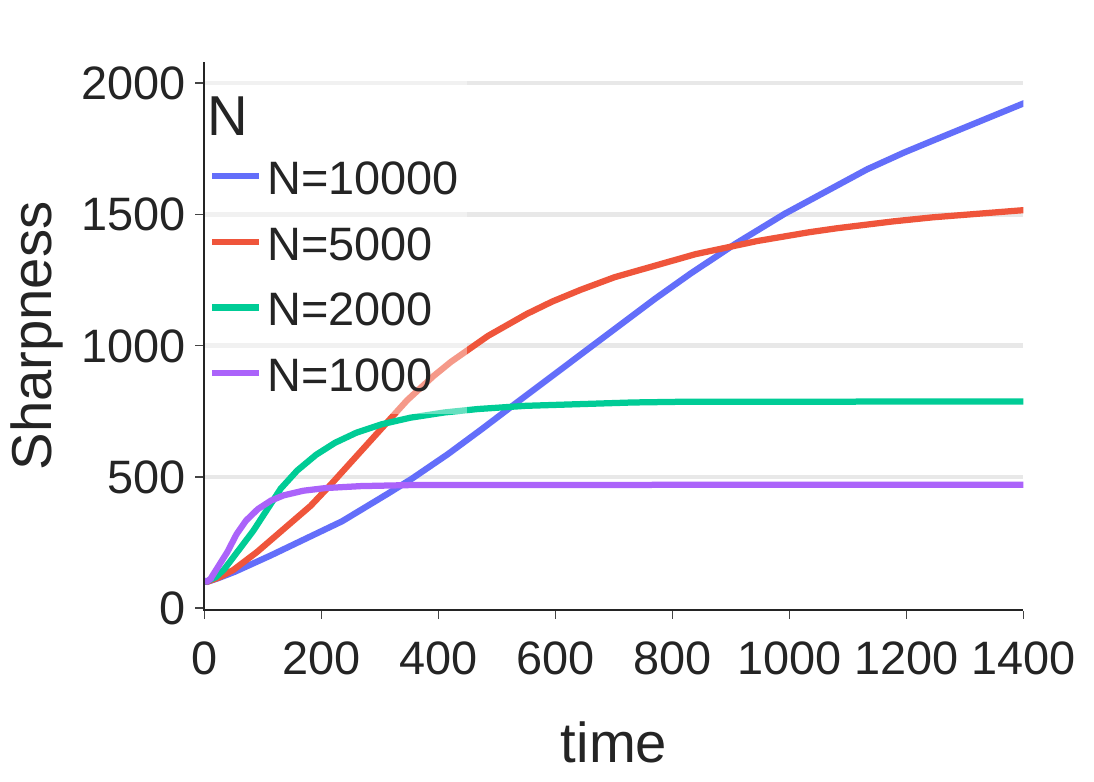} % 이미지 파일 경로
        \subcaption{Dataset size}
        \label{fig:PS_data_SPEECH}
    \end{minipage}
    \begin{minipage}{0.3\textwidth}
        \centering
        \includegraphics[width=\textwidth]{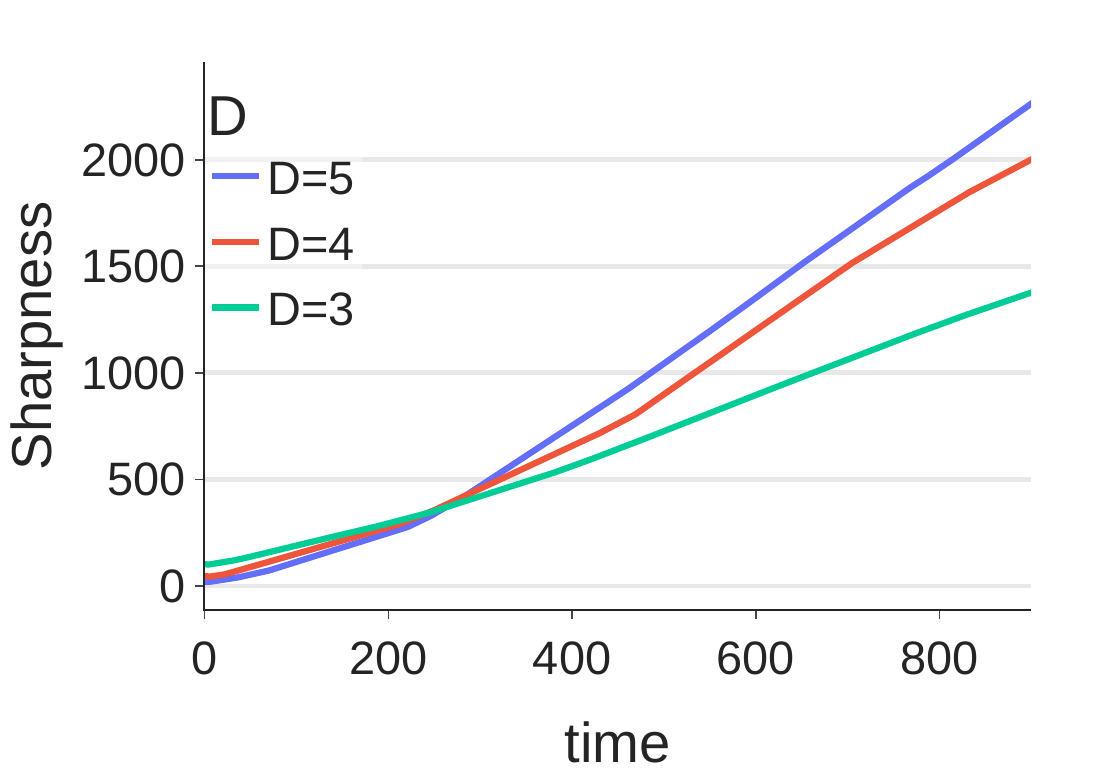} % 이미지 파일 경로
        \subcaption{Network depth}
        \label{fig:PS_depth_SPEECH}
    \end{minipage}
    \begin{minipage}{0.3\textwidth}
        \centering
        \includegraphics[width=\textwidth]{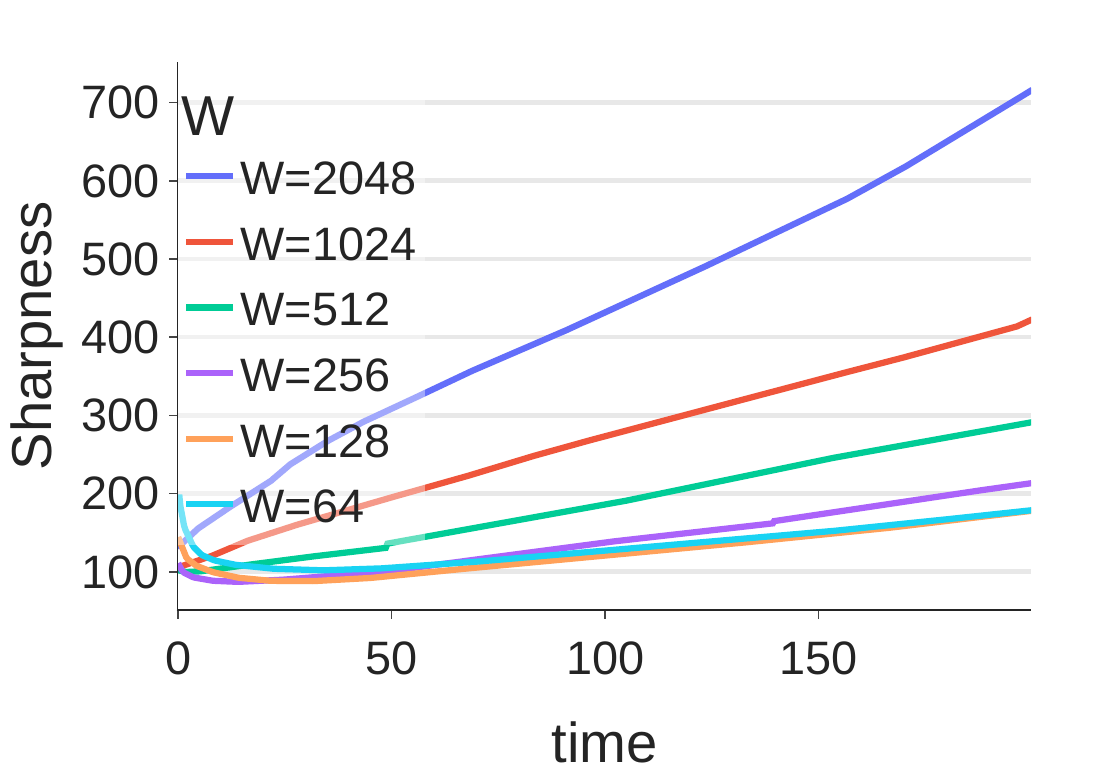} % 이미지 파일 경로
        \subcaption{Network width}
        \label{fig:PS_width_SPEECH}
    \end{minipage}
    
    \caption{Effect of dataset size, network depth, and network width of tanh NN, Google speech commands dataset}
    \label{fig:PS_various_factors_SPEECH}
\end{figure*}

\begin{figure*}[h!]
\centering
    % 첫 번째 이미지
    \includegraphics[width=\textwidth]{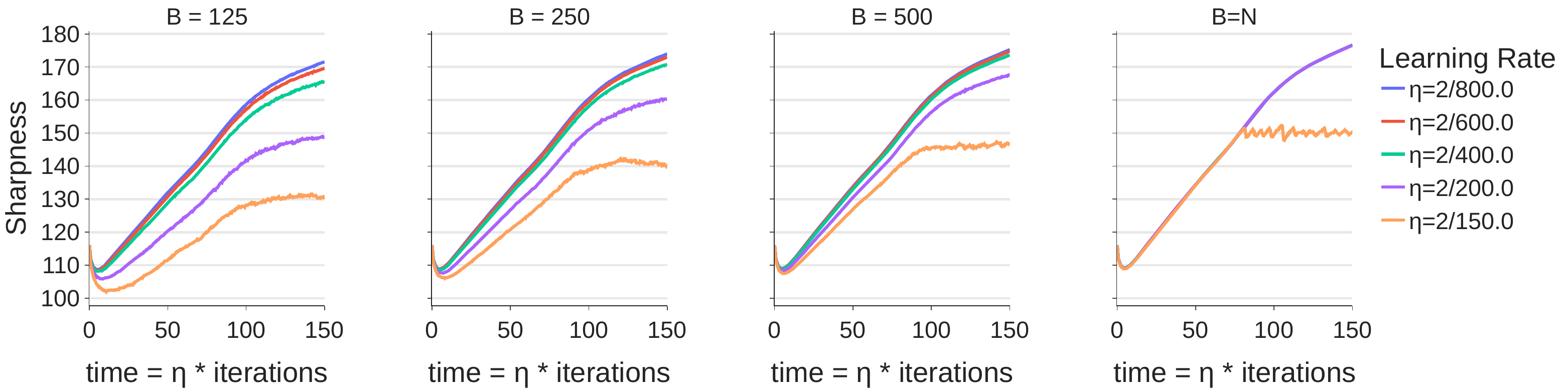} % 이미지 파일 경로
    
    \caption{Effect of batch size, and learning rate in SGD and GD of tanh NN, Google speech commands dataset}
    \label{fig:PS_batch_size_SPEECH}
\end{figure*}

\begin{figure*}[h!]
\centering
    % 첫 번째 이미지
    \includegraphics[width=\textwidth]{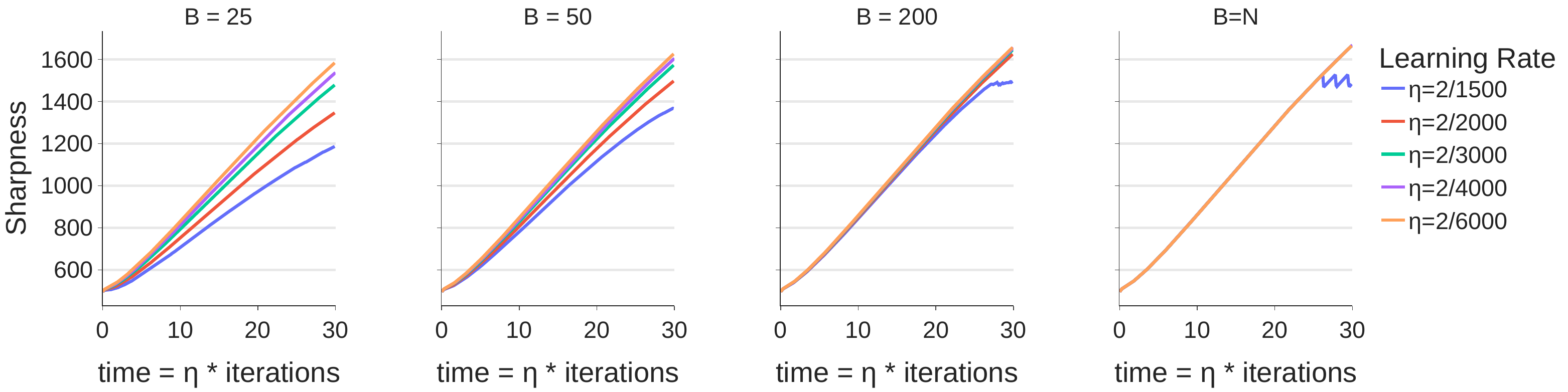} % 이미지 파일 경로
    
    \caption{Effect of batch size, and learning rate in SGD and GD of minimal model, 2-label subsets of Google speech commands dataset (yes vs no)}
    \label{fig:PS_batch_size_minimal_SPEECH}
\end{figure*}

\begin{figure*}[h!]
    \centering
    \begin{minipage}{0.25
    \textwidth}
        \centering
        \includegraphics[width=\textwidth]{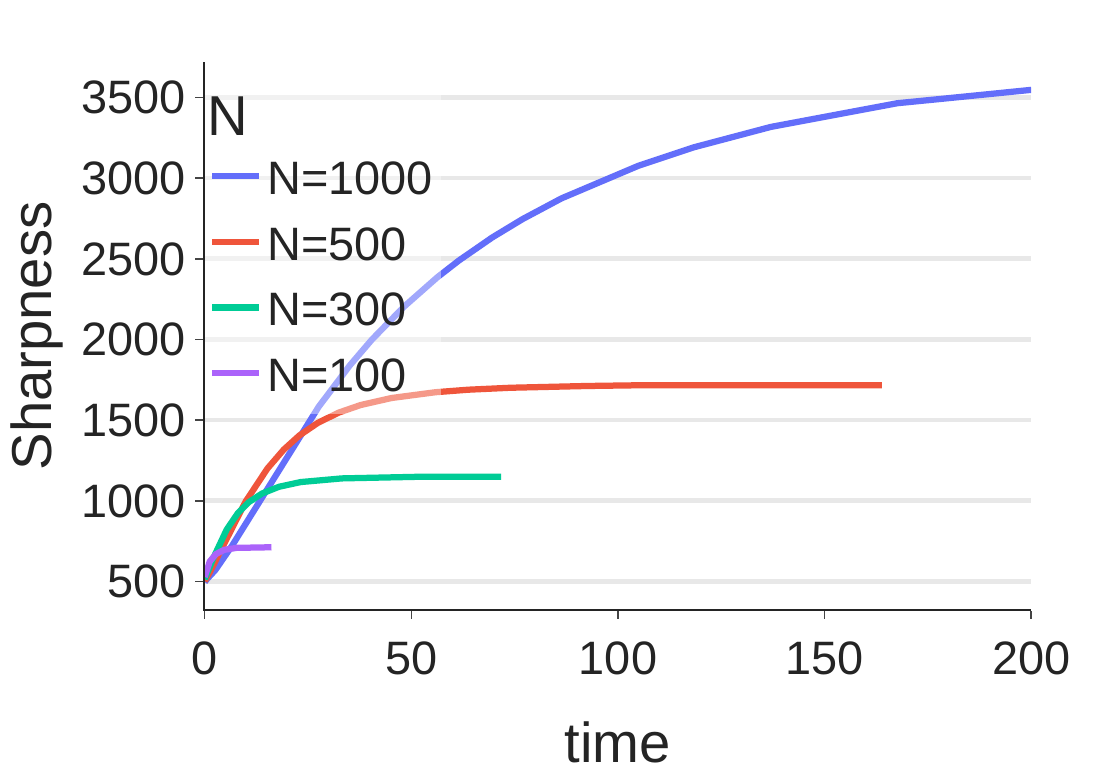} % 이미지 파일 경로
        \subcaption{Dataset size}
        \label{fig:PS_data_minimal_SPEECH}
    \end{minipage}
    \begin{minipage}{0.25\textwidth}
        \centering
        \includegraphics[width=\textwidth]{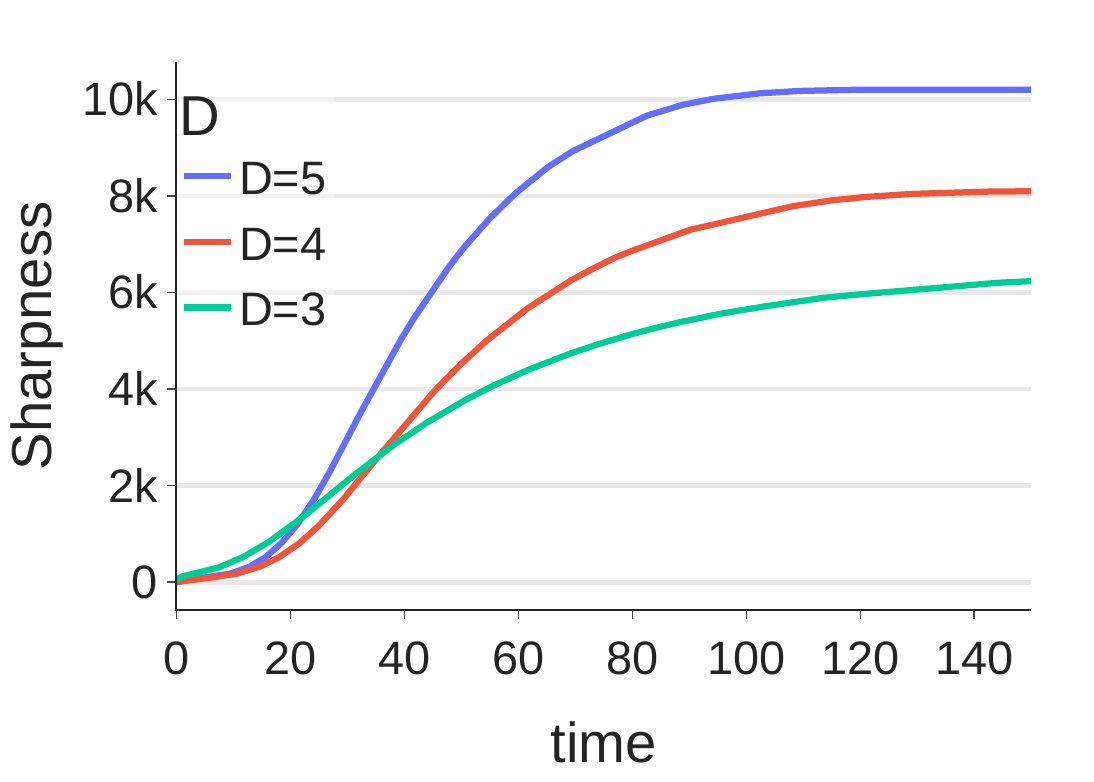} % 이미지 파일 경로
        \subcaption{Network depth}
        \label{fig:PS_depth_minimal_SPEECH}
    \end{minipage}    
    \caption{Effect of dataset size, network depth of minimal model, 2-label subsets of google speech commands dataset (yes vs no)}
    \label{fig:PS_min_various_factors_SPEECH}
\end{figure*}

\clearpage

\subsection{More Detailed Experimental Results for \cref{fig:correlation_main} and \cref{fig:depth_vs_sharpness}}
\label{subsec:correlation_many}

We present more detailed experiments omitted from \cref{fig:correlation_main} and \cref{fig:depth_vs_sharpness}.

\subsubsection{Experiments on 2-label (cat vs dog) Subsets of CIFAR10}

\cref{tab:exp_GF} shows values of correlation between empirical $S(\theta(\infty))$ vs predicted sharpness $\hat S_D$ and average empirical $S(\theta(\infty))$ for 2-label subsets of CIFAR10 with $N = 100$ and $N=300$.

\begin{table*}[h]
    \caption{Detailed results of GF experiments on 2-label subsets of CIFAR10.}
    %\label{tab:gfcifar10}
    \vspace{0.1in}
    \centering
    \begin{subtable}[h]{0.9\textwidth}

        \subcaption{$N=100$, correlation of empirical $S(\theta(\infty))$ vs predicted sharpness $\hat S_D$}
        \begin{tabular}{|c|c|c|c|c|c|c|c|c|c|c|c|c|c|c|}
            \hline
            activation & \multicolumn{3}{|c|}{identity} & \multicolumn{3}{|c|}{tanh}  & \multicolumn{3}{|c|}{SiLU} & \multicolumn{3}{|c|}{ELU} \\ 
             \hline 
             width & 512 & 1024 & 2048 & 512 & 1024 & 2048 & 512 & 1024 & 2048 & 512 & 1024 & 2048\\ 
            \hline 
            depth=3 	 & 0.99  & 0.99  & 0.99  & 0.90  & 0.93  & 0.95  & 0.90  & 0.90  & 0.94  & 0.96  & 0.96  & 0.97 \\ 
            depth=4 	 & 0.99  & 0.99  & 0.99  & 0.89  & 0.93  & 0.95  & 0.84  & 0.89  & 0.92  & 0.95  & 0.96  & 0.97 \\ 
            depth=5 	 & 1.00  & 0.99  & 0.99  & 0.89  & 0.93  & 0.95  & 0.83  & 0.87  & 0.91  & 0.95  & 0.96  & 0.97 \\ 
            \hline 
        \end{tabular}
        \label{subtab:N100_corr}
    \end{subtable}
    \vspace{0.1in}

    \centering
    \begin{subtable}[h]{0.9\textwidth}
        \subcaption{$N=300$, correlation of empirical $S(\theta(\infty))$ vs predicted sharpness $\hat S_D$}
        \begin{tabular}{|c|c|c|c|c|c|c|c|c|c|c|c|c|c|c|}
            \hline
            activation & \multicolumn{3}{|c|}{identity} & \multicolumn{3}{|c|}{tanh}  & \multicolumn{3}{|c|}{SiLU} & \multicolumn{3}{|c|}{ELU} \\ 
             \hline 
             width & 512 & 1024 & 2048 & 512 & 1024 & 2048 & 512 & 1024 & 2048 & 512 & 1024 & 2048\\ 
            \hline 
            depth=3 	 & 0.99  & 0.99  & 0.99  & 0.74  & 0.76  & 0.83  & 0.86  & 0.86  & 0.87  & 0.85  & 0.89  & 0.91 \\ 
            depth=4 	 & 0.99  & 0.99  & 0.99  & 0.73  & 0.75  & 0.85  & 0.85  & 0.84  & 0.86  & 0.85  & 0.89  & 0.90 \\ 
            depth=5 	 & 0.99  & 0.99  & 0.99  & 0.80  & 0.78  & 0.84  & 0.80  & 0.82  & 0.85  & 0.87  & 0.89  & 0.90 \\ 
            \hline 
            \end{tabular}
        \label{subtab:N300_corr}
    \end{subtable}
    \vspace{0.1in}

    \centering
    \begin{subtable}[h]{0.9\textwidth}
    
        \subcaption{$N=100$, average empirical $S(\theta(\infty))$}
        \begin{tabular}{|c|c|c|c|c|c|c|c|c|c|c|c|c|c|c|}
        \hline
        activation & \multicolumn{3}{|c|}{identity} & \multicolumn{3}{|c|}{tanh}  & \multicolumn{3}{|c|}{SiLU} & \multicolumn{3}{|c|}{ELU} \\ 
         \hline 
         width & 512 & 1024 & 2048 & 512 & 1024 & 2048 & 512 & 1024 & 2048 & 512 & 1024 & 2048\\ 
        \hline 
        depth=3 	 & 495  & 518  & 566  & 248  & 266  & 293  & 97  & 97  & 101  & 266  & 283  & 313 \\ 
        depth=4 	 & 478  & 493  & 527  & 236  & 250  & 270  & 73  & 71  & 72  & 240  & 254  & 273 \\ 
        depth=5 	 & 468  & 479  & 505  & 231  & 243  & 259  & 62  & 59  & 58  & 228  & 239  & 255 \\ 
        \hline 
        \end{tabular}
        \label{subtab:sharpness_N100}
    \end{subtable}
    \vspace{0.1in}

    \centering
    \begin{subtable}[h]{0.9\textwidth}
    
        \subcaption{$N=300$, average empirical $S(\theta(\infty))$}
        \begin{tabular}{|c|c|c|c|c|c|c|c|c|c|c|c|c|c|c|}
        \hline
        activation & \multicolumn{3}{|c|}{identity} & \multicolumn{3}{|c|}{tanh}  & \multicolumn{3}{|c|}{SiLU} & \multicolumn{3}{|c|}{ELU} \\ 
         \hline 
         width & 512 & 1024 & 2048 & 512 & 1024 & 2048 & 512 & 1024 & 2048 & 512 & 1024 & 2048\\ 
        \hline 
        depth=3 	 & 1471  & 1502  & 1564  & 510  & 552  & 615  & 236  & 234  & 239  & 603  & 640  & 690 \\ 
        depth=4 	 & 1701  & 1730  & 1791  & 573  & 620  & 680  & 215  & 204  & 204  & 647  & 683  & 724 \\ 
        depth=5 	 & 1871  & 1897  & 1961  & 624  & 670  & 733  & 203  & 188  & 183  & 686  & 721  & 763 \\ 
        \hline 
        \end{tabular}
        \label{subtab:sharpness_N300}
    \end{subtable}
    
    \label{tab:exp_GF}
\end{table*}

\clearpage

\subsubsection{Experiments on 2-label (3 vs 5) Subsets of SVHN}

\cref{tab:exp_GF_svhn} shows values of correlation between empirical $S(\theta(\infty))$ vs predicted sharpness $\hat S_D$ and average empirical $S(\theta(\infty))$ for 2-label subsets of SVHN with $N = 100$. 

\begin{table*}[h!]

    \caption{Detailed results of GF experiments on 2-label subsets of SVHN.}
    \vspace{0.1in}
    \centering
    \begin{subtable}[h]{0.9\textwidth}

        \subcaption{N=100, correlation of empirical $S(\theta(\infty))$ vs $\hat S_D$}
        
        \begin{tabular}{|c|c|c|c|c|c|c|c|c|c|c|c|c|c|c|}
        \hline
        activation & \multicolumn{3}{|c|}{identity} & \multicolumn{3}{|c|}{tanh}  & \multicolumn{3}{|c|}{SiLU} & \multicolumn{3}{|c|}{ELU} \\ 
         \hline 
         width & 512 & 1024 & 2048 & 512 & 1024 & 2048 & 512 & 1024 & 2048 & 512 & 1024 & 2048\\ 
        \hline 
        depth=3 	 & 1.00  & 1.00  & 1.00  & 0.74  & 0.82  & 0.86  & 0.78  & 0.79  & 0.83  & 0.85  & 0.89  & 0.92 \\ 
        depth=4 	 & 1.00  & 0.99  & 1.00  & 0.78  & 0.81  & 0.86  & 0.72  & 0.76  & 0.78  & 0.86  & 0.87  & 0.91 \\ 
        depth=5 	 & 0.99  & 0.99  & 1.00  & 0.80  & 0.84  & 0.87  & 0.66  & 0.73  & 0.77  & 0.86  & 0.88  & 0.91 \\ 
        \hline 
        \end{tabular}
        \label{subtab:N100_corr_svhn}
    \end{subtable}
    \vspace{0.1in}
    
    \centering
    \begin{subtable}[h]{0.9\textwidth}
    
        \subcaption{N=100, average empirical $S(\theta(\infty))$}
        \begin{tabular}{|c|c|c|c|c|c|c|c|c|c|c|c|c|c|c|}
        \hline
        activation & \multicolumn{3}{|c|}{identity} & \multicolumn{3}{|c|}{tanh}  & \multicolumn{3}{|c|}{SiLU} & \multicolumn{3}{|c|}{ELU} \\ 
         \hline 
         width & 512 & 1024 & 2048 & 512 & 1024 & 2048 & 512 & 1024 & 2048 & 512 & 1024 & 2048\\ 
        \hline 
        depth=3 	 & 2759  & 2823  & 2946  & 1021  & 1086  & 1164  & 497  & 476  & 474  & 1029  & 1115  & 1219 \\ 
        depth=4 	 & 3099  & 3153  & 3263  & 1113  & 1172  & 1241  & 478  & 432  & 408  & 1059  & 1134  & 1218 \\ 
        depth=5 	 & 3352  & 3395  & 3493  & 1179  & 1246  & 1300  & 474  & 408  & 368  & 1089  & 1166  & 1240 \\ 
        \hline 
        \end{tabular}
        \label{subtab:sharpness_N100_svhn}
    \end{subtable}

    \label{tab:exp_GF_svhn}
\end{table*}

\subsubsection{Experiments on 2-label (yes vs no) Subsets of Google Speech Commands}

\cref{tab:exp_GF_speech} shows values of correlation between empirical $S(\theta(\infty))$ vs predicted sharpness $\hat S_D$ and average empirical $S(\theta(\infty))$ for 2-label subsets of Google speech commands with $N = 100$. 

\begin{table*}[h!]

    \caption{Detailed results of GF experiments on 2-label subsets of Google speech commands dataset(yes vs no).}
    \vspace{0.1in}
    \centering
    \begin{subtable}[h]{0.9\textwidth}
    
        \subcaption{N=100, correlation of empirical $S(\theta(\infty))$ vs $\hat S_D$}
        
        \begin{tabular}{|c|c|c|c|c|c|c|c|c|c|c|c|c|c|c|}
            \hline
            activation & \multicolumn{3}{|c|}{identity} & \multicolumn{3}{|c|}{tanh}  & \multicolumn{3}{|c|}{SiLU} & \multicolumn{3}{|c|}{ELU} \\ 
            \hline
            width                  & 512 & 1024 & 2048 & 512 & 1024 & 2048& 512 & 1024 & 2048 & 512 & 1024 & 2048 \\ 
            \hline
            depth=3               & 0.98 &0.98 &0.96 &0.75 &0.87 &0.88 &0.60 &0.68 &0.77 &0.78 &0.87 &0.88  \\
            depth=4               &0.98 &0.98 &0.98 &0.74 &0.86 &0.90 &0.61 &0.64 &0.73 &0.81 &0.85 &0.89  \\
            depth=5               & 0.98 &0.99 &0.98 &0.78 &0.85 &0.89 &0.64 &0.64 &0.71 &0.83 &0.86 &0.89  \\
            \hline
        \end{tabular}
        \label{subtab:N100_corr_speech}
    \end{subtable}
    \vspace{0.1in}
    
    \centering
    \begin{subtable}[h]{0.9\textwidth}
    
        \subcaption{N=100, average empirical $S(\theta(\infty))$}
        \begin{tabular}{|c|c|c|c|c|c|c|c|c|c|c|c|c|c|c|}
            \hline
            activation & \multicolumn{3}{|c|}{identity} & \multicolumn{3}{|c|}{tanh}  & \multicolumn{3}{|c|}{SiLU} & \multicolumn{3}{|c|}{ELU} \\ 
            \hline
            width                  & 512 & 1024 & 2048 & 512 & 1024 & 2048& 512 & 1024 & 2048 & 512 & 1024 & 2048 \\ 
            \hline
            depth=3               & 500 &532 &598 &252 &282 &328 &178 &162 &154 &303 &323 &366  \\
            depth=4               & 491 &514 &561 &238 &265 &301 &151 &133 &119 &278 &295 &325  \\
            depth=5               & 487 &504 &541 &230 &256 &287 &136 &116 &102 &261 &280 &304  \\ 
            \hline
        \end{tabular}
        \label{subtab:sharpness_N100_speech}
    \end{subtable}

    \label{tab:exp_GF_speech}
\end{table*}

\clearpage

\subsection{More Experiments for Table~\ref{tab:Cminimizer}–\ref{tab:sharpness_other_term} on 2-label Subsets of SVHN (3 vs 5) and Google Speech Commands (yes vs no)}
\label{subsec:SVHN_res}

We present more detailed experimental results omitted from \cref{tab:Cminimizer} to \cref{tab:sharpness_other_term}. 
All experiments are carried out using 2-label (3 vs 5, yes vs no) subsets of SVHN and Google speech commands dataset.

\cref{tab:Cminimizer_svhn} shows average values of $\tilde C$ computed for 2-label subsets of SVHN and Google speech commands of different $N$'s, omitted from \cref{sec:minimizer_2layer}. We observe the same drastic growth of $\tilde C$ as $N$ increases.
\begin{table}[h]

    \centering
    \caption{Average $\tilde C$ that minimizes upper bound of \Cref{thm:2layer}, computed from 2-label subsets of SVHN and Google speech commands.}
    \vspace{0.1in}
    \centering
    \begin{tabular}{|c|c|c|c|c|c|}
        \hline
        Dataset & $N\!=\!100$&$N\!=\!300$ & $N=500$ & $N=1000$\\
        \hline
        SVHN & 49.29(11.76) & 897.24(127.75) & 4167.96(537.27) & 36556.80(3015.35) \\
        Google speech commands & 7.10(1.12) & 43.96(4.86) & 119.38(10.01) & 688.18(60.07) \\
        \hline
    \end{tabular}
    \label{tab:Cminimizer_svhn}
\end{table}

\cref{tab:sharpness_proxy_appendix} and \cref{tab:sharpness_other_term_appendix} show values of the two terms that appear in \cref{thm:arbitrary_layer}, calculated using 2-label subsets of SVHN and Google speech commands. We observe the same trend as in \cref{sec:optimizer_gf}: the predicted sharpness $\hat S_D$ dominates the bounds, making it a reliable proxy for the final sharpness.

\begin{table*}[h]
    \caption{Average $\hat S_D = \frac{\sigma_1^2}{N} Q^{\frac{D-1}{D}}$ computed from 2-label subsets of the following datasets}
    \vspace{0.1in}
    \centering
    \begin{subtable}[h!]{0.9\textwidth}
    \centering
        \subcaption{SVHN, with standard deviation in parenthesis}
        \begin{tabular}{|c|c|c|c|c|}
            \hline
            $D$&$N=100$&$N=300$ & $N=500$ & $N=1000$\\
            \hline
            2 & 1936.80(302.71) &8264.30(712.54) &17851.97(1343.15) &52935.36(2685.86) \\
            3 & 1988.28(379.48) &13748.10(1470.64) &38350.34(3640.47) &163272.16(10276.97) \\
            4 & 2015.60(419.40) &17735.64(2084.52) &56218.58(5896.34) &286762.97(19824.50) \\
            5 & 2032.51(443.82) &20665.06(2560.53) &70724.27(7841.86) &402067.19(29295.27) \\
            \hline
        \end{tabular}
        \label{tab:sharpness_proxy_svhn}
    \end{subtable}
    
    \centering
    \vspace{0.1in}
    \begin{subtable}[h!]{0.9\textwidth}
    \centering
        \subcaption{Google speech commands, with standard deviation in parenthesis}
        \begin{tabular}{|c|c|c|c|c|}
            \hline
            $D$&$N=100$&$N=300$ & $N=500$ & $N=1000$\\
            \hline
            2 & 362.74 (34.85) &891.27 (57.97) &1468.31 (73.87) &3500.62 (151.26) \\
            3 & 291.73 (34.35) &971.45 (77.30) &1890.28 (117.04) &6038.67 (345.34) \\
            4 & 261.68 (33.80) &1014.34 (88.64) &2144.92 (145.91) &7931.86 (510.36) \\
            5 & 245.18 (33.38) &1041.02 (96.00) &2313.93 (166.05) &9342.15 (641.56) \\
            \hline
        \end{tabular}
        \label{tab:sharpness_proxy_speech}
    \end{subtable}
    \label{tab:sharpness_proxy_appendix}
\end{table*}

\begin{table*}[h]
    \centering
    \caption{Average $\frac{D-1}{N} \left( \sum_{i=1}^r  d_i^2\right)Q^{-\frac{1}{D}}$ computed from 2-label subsets of the following datasets}
    \vspace{0.1in}
    \begin{subtable}[h!]{0.6\textwidth}
    \centering
        \subcaption{SVHN, with standard deviation in parenthesis}
        \begin{tabular}{|c|c|c|c|c|}
            \hline
            $D$&$N\!=\!100$&$N\!=\!300$ & $N=500$ & $N=1000$\\
            \hline
            2&0.95(0.12) &0.22(0.02) &0.10(0.01) &0.03(0.00) \\
            3&1.93(0.16) &0.73(0.03) &0.44(0.02) &0.21(0.01) \\
            4&2.92(0.18) &1.40(0.05) &0.96(0.03) &0.55(0.01) \\
            5&3.91(0.19) &2.18(0.06) &1.60(0.04) &1.04(0.02) \\
            \hline
        \end{tabular}
        \label{tab:sharpness_other_term_svhn}
    \end{subtable}
    
    \vspace{0.1in}
    \begin{subtable}[h!]{0.6\textwidth}
    \centering
        \subcaption{Google speech commands, with standard deviation in parentheses}
        \begin{tabular}{|c|c|c|c|c|}
            \hline
            $D$&$N\!=\!100$&$N\!=\!300$ & $N=500$ & $N=1000$\\
            \hline
            2&1.94(0.14) &0.78(0.04) &0.47(0.02) &0.20(0.01) \\
            3&3.11(0.15) &1.69(0.06) &1.21(0.03) &0.67(0.02) \\
            4&4.18(0.16) &2.64(0.07) &2.06(0.04) &1.33(0.03) \\
            5&5.21(0.15) &3.61(0.08) &2.96(0.05) &2.08(0.04) \\
            \hline
        \end{tabular}
        \label{tab:sharpness_other_term_speech}
    \end{subtable}
    \label{tab:sharpness_other_term_appendix}
\end{table*}

\clearpage

\subsection{Experiments for \Cref{thm:Ab-Init-Sharpness} and \Cref{thm:Ab-Converge-Sharpness}}
\label{subsec:init-asm-sharpness}
To numerically validate the Theorems \ref{thm:Ab-Init-Sharpness} and \ref{thm:Ab-Converge-Sharpness} and to assess the tightness of our bounds, we report Table \ref{tab:LB_S0}–\ref{tab:GAP_Sstar}, and Figure~\ref{fig:PS_min_normal_seed_CIFAR}--\ref{fig:PS_min_normal_seed_SPEECH}. We use $\alpha^2 = \frac{1}{3d}, \beta^2 = \frac{1}{3}$ so that the initialization matches the variance of PyTorch’s default initialization.  %To bound $\E[S(\theta^\star) - S(\theta^{(0)})]$, we subtract the lower bound of $\E[S(\theta^\star)]$ from the upper bound of $\E[S(\theta^{(0)})]$ (and vice versa for the upper bound).

Table~\ref{tab:LB_S0} reports the lower bound, and Table~\ref{tab:GAP_S0} shows the corresponding gap from Theorem~\ref{thm:Ab-Init-Sharpness}. Table~\ref{tab:GAP_S0} confirms that our theoretical bounds are tight. In Table~\ref{tab:EMP_S0}, we empirically calculated the initial sharpness by sampling 400 random weight initializations for each 2-label subset of datasets, thereby calculating the average and standard deviation of 20000 different sharpness values. A comparison with the values in Table~\ref{tab:LB_S0} reveals agreement between theory and experiment. We note that the standard deviation in Table~\ref{tab:EMP_S0} is relatively large, which is to be expected given the extremely small width of our minimalist model.

Table~\ref{tab:LB_Sstar} reports the lower bound, and Table~\ref{tab:GAP_Sstar} shows the corresponding gap for Theorem~\ref{thm:Ab-Converge-Sharpness}. These numerical results demonstrate that the bound gap decreases, and thus the theoretical bounds become tighter as the dataset size $N$ grows.

To verify that the lower bound in Table~\ref{tab:LB_Sstar} predicts the sharpness at convergence, we performed additional experiments following the setup of Figure~\ref{fig:PS_min_dataset_size}, but using the $\alpha\beta$ initialization with $
\alpha^2=\frac{1}{3d}, \beta^2=\frac{1}{3}$ over three random weight seeds. Although individual runs exhibit variability in sharpness due to the model’s extremely small width, Figures~\ref{fig:PS_min_normal_seed_CIFAR}–\ref{fig:PS_min_normal_seed_SPEECH} show that, for small dataset sizes, the empirical sharpness saturates and its magnitude closely matches the prediction in Table~\ref{tab:LB_Sstar}. For larger $N$ (i.e., $N \geq 500$ for SVHN, and $N=1000$ for CIFAR10 and Google speech commands), convergence to this saturation of sharpness would require more training iterations than are shown in our current plots.

By combining the results in Tables~\ref{tab:LB_S0} and~\ref{tab:LB_Sstar}, we verify that the expected sharpness increases from initialization to convergence.

\clearpage

\begin{table}[h!]
\centering
    \caption{Average lower bound of $\mathbb{E}[S(\theta^{(0)})]$ computed from 50 2-label subsets of datasets, with standard deviation in parenthesis, $\alpha^2 = \frac{1}{3d}, \beta^2=\frac{1}{3}$.}
    \vspace{0.1in}
    \begin{tabular}{|c|c|c|c|c|c|}
        \hline
        dataset &$N\!=\!100$&$N\!=\!300$ & $N=500$ & $N=1000$ \\
        \hline
        CIFAR&262.57(29.72)& 260.42(15.41)& 261.94(12.15)& 261.93(9.16) \\
        SVHN&602.25(39.35)& 600.50(23.20)& 601.97(15.63)& 602.17(12.55) \\
        Google speech commands&233.42(12.69)& 230.01(10.14)& 229.77(6.93)& 227.72(3.80)\\
        \hline
    \end{tabular}
    \label{tab:LB_S0}
\end{table}

\begin{table}[h!]
\centering
    \caption{Average gap between the upper bound and the lower bound of $\mathbb{E}[S(\theta^{(0)})]$ computed from 50 2-label subsets of datasets, with standard deviation in parenthesis, $\alpha^2 = \frac{1}{3d}, \beta^2=\frac{1}{3}$.}
    \vspace{0.1in}
    \begin{tabular}{|c|c|c|c|c|c|}
        \hline
        dataset &$N\!=\!100$&$N\!=\!300$ & $N=500$ & $N=1000$ \\
        \hline
        CIFAR&8.91(1.03)& 7.44(0.47)& 7.18(0.36)& 7.00(0.27) \\
        SVHN&12.54(1.43)& 11.66(0.71)& 11.49(0.42)& 11.33(0.25)  \\
        Google speech commands&11.36(0.58)& 10.88(0.37)& 10.81(0.26)& 10.69(0.16)\\
        \hline
    \end{tabular}
    \label{tab:GAP_S0}
\end{table}

\begin{table}[h]
\centering
    \caption{Average of $S(\theta^{(0)})$ over 50 two-label subset of each dataset. For each subset, $S(\theta^{(0)})$ was computed across 400 random \emph{weight} initializations (20,000 samples total); the standard deviation is shown in parentheses.}
    \vspace{0.1in}
    \begin{tabular}{|c|c|c|c|c|c|}
        \hline
        dataset &$N\!=\!100$&$N\!=\!300$ & $N=500$ & $N=1000$ \\
        \hline
        CIFAR& 266.56 (388.70) & 264.07 (383.16) & 265.37 (384.81) & 265.40 (384.68)\\
        SVHN& 610.09 (886.94) & 608.07 (882.80) & 609.07 (883.88) & 609.38 (884.33)\\
        Google speech commands& 237.50 (342.38) & 233.95 (337.19) & 233.68 (336.58) & 231.59 (333.44)\\
        \hline
    \end{tabular}
    \label{tab:EMP_S0}
\end{table}

\begin{table}[h!]
\centering
    \caption{Average lower bound of $\mathbb{E}[S(\theta^\star)]$ computed from 50 2-label subsets of datasets, with standard deviation in parenthesis, $\alpha^2 = \frac{1}{3d}, \beta^2=\frac{1}{3}$.}
    \vspace{0.1in}
    \begin{tabular}{|c|c|c|c|c|c|}
        \hline
        dataset &$N\!=\!100$&$N\!=\!300$ & $N=500$ & $N=1000$ \\
        \hline
        CIFAR&514.11(75.15)& 1141.41(97.49)& 1964.56(132.45)& 5325.42(298.65) \\
        SVHN&2250.11(316.08)& 8539.61(718.93)& 18105.67(1347.25)& 53138.75(2688.77) \\
        Google speech commands&490.39(38.74)& 994.79(60.39)& 1557.68(75.34)& 3559.35(151.39)\\
        \hline
    \end{tabular}
    \label{tab:LB_Sstar}
\end{table}

\begin{table}[h!]
\centering
    \caption{Average gap between the upper bound and the lower bound of $\mathbb{E}[S(\theta^\star)]$ computed from 50 2-label subsets of datasets, with standard deviation in parenthesis, $\alpha^2 = \frac{1}{3d}, \beta^2=\frac{1}{3}$.}
    \vspace{0.1in}
    \begin{tabular}{|c|c|c|c|c|c|}
        \hline
        dataset &$N\!=\!100$&$N\!=\!300$ & $N=500$ & $N=1000$ \\
        \hline
        CIFAR&44.16(4.46)& 17.15(1.00)& 9.64(0.57)& 3.42(0.15) \\
        SVHN&47.14(5.06)& 11.15(0.81)& 5.19(0.30)& 1.75(0.07)  \\
        Google speech commands&36.04(2.77)& 15.36(1.10)& 9.39(0.48)& 3.89(0.19)\\
        \hline
    \end{tabular}
    \label{tab:GAP_Sstar}
\end{table}

% \begin{table}[h]
% \centering
%     \caption{Average lower bound of $\mathbb{E}[S(\theta^\star) - S(\theta^{(0)})]$ computed from 50 2-label subsets of datasets, with standard deviation in parenthesis.}
%     \vspace{0.1in}
%     \begin{tabular}{|c|c|c|c|c|c|}
%         \hline
%         dataset &$N\!=\!100$&$N\!=\!300$ & $N=500$ & $N=1000$ \\
%         \hline
%         CIFAR&242.63(48.48)& 873.55(84.07)& 1695.44(123.39)& 5056.49(291.65) \\
%         SVHN&1635.32(285.65)& 7927.44(704.12)& 17492.21(1337.07)& 52525.26(2679.96) \\
%         Google speech commands&245.61(30.63)& 753.90(54.87)& 1317.10(71.47)& 3320.94(150.84)\\
%         \hline
%     \end{tabular}
%     \label{tab:LB_PS}
% \end{table}

% \begin{table}[h]
% \centering
%     \caption{Average gap between the upper bound and the lower bound of $\mathbb{E}[S(\theta^\star) - S(\theta^{(0)})]$ computed from 50 2-label subsets of datasets, with standard deviation in parenthesis.}
%     \vspace{0.1in}
%     \begin{tabular}{|c|c|c|c|c|c|}
%         \hline
%         dataset &$N\!=\!100$&$N\!=\!300$ & $N=500$ & $N=1000$ \\
%         \hline
%         CIFAR& 53.07(4.93)& 24.58(1.10)& 16.82(0.71)& 10.42(0.35) \\
%         SVHN&59.68(5.30)& 22.81(1.09)& 16.68(0.50)& 13.07(0.26)  \\
%         Google speech commands&47.40(2.95)& 26.23(1.30)& 20.20(0.62)& 14.58(0.26)\\
%         \hline
%     \end{tabular}
%     \label{tab:GAP_PS}
% \end{table}

\begin{figure*}[h]
    \centering
    \begin{minipage}{0.3
    \textwidth}
        \centering
        \includegraphics[width=\textwidth]{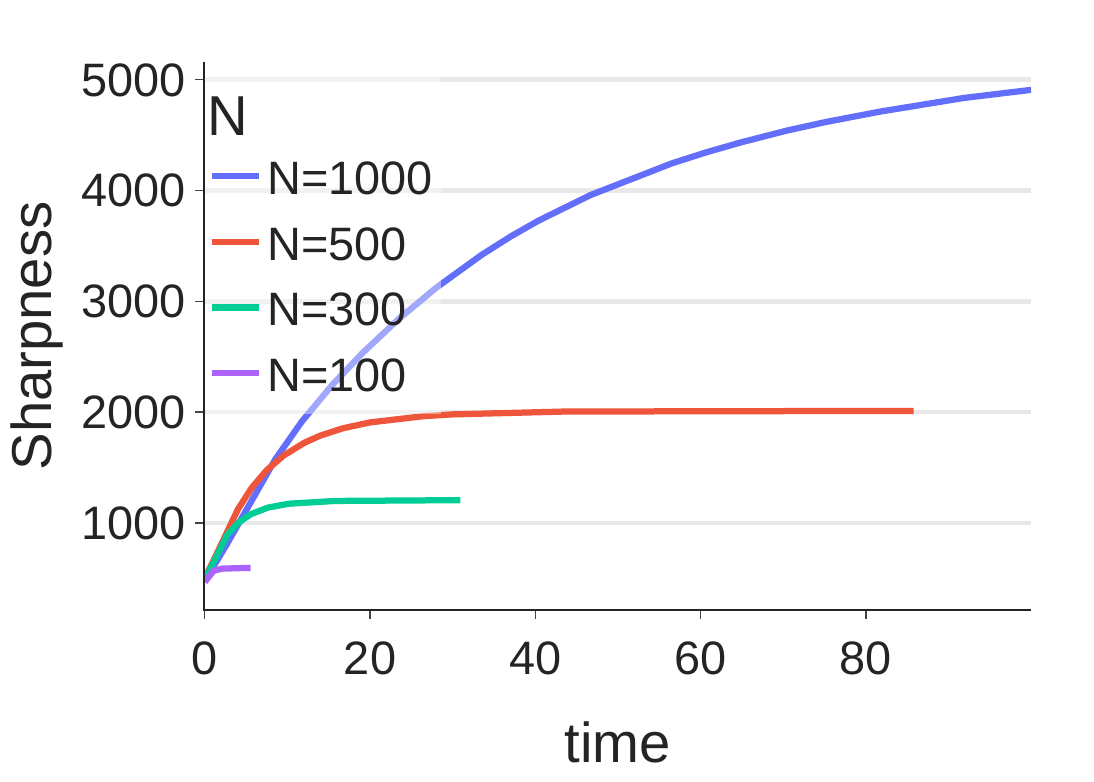} % 이미지 파일 경로
        \subcaption{Seed 1}
        %\label{subfig:PS_seed1_CIFAR_min}
    \end{minipage}
    \begin{minipage}{0.3\textwidth}
        \centering
        \includegraphics[width=\textwidth]{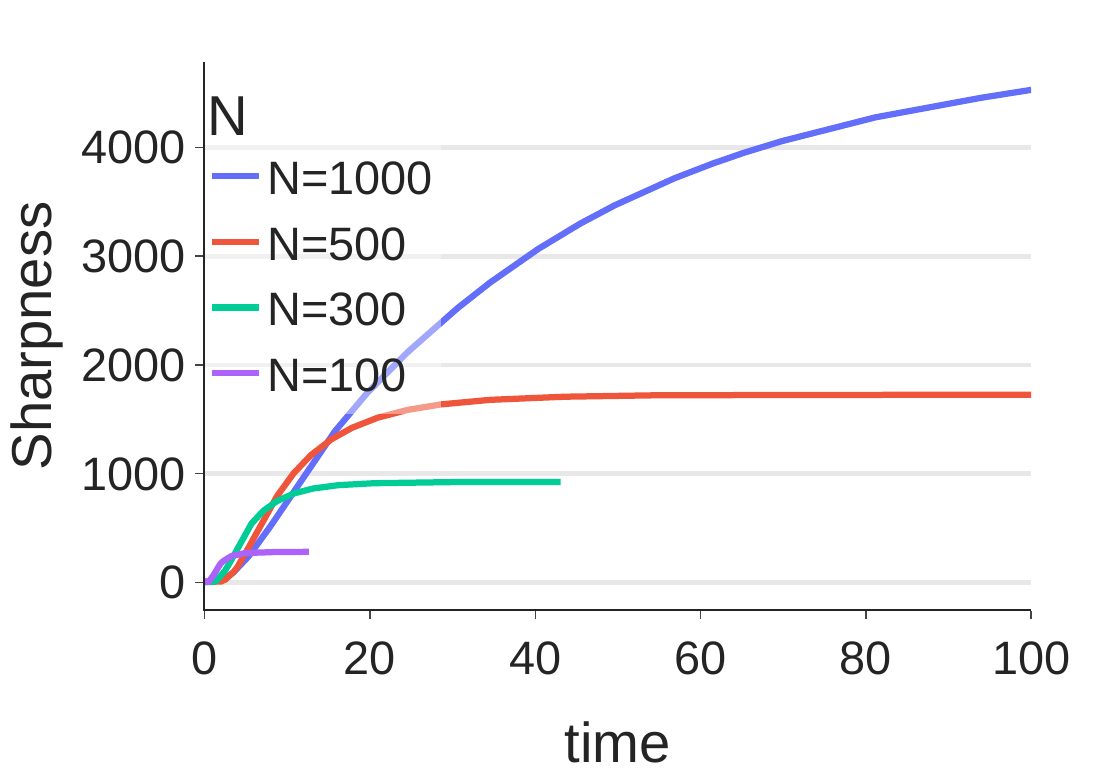} % 이미지 파일 경로
        \subcaption{Seed 2}
        %\label{subfig:PS_seed2_CIFAR_min}
    \end{minipage}
    \begin{minipage}{0.3\textwidth}
        \centering
        \includegraphics[width=\textwidth]{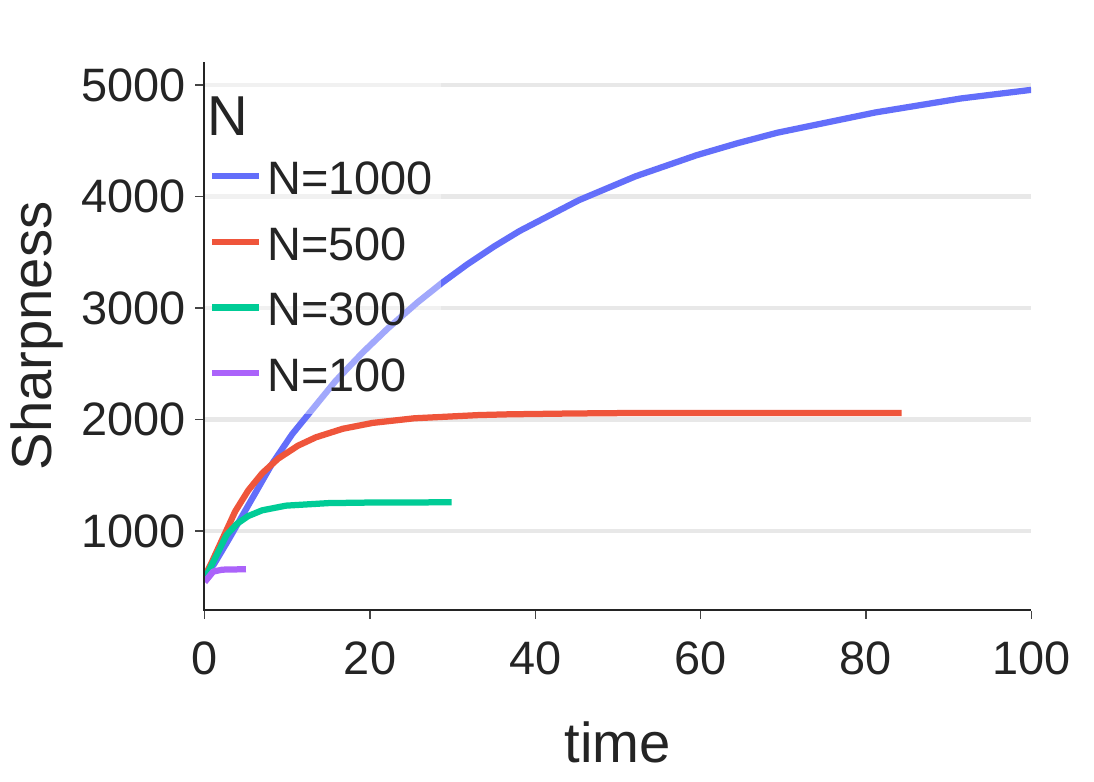} % 이미지 파일 경로
        \subcaption{Seed 3}
        %\label{fig:PS_seed3_CIFAR_min}
    \end{minipage}
    
    \caption{Minimalist model ($D=2$) trained on different random weight seed, normal distribution with $\alpha^2=\frac{1}{3d}, \beta^2 = \frac{1}{3}$  initialized, and the same 2-label subset of CIFAR10.}
    \label{fig:PS_min_normal_seed_CIFAR}
\end{figure*}

\begin{figure*}[h]
    \centering
    \begin{minipage}{0.3
    \textwidth}
        \centering
        \includegraphics[width=\textwidth]{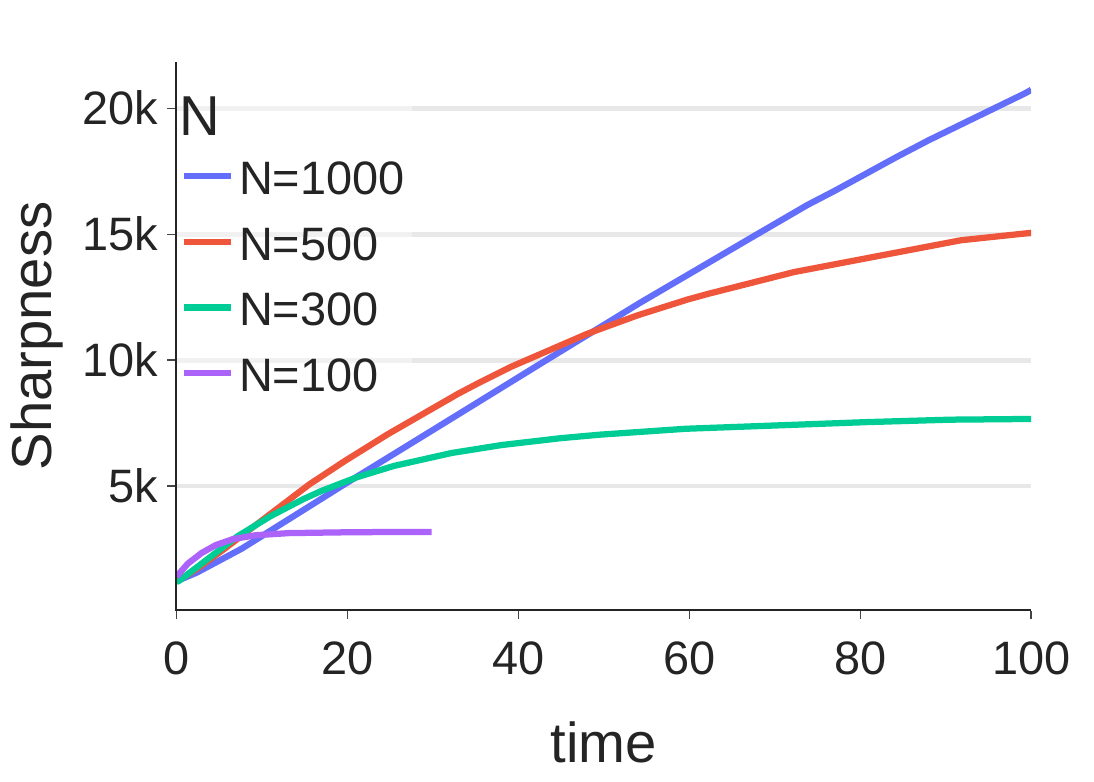} % 이미지 파일 경로
        \subcaption{Seed 1}
        %\label{subfig:PS_seed1_SVHN_min}
    \end{minipage}
    \begin{minipage}{0.3\textwidth}
        \centering
        \includegraphics[width=\textwidth]{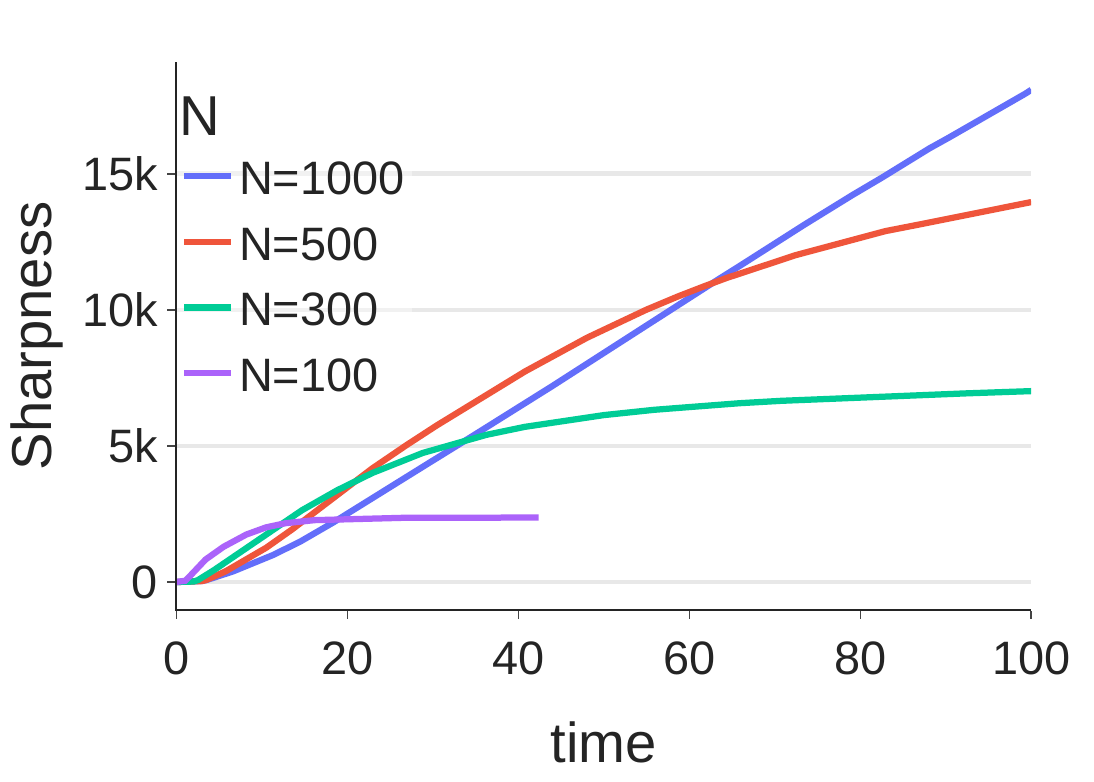} % 이미지 파일 경로
        \subcaption{Seed 2}
        %\label{subfig:PS_seed2_SVHN_min}
    \end{minipage}
    \begin{minipage}{0.3\textwidth}
        \centering
        \includegraphics[width=\textwidth]{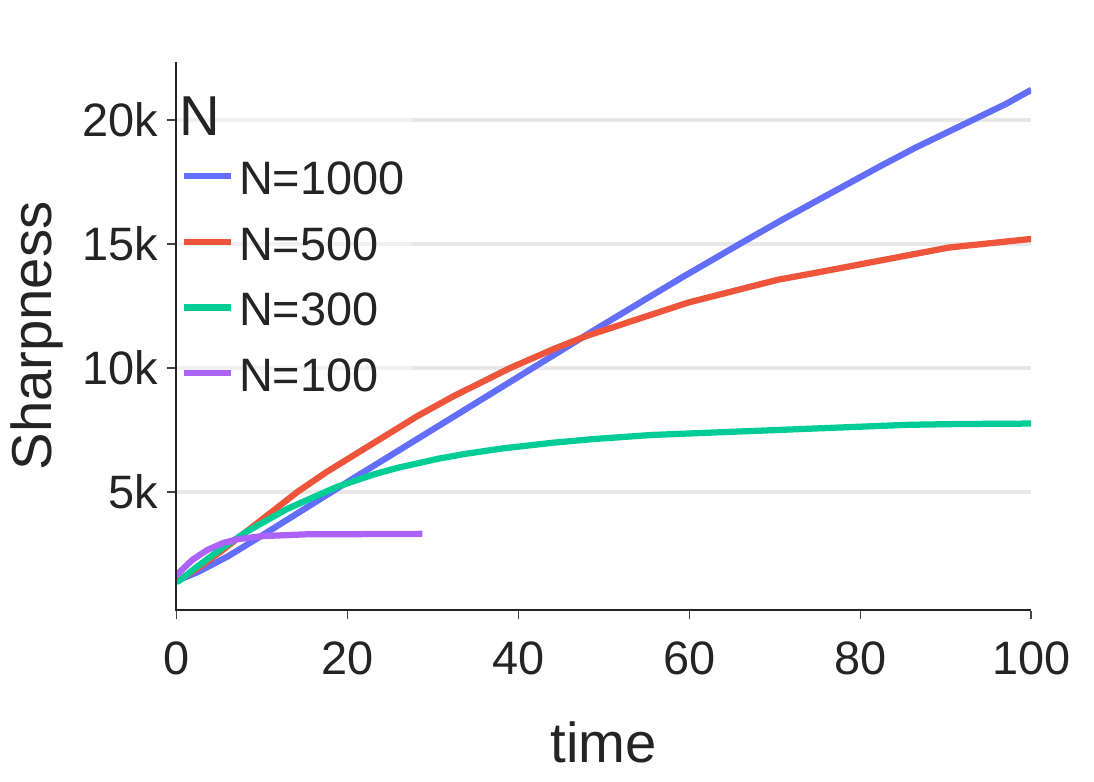} % 이미지 파일 경로
        \subcaption{Seed 3}
        %\label{fig:PS_seed3_SVHN_min}
    \end{minipage}
    
    \caption{Minimalist model ($D=2$) trained on different random weight seed, normal distribution with $\alpha^2=\frac{1}{3d}, \beta^2 = \frac{1}{3}$  initialized, and the same 2-label subset of SVHN.}
    \label{fig:PS_min_normal_seed_SVHN}
\end{figure*}

\begin{figure*}[h]
    \centering
    \begin{minipage}{0.3
    \textwidth}
        \centering
        \includegraphics[width=\textwidth]{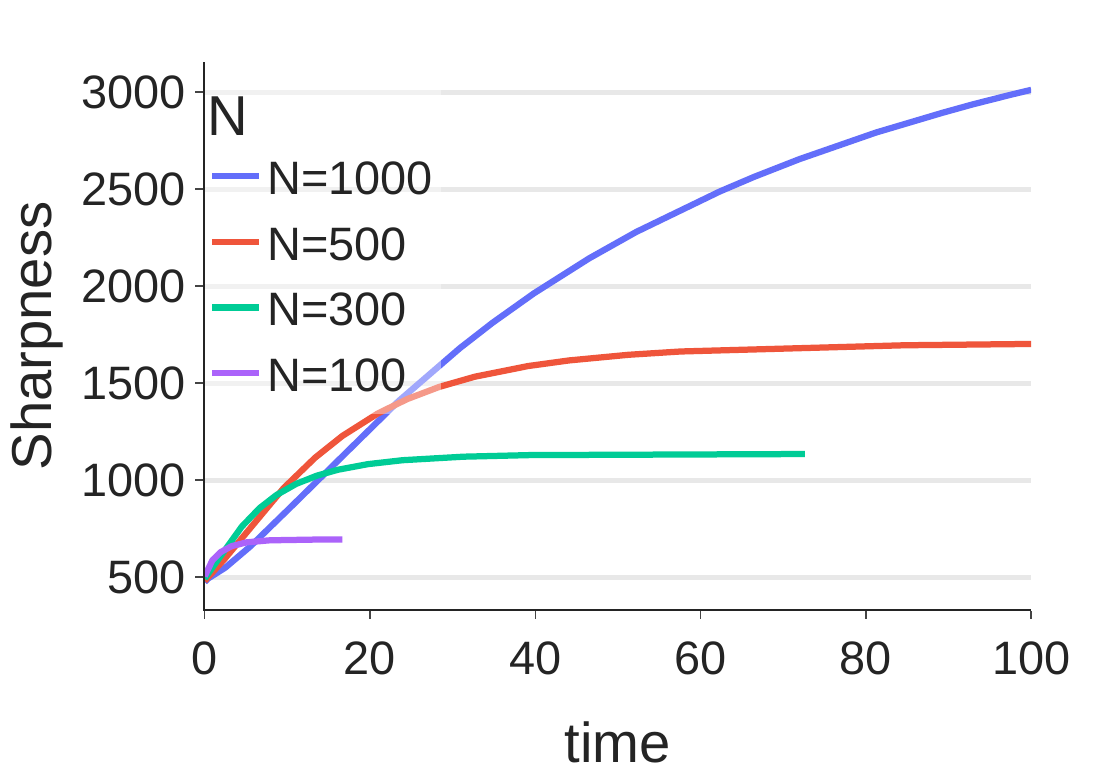} % 이미지 파일 경로
        \subcaption{Seed 1}
        %\label{subfig:PS_seed1_SPEECH_min}
    \end{minipage}
    \begin{minipage}{0.3\textwidth}
        \centering
        \includegraphics[width=\textwidth]{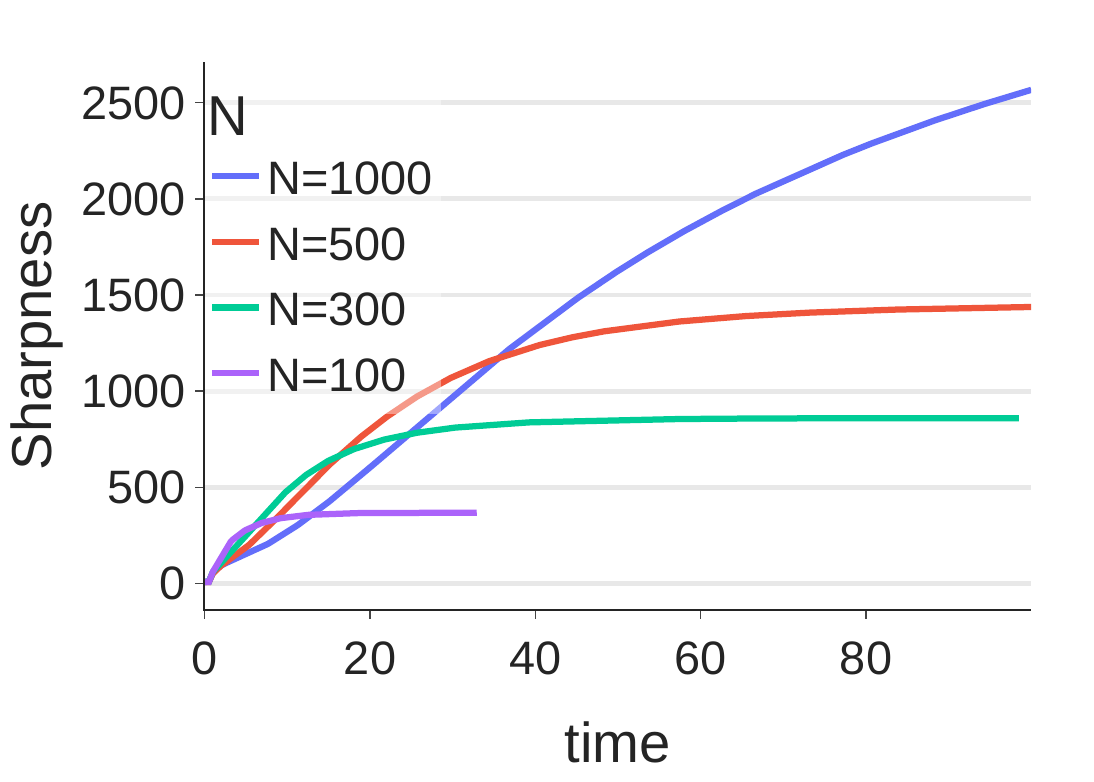} % 이미지 파일 경로
        \subcaption{Seed 2}
        %\label{subfig:PS_seed2_SPEECH_min}
    \end{minipage}
    \begin{minipage}{0.3\textwidth}
        \centering
        \includegraphics[width=\textwidth]{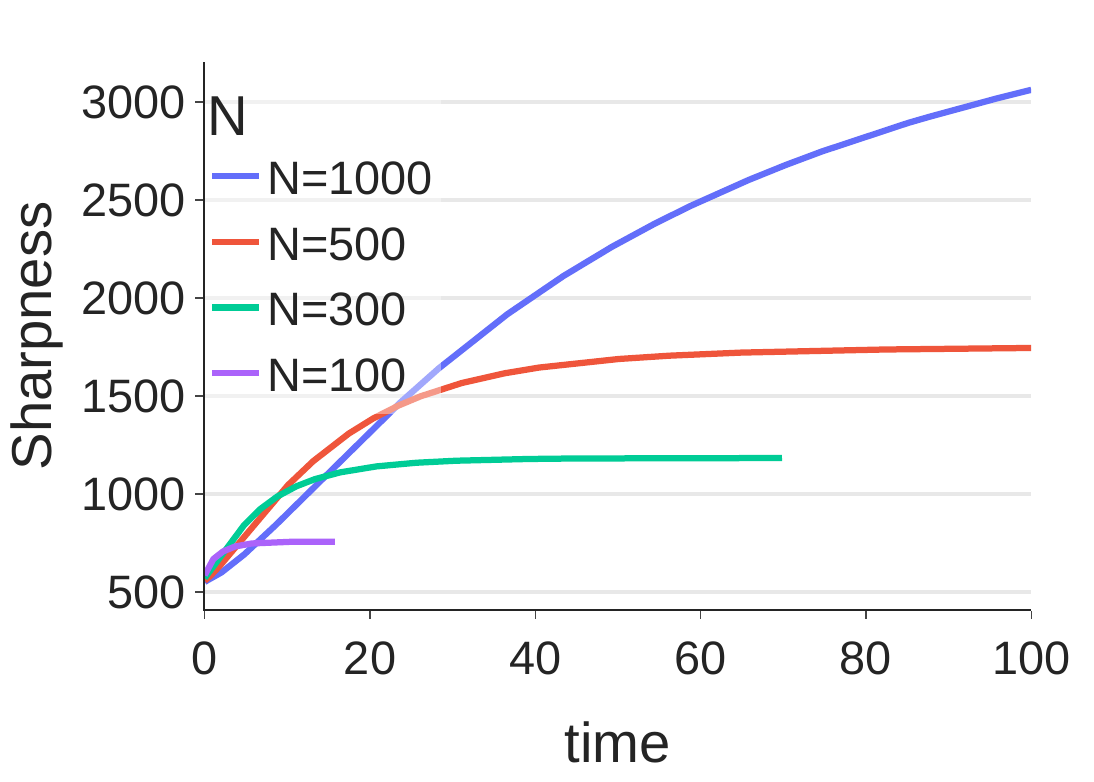} % 이미지 파일 경로
        \subcaption{Seed 3}
        %\label{fig:PS_seed3_SPEECH_min}
    \end{minipage}
    
    \caption{Minimalist model ($D=2$) trained on different random weight seed, normal distribution with $\alpha^2=\frac{1}{3d}, \beta^2 = \frac{1}{3}$  initialized, and the same 2-label subset of Google speech commands.}
    \label{fig:PS_min_normal_seed_SPEECH}
\end{figure*}

\clearpage

\subsection{Change of $T_1$, $T_2$, $\Psi_1$, $\Psi_2$, $\Omega_1$, and $\Omega_2$ for (S)GD}

We present our experimental results for the terms we introduced in \cref{thm:2layer_sgd_C_full}. Experimental settings are the same as \cref{fig:PS_min_batch_size} and \cref{fig:PS_min_C}.

\begin{figure}[ht]
    \centering
    \includegraphics[width=\textwidth]{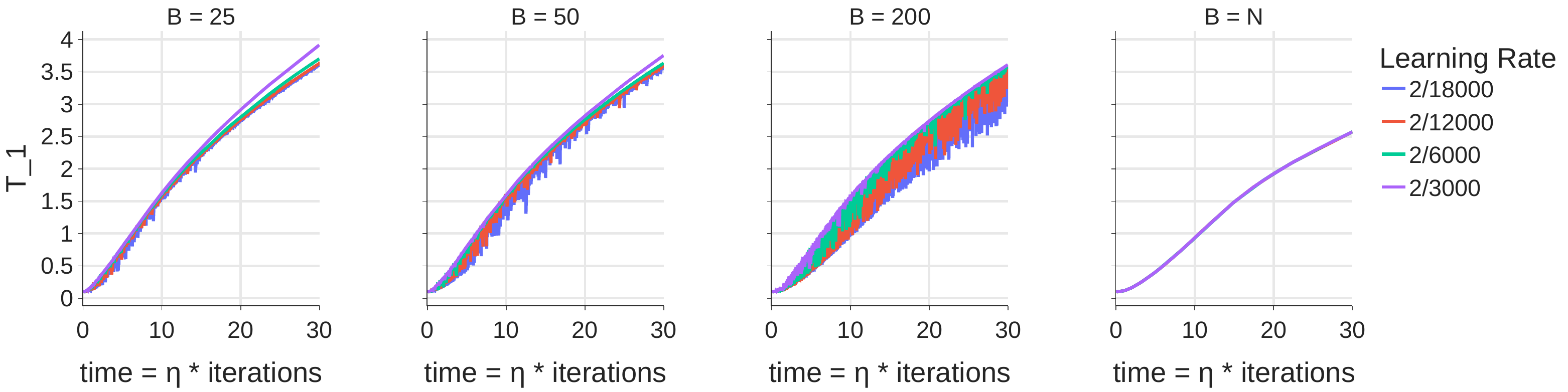}
    \caption{Change of $T_1$}
    \label{fig:SGD_minimal_T1}
\end{figure}

\begin{figure}[ht]
    \centering
    \includegraphics[width=\textwidth]{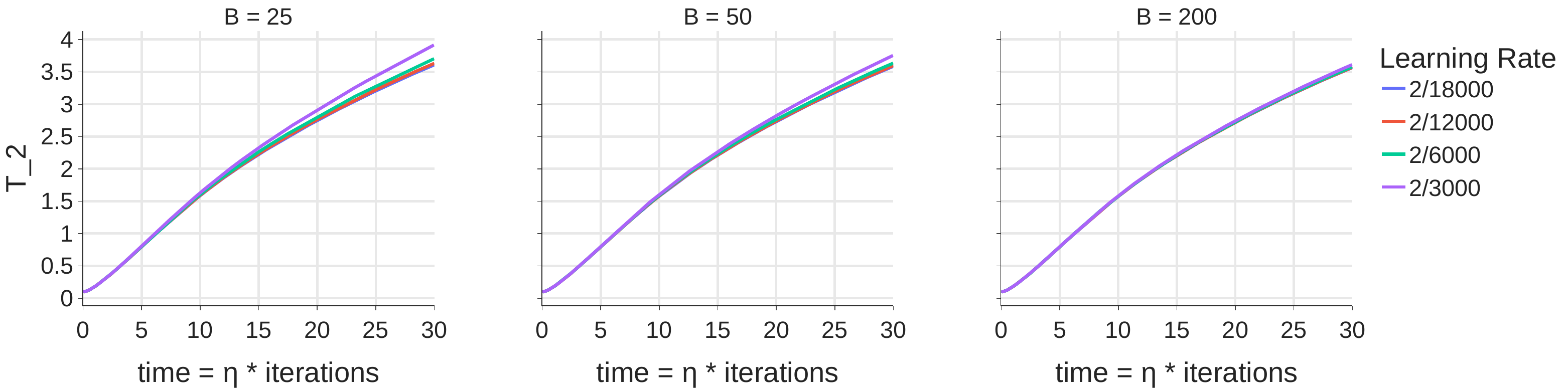}
    \caption{Change of $T_2$}
    \label{fig:SGD_minimal_T2}
\end{figure}

\begin{figure}[ht]
    \centering
    \includegraphics[width=\textwidth]{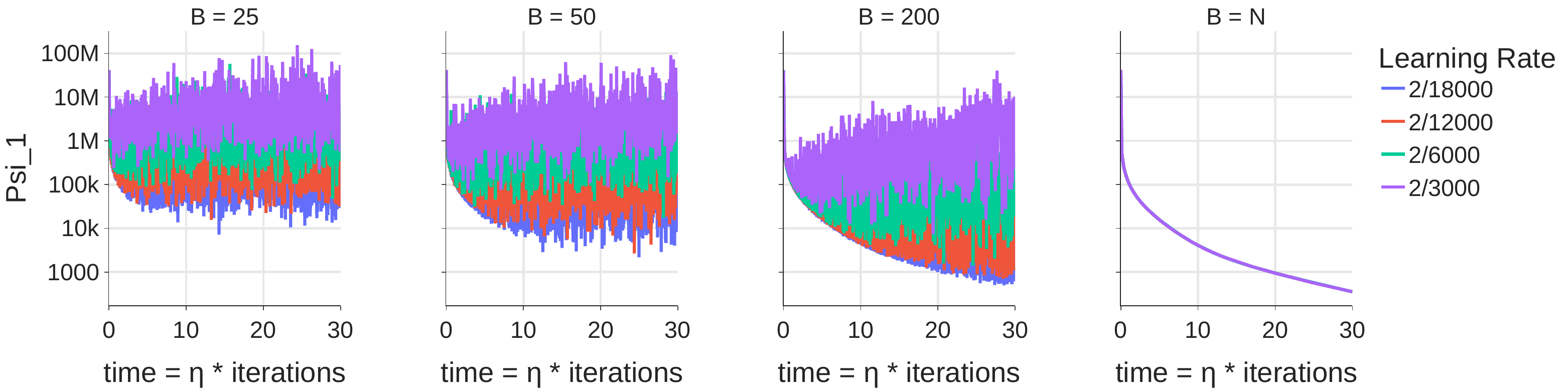}
    \caption{Change of $\Psi_1$}
    \label{fig:SGD_minimal_Psi1}
\end{figure}
\begin{figure}[ht]
    \centering
    \includegraphics[width=\textwidth]{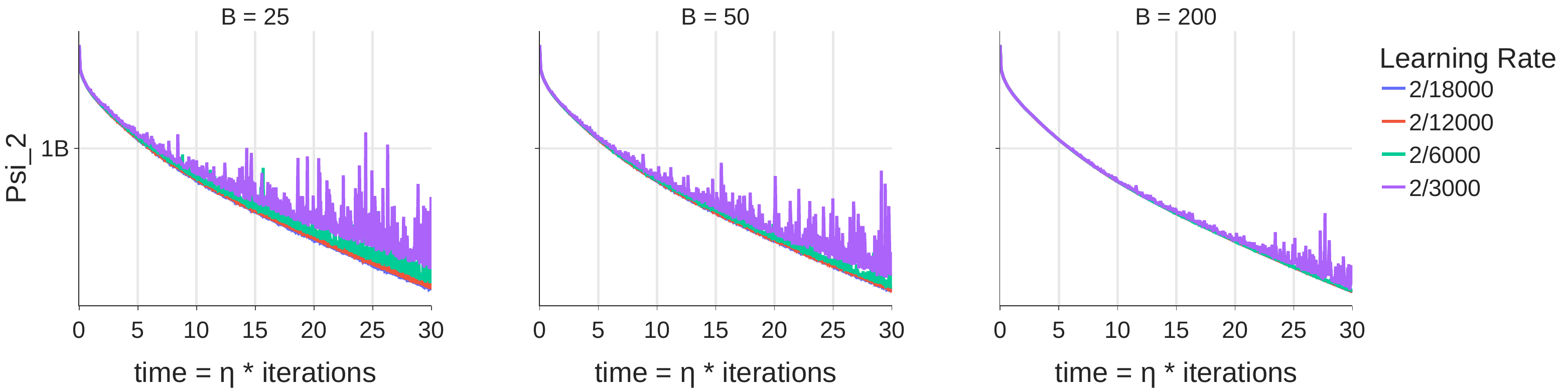}
    \caption{Change of $\Psi_2$}
    \label{fig:SGD_minimal_Psi2}
\end{figure}

\begin{figure}[ht]
    \centering
    \includegraphics[width=\textwidth]{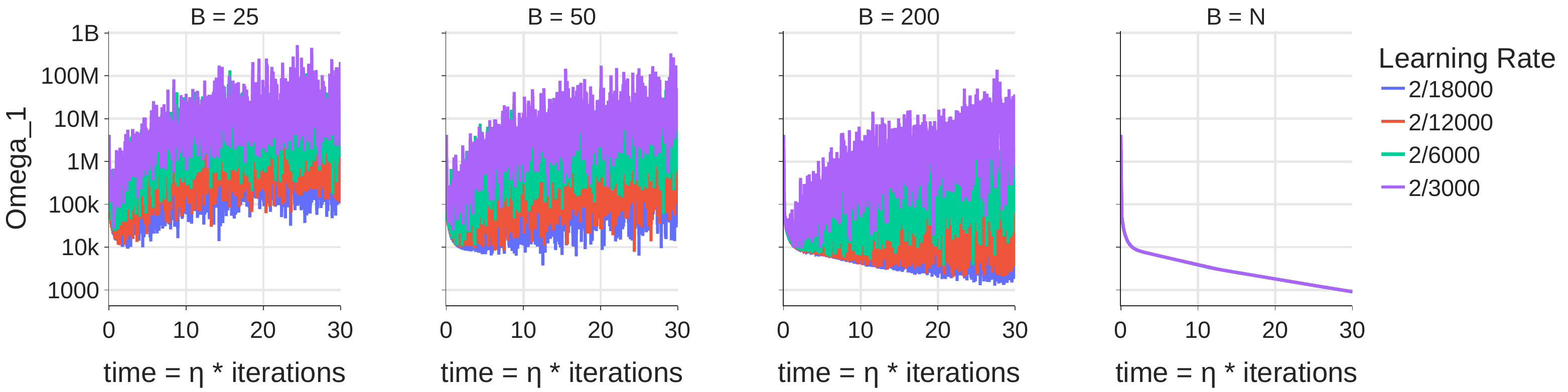}
    \caption{Change of $\Omega_1$}
    \label{fig:SGD_minimal_Omega1}
\end{figure}

\begin{figure}[ht]
    \centering
    \includegraphics[width=\textwidth]{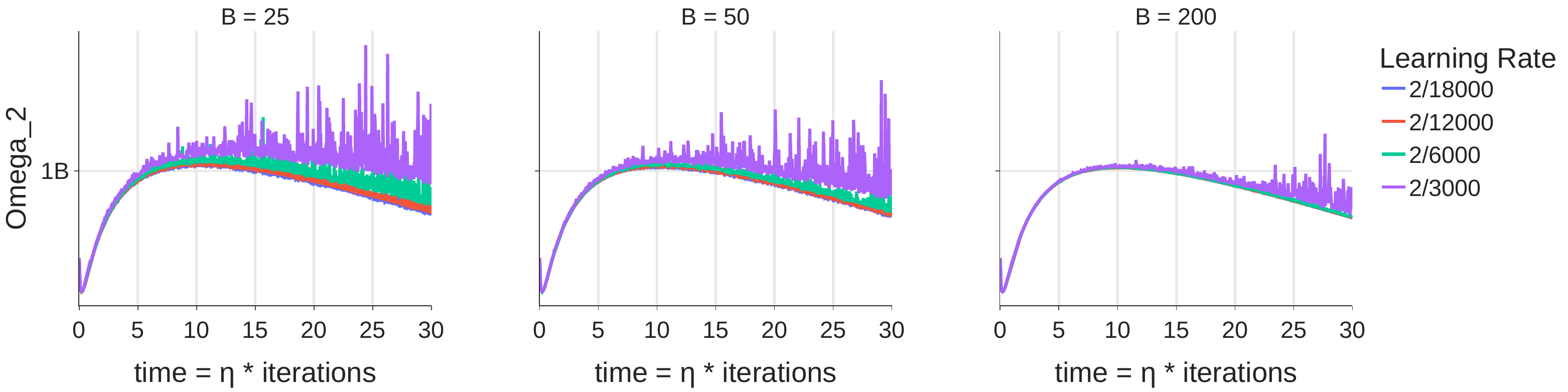}
    \caption{Change of $\Omega_2$}
    \label{fig:SGD_minimal_Omega2}
\end{figure}

\clearpage

\section{Precision Dependence of EoS Behaviors}
\label{sec:precision_dependence}

\subsection{Experiments in Synthetic Dataset}
\label{subsec:precision_dependence_synthetic}

To check the precision dependence of EoS behaviors, we show our experimental results from JAX, and our own framework. For the training data, we generated our synthetic data with the code below.

\begin{lstlisting}{language=Python}
def minimal_data(d, common_size, signal_size, alpha, beta):
    
    np.random.seed(0)
    X = np.zeros((2, d))
    
    X_common = np.random.random((1, d))
    X_opposite = np.random.random((1, d))
    
    # Orthogonalizing X_common and X_opposite
    X_common /= np.linalg.norm(X_common)
    X_opposite -= X_common * (X_opposite.flatten() @ X_common.flatten())
    X_opposite /= np.linalg.norm(X_opposite) 
    
    X[0] = common_size * X_common + signal_size * X_opposite
    X[1] = common_size * X_common - signal_size * X_opposite
    
    X_flat = X.reshape(2, -1)
    eigs, vecs = np.linalg.eigh(X_flat @ X_flat.T)
    
    y = vecs[:, 0] * alpha + vecs[:, 1] * beta
    return X, y
\end{lstlisting}
We used the result of minimal\_data(100, 5.477, 0.233, 0.3, 1.414) for both cases. We trained our minimal model of $D=2$ using $\eta=0.02$ GD. 

\begin{figure}[h]
    \centering
    % 첫 번째 이미지
    \begin{minipage}{0.23\textwidth}
        \centering
        \includegraphics[width=\textwidth]{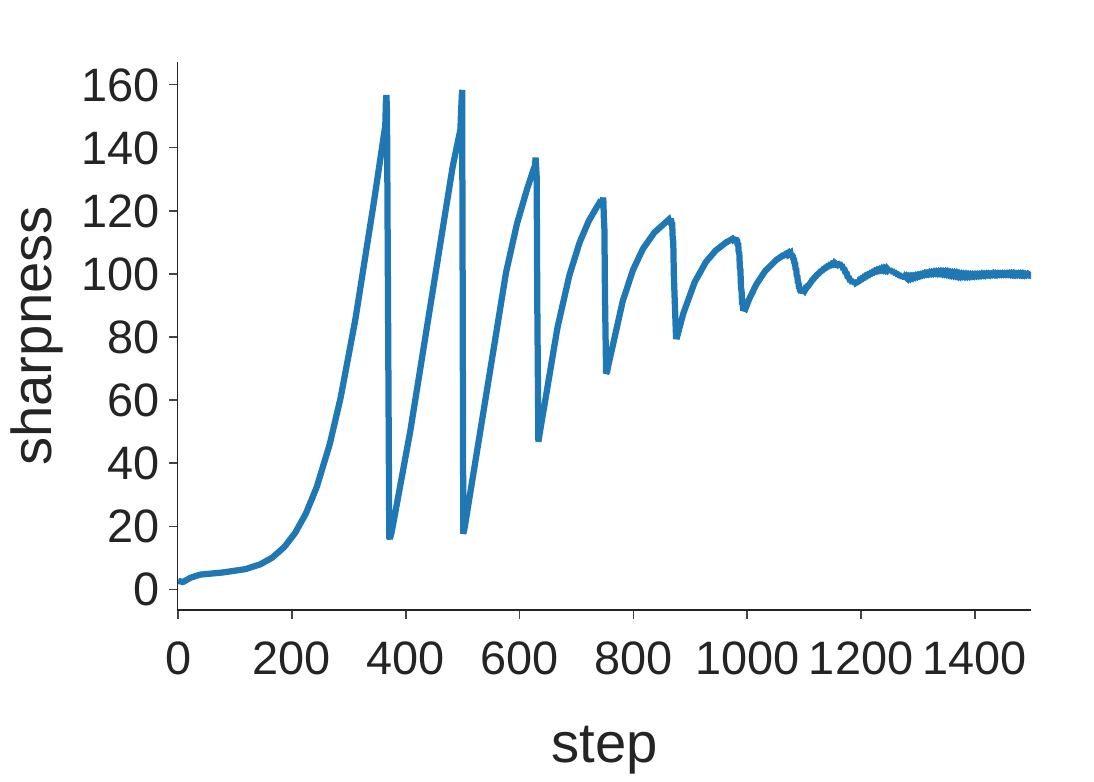} % 이미지 파일 경로
        \subcaption{Sharpness with float32}
        \label{fig:float32_sharpness}
    \end{minipage}
    % 세 번째 이미지
    \begin{minipage}{0.23\textwidth}
        \centering
        \includegraphics[width=\textwidth]{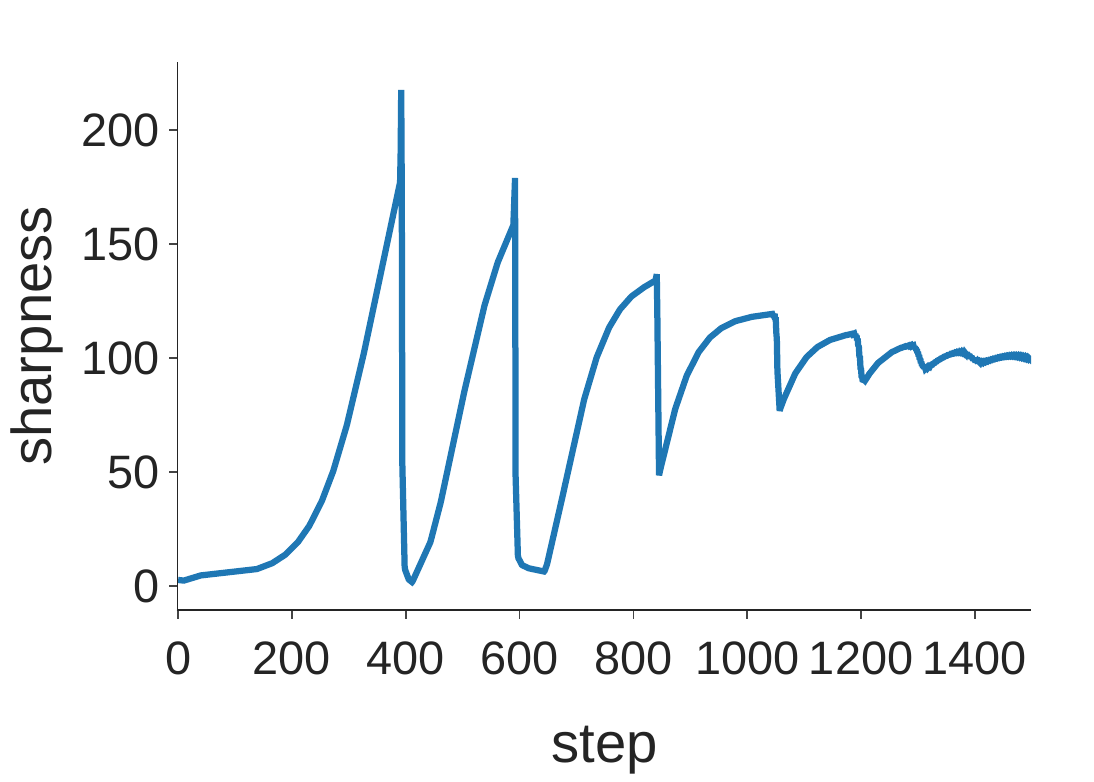} % 이미지 파일 경로
        \subcaption{Sharpness with float64}
        \label{fig:float64_sharpness}
    \end{minipage}
    \begin{minipage}{0.23\textwidth}
        \centering
        \includegraphics[width=\textwidth]{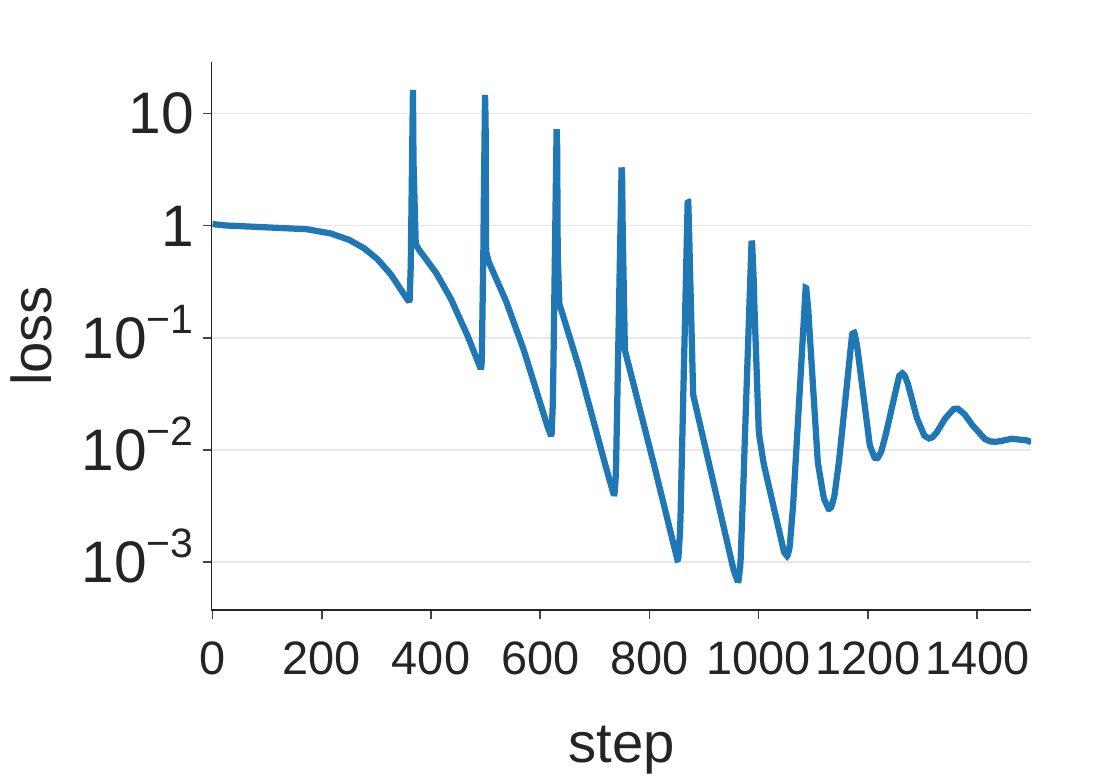} % 이미지 파일 경로
        \subcaption{Loss with float32}
        \label{fig:float32_loss}
    \end{minipage}
    \begin{minipage}{0.23\textwidth}
        \centering
        \includegraphics[width=\textwidth]{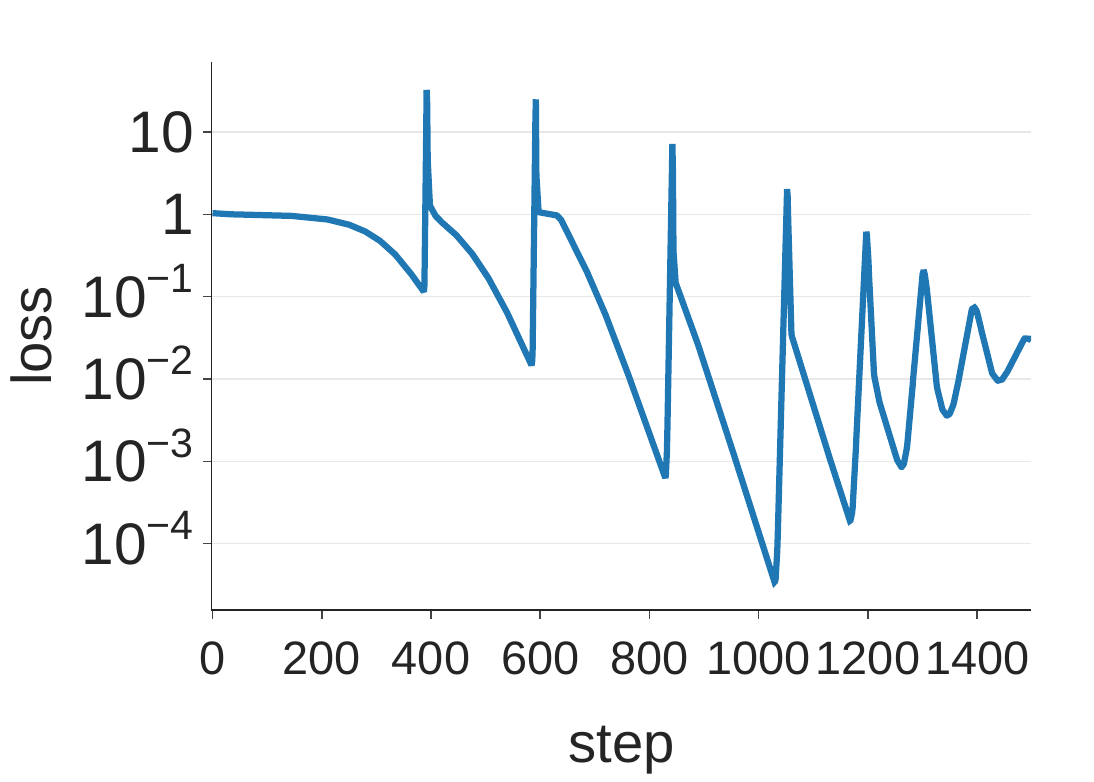} % 이미지 파일 경로
        \subcaption{Loss with float64}
        \label{fig:float64_loss}
    \end{minipage}

    \caption{Effects of precision at the Edge of Stability, Experiment done in JAX}
    \label{fig:precisions_in_JAX}
\end{figure}

In \cref{fig:precisions_in_JAX}, we can see that our loss and sharpness plot of minimal model highly depends on the precision. We used the data type as jax.numpy.float32 for float32 case, and jax.numpy.float64 for float64 case.

To clearly see the impact of precision in training dynamics, for the same settings,  we made a simple C++ program replicating the PyTorch training using MPFR C++ \citep{holoborodko2010mpfr}. In \cref{fig:arbitrary_precision_minimal}, we can clearly observe distinct trends across different precisions.

Notably, we find that high precision may lead to a blow up of the training loss instead of entering the edge of stability regime and converging non-monotonically.

\begin{figure}[h]
    \centering
    \begin{subfigure}[h]{0.95\textwidth}
        \centering
        \includegraphics[width=\textwidth]{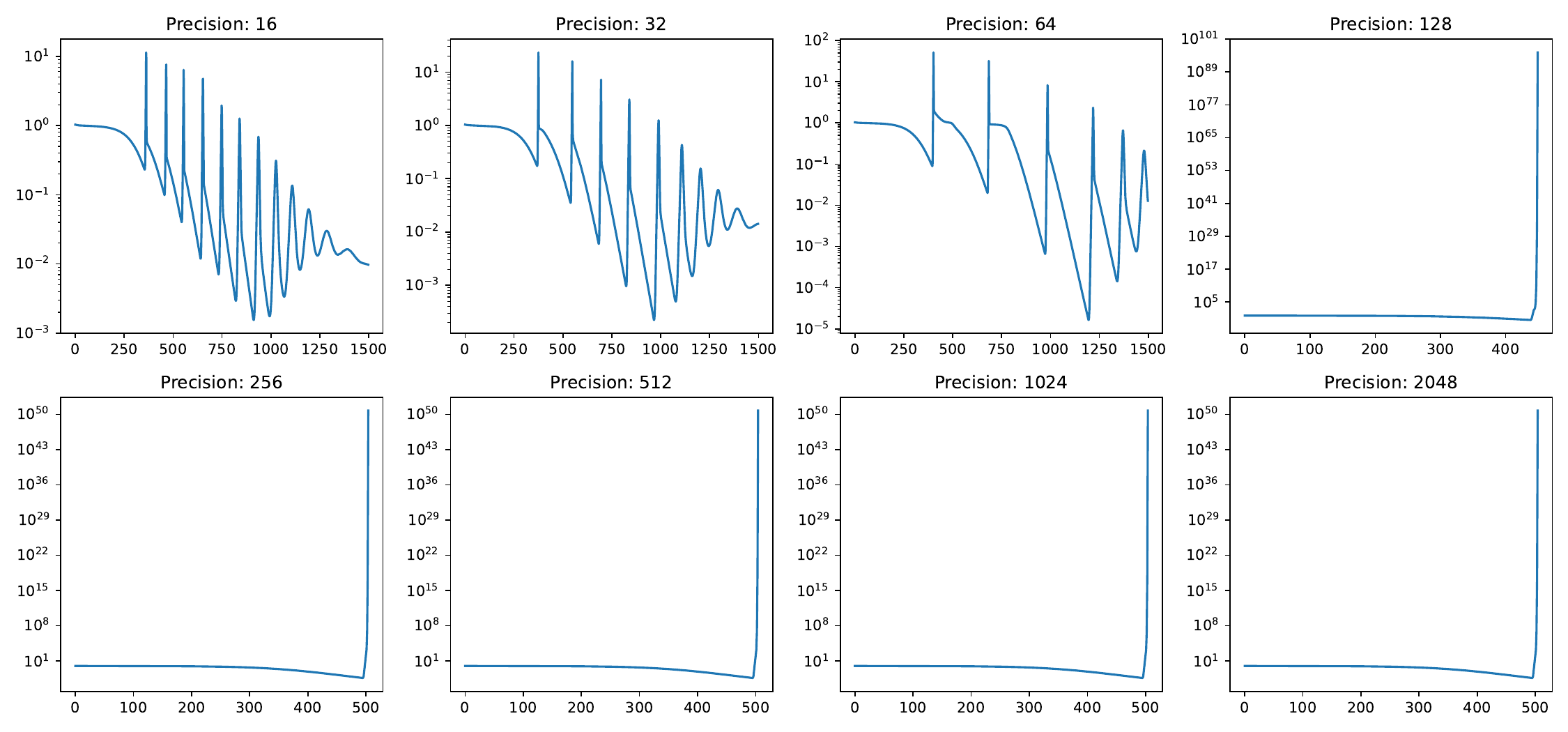}
        \caption{Loss plot}
        \label{subfig:arbitrary_precision_minimal_loss}
    \end{subfigure}
    \centering
    \begin{subfigure}[h]{0.95\textwidth}
        \centering
        \includegraphics[width=\textwidth]{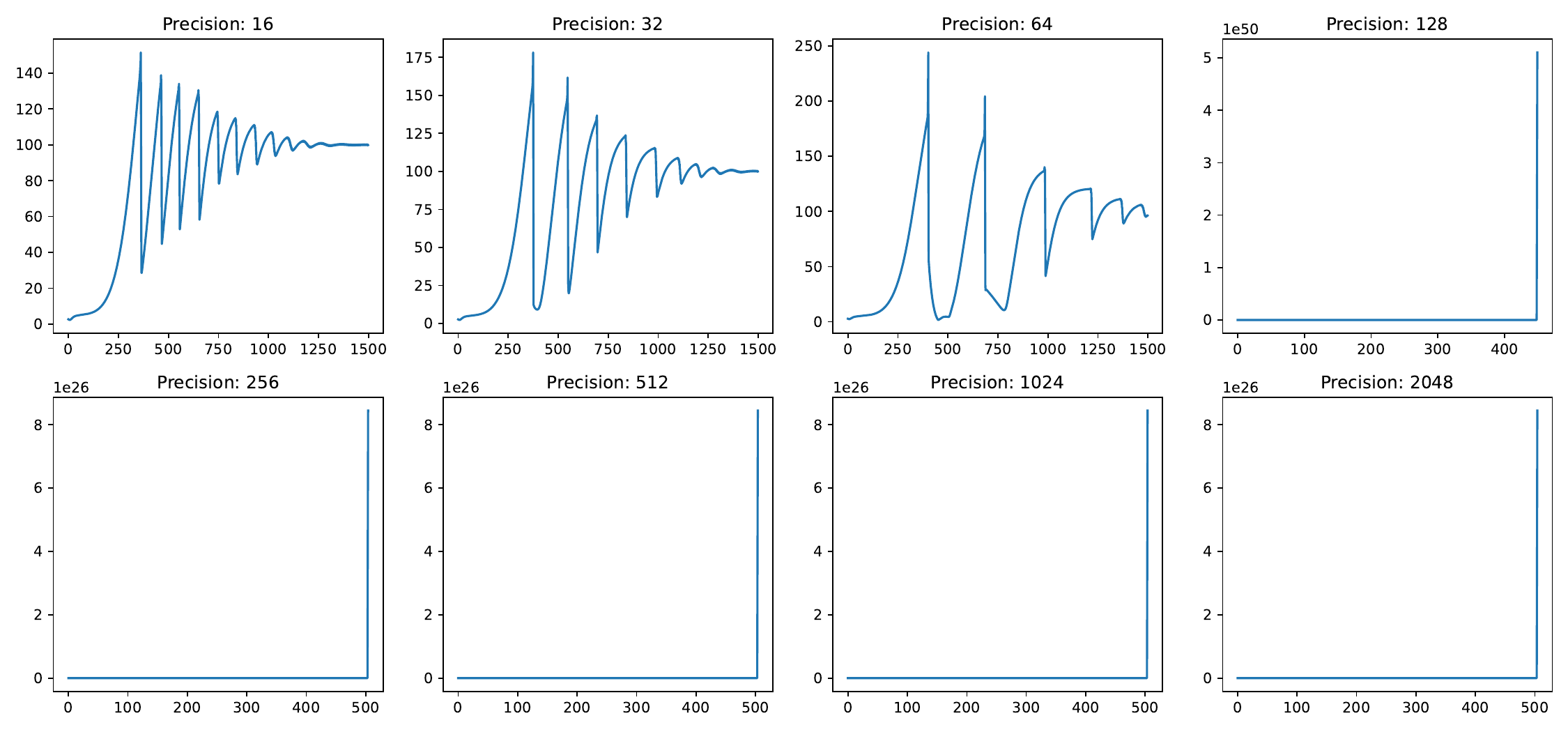}
        \caption{Sharpness plot}
        \label{subfig:arbitrary_precision_minimal_sharpness}
    \end{subfigure}
    \caption{$D=2$ minimal model in our C++ framework training, x axis is given as training step numbers}
    \label{fig:arbitrary_precision_minimal}
\end{figure}

\clearpage

\subsection{Experiments in a 2-label Subset of CIFAR10 Dataset}

\begin{figure*}[h]
    \centering
    \begin{minipage}{0.35
    \textwidth}
        \centering
        \includegraphics[width=\textwidth]{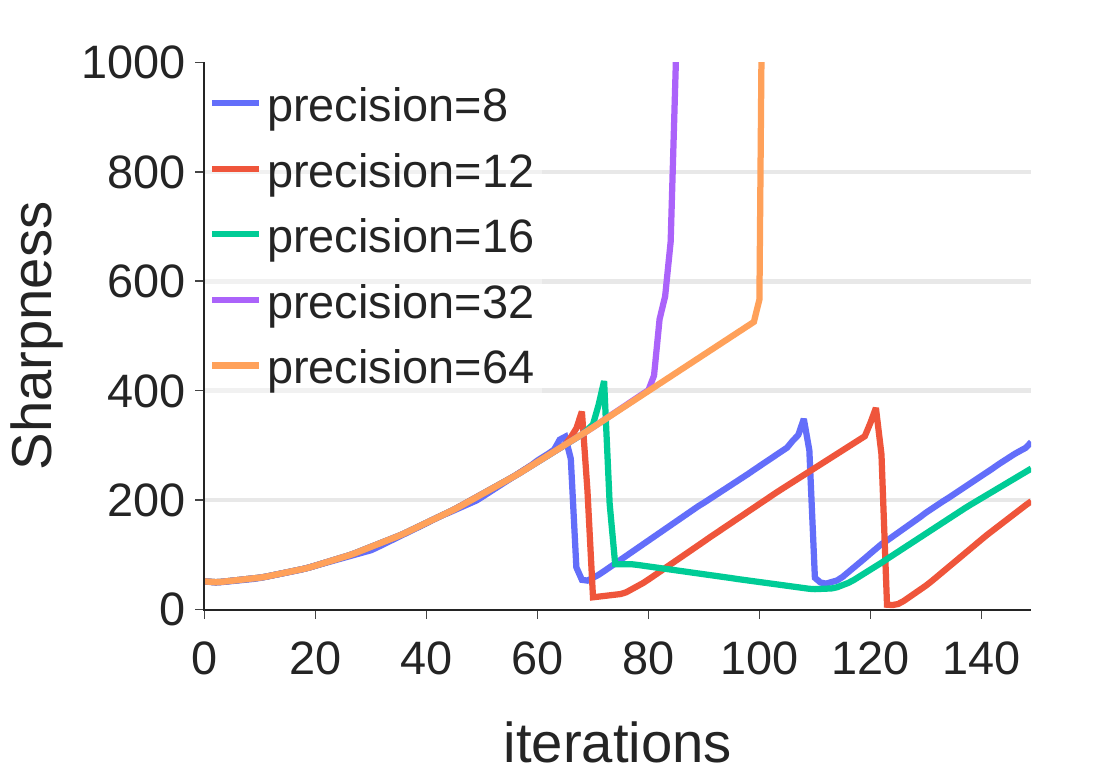} % 이미지 파일 경로
        \subcaption{Sharpness}
        \label{fig:PS_precision_CIFAR_min_sharpness}
    \end{minipage}
    \begin{minipage}{0.35\textwidth}
        \centering
        \includegraphics[width=\textwidth]{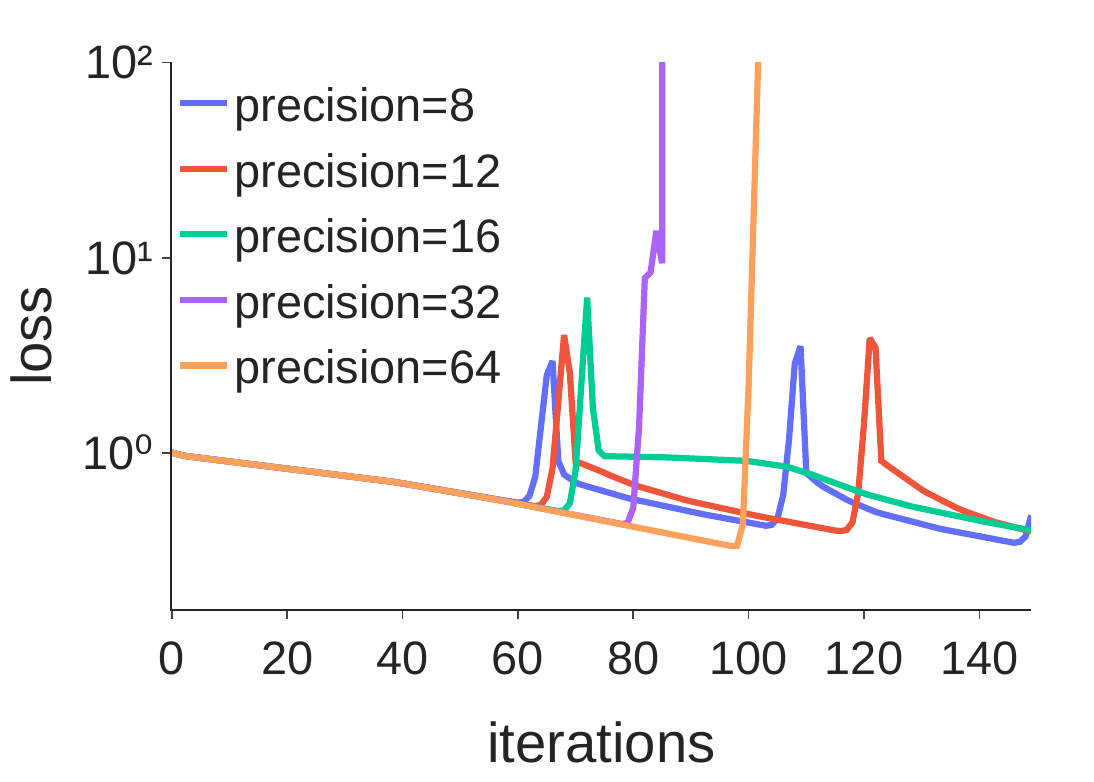} % 이미지 파일 경로
        \subcaption{Loss}
        \label{fig:PS_precision_CIFAR_min_loss}
    \end{minipage}
    
    \caption{The effect of precision in minimalist model(D=2), with CIFAR-10 2 label subset $N=300$, GD $\eta = \frac{2}{200}$}
    \label{fig:PS_precision_CIFAR_min}
\end{figure*}

\begin{figure*}[h]
    \centering
    \begin{minipage}{0.35
    \textwidth}
        \centering
        \includegraphics[width=\textwidth]{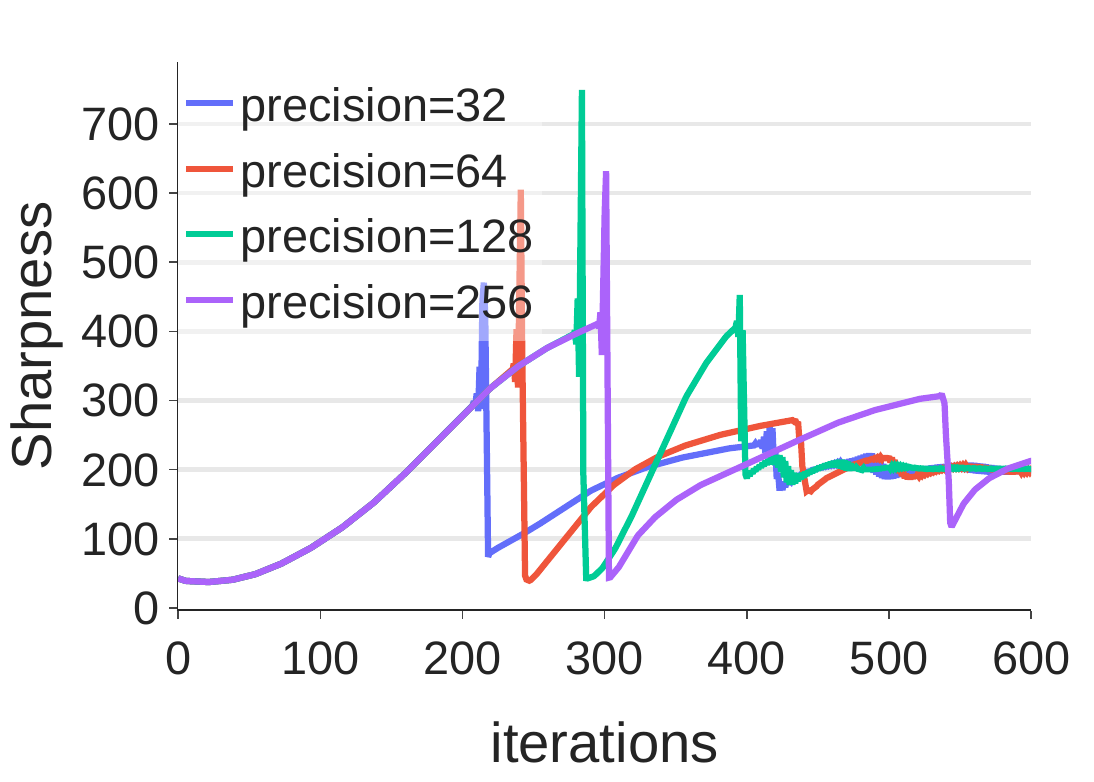} % 이미지 파일 경로
        \subcaption{Sharpness low precision}
        \label{fig:PS_low_precision_CIFAR_silu_sharpness}
    \end{minipage}
    \begin{minipage}{0.35\textwidth}
        \centering
        \includegraphics[width=\textwidth]{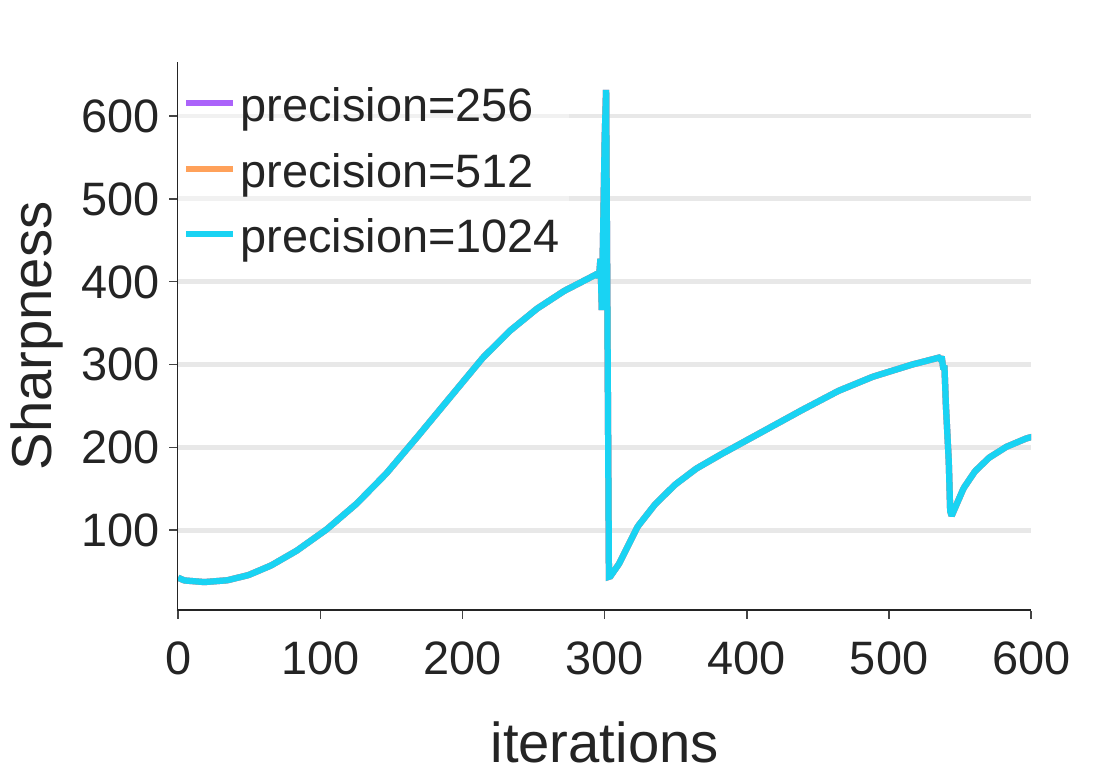} % 이미지 파일 경로
        \subcaption{Sharpness high precision}
        \label{fig:PS_high_precision_CIFAR_silu_sharpness}
    \end{minipage}
    \begin{minipage}{0.35
    \textwidth}
        \centering
        \includegraphics[width=\textwidth]{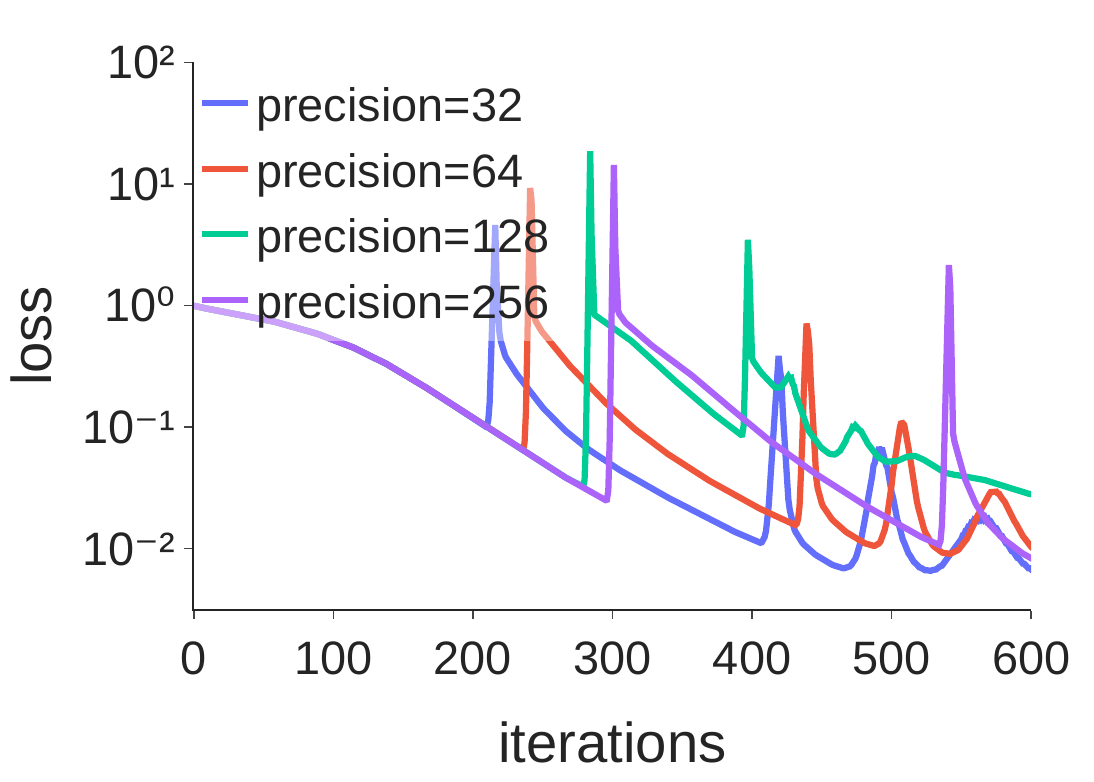} % 이미지 파일 경로
        \subcaption{Loss low precision}
        \label{fig:PS_low_precision_CIFAR_silu_loss}
    \end{minipage}
    \begin{minipage}{0.35\textwidth}
        \centering
        \includegraphics[width=\textwidth]{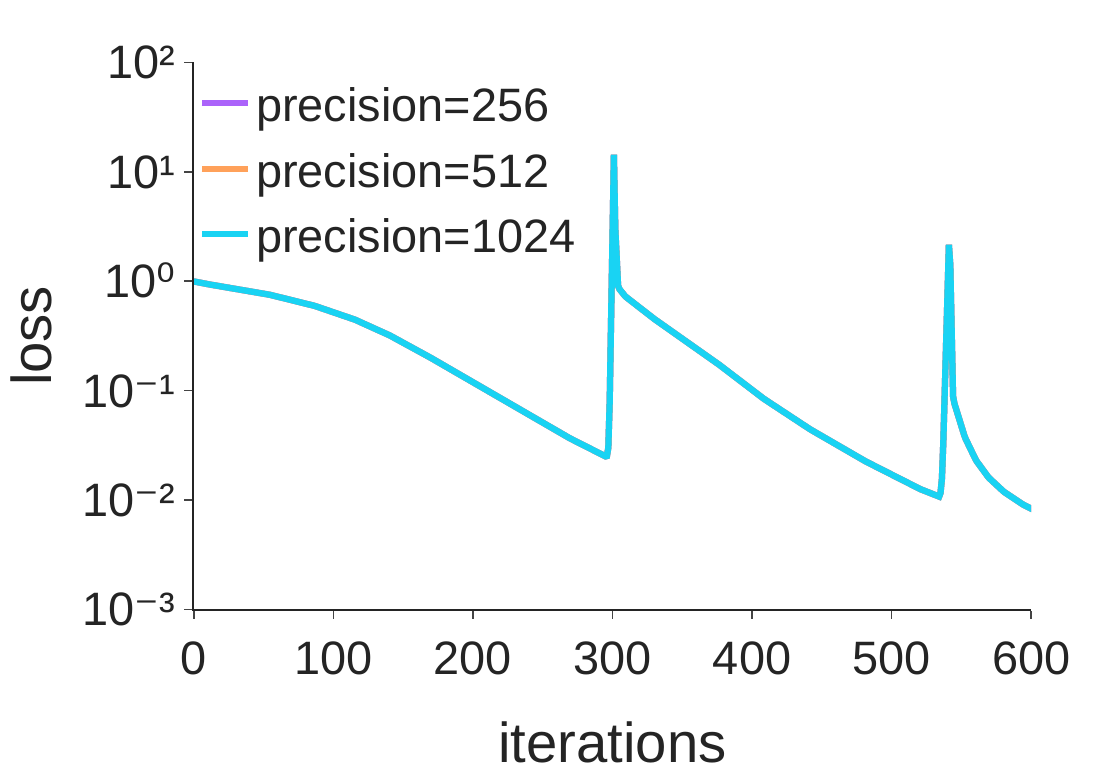} % 이미지 파일 경로
        \subcaption{Loss high precision}
        \label{fig:PS_high_precision_CIFAR_silu_loss}
    \end{minipage}
    
    \caption{The effect of precision in realistic model (3 layer SiLU activated NN, width=32), and realistic data CIFAR 2 label subset $N=300$, GD $\eta = \frac{2}{200}$. }
    \label{fig:PS_precision_CIFAR_silu}
\end{figure*}

To check the generality of the observations in \cref{subsec:precision_dependence_synthetic}, we show our experimental results from our own framework. For the training data, we used a 2-label subset of CIFAR10 dataset.

For the experimental result in \cref{fig:PS_precision_CIFAR_min}, we trained our minimalist model (width=1, depth=2) on a CIFAR-2 subset ($N=300$). The results confirm that higher precision leads to a blowup. Additionally, \cref{fig:PS_precision_CIFAR_min_sharpness} shows that increased precision postpones the sharpness drop (catapult phase).

For the experimental result in \cref{fig:PS_precision_CIFAR_silu}, we trained a 3-layer SiLU-activated NNs (width=32) on a CIFAR-2 subset ($N=300$). While all settings do not exhibit a blowup, higher precision delays the catapult phase until extremely high precision ($\geq 256$). Beyond this threshold, the training dynamics become nearly identical, with completely overlapping curves. 

\clearpage

\subsection{Hypothesizing Precision Dependence in Self-Stabilization Framework}

To elaborate on the intuition behind this dependency, we briefly introduce Section 4 of \citet{damian2023selfstabilization}\footnote{We adopt here the notation of \citet{damian2023selfstabilization}.}. They introduce a concise 2-dimensional description of gradient descent dynamics near the edge of stability by Taylor-expanding the loss gradient $\nabla L$ around a fixed reference point $\theta^\star$ (which can be understood as the first $\theta$ iterate that reaches sharpness $2/\eta$). Two key quantities are defined as follows:

\begin{itemize}
    \item $x_t = u \cdot (\theta_t - \theta^\star)$ measures the displacement from $\theta^\star$ along the unstable (top eigenvector of $\nabla^2 L(\theta^\star)$) direction $u$.
    \item $y_t = \nabla S(\theta^\star) \cdot (\theta_t - \theta^\star)$ quantifies the change in sharpness from its threshold value $2/\eta$.
\end{itemize}

While assuming that progressive sharpening happens at scale of $\alpha$: $-\nabla L(\theta^\star) \cdot \nabla S(\theta^\star) = \alpha > 0$, the dynamics are described in three stages cycling throughout:

\begin{itemize}
\item \textbf{Progressive Sharpening}: 
For small $x_t$ and $y_t$, the sharpness increases linearly: $y_{t+1} - y_t \approx \eta \alpha , (\alpha > 0).$
The update for $x_t$ is: $x_{t+1} \approx -(1+\eta y_t) x_t .$
Thus, once $y_t$ becomes positive, the factor $(1+ \eta y_t) > 1$ causes $|x_t|$ to grow. These updates rely on the 1st-order approximation of the gradient.
\item \textbf{Blowup}:
With $y_t > 0$, the multiplicative effect in the $x$-update leads to exponential growth in $|x_t|$, marking the blowup phase where the 1st-order approximation no longer suffices.
\item \textbf{Self-Stabilization}:
When $|x_t|$ is large, the 2nd-order approximation of gradient (which involves $\nabla^3 L$) yields: $y_{t+1} - y_t \approx \eta \left( \alpha - \frac{\beta}{2} x_t^2\right),$ which provides a negative feedback that reduces $y_t$ and stops further growth of $|x_t|$.
\end{itemize}

Note from above that when $y_t < 0$, (i.e., before blowup), $|x_t|$ shrinks to zero exponentially. We hypothesize that higher numerical precision allows $|x_t|$ to remain small for longer iterations, postponing blowup and self-stabilization, resulting in abnormally high sharpness.

\clearpage

\section{Preliminary Analysis of Nonlinear Activations}
\label{sec:nonlin}
We present preliminary results extending our minimalist model to networks with nonlinear activation functions.
% We detail our preliminary attempts at analyzing non-linear networks. 

\begin{assumption}[Orthogonal Data Assumption]\label{asm:orthogonal-data} $XX^\top$ is a diagonal matrix.
\end{assumption}

Note that each $e_i$ is given as the standard basis vector under the \cref{asm:orthogonal-data}. Then we define our two-layer non-linear minimalist model and its corresponding layer imbalance as follows:
\begin{definition}[Two-layer Non-linear Minimalist Model] We define our two-layer non-linear minimalist model $f:\R^d \to \R$ as 
\begin{align*}
f(x;\theta) := h(x^\top u) v_1\,,
\end{align*}
where $\theta=(u, v_1)$ represents the collection of all model parameters and $h: \R \to \R$ is given as an activation function that satisfies the following conditions:
\begin{itemize}[nosep]
\item $h$ is an injective function.
\item $h \in C^1$.
\item $h'(x) \neq 0, \forall x \in \R$.
\end{itemize}
We define $g$ as an antiderivate of $\frac{h}{h'}$.
\label{def:two-layer-non-linear-minimalist}
\end{definition}

\begin{definition}[Layer imbalance in non-linear minimalist model]\label{def:nonlin-layer-imbalance} The layer imbalance of the two-layer non-linear minimalist model with parameter $\theta = (u, v_1)$ is defined by
\begin{align*}
C(\theta) := 2 \sum_{i=1}^r \frac{g(\sigma_i o_i)}{\sigma_i^2} - v_1^2,
\end{align*}
\end{definition}

\cref{def:nonlin-layer-imbalance} can be viewed as the generalized version of \cref{def:layer-imbalance} \footnote{Given $h(x) = x, \forall x\in \R$, $g(x) = \frac{1}{2}x^2$. 
 It results in the same definition of the imbalance.}. 
 For an arbitrary vector $x = (x_1, x_2, \dots, x_n)^\top \in \R^n$, we denote $h(x) = \left(h(x_1), h(x_2), \dots, h(x_n) \right)^\top \in \R^n$. Then, the loss function is given by $L(\theta;X) = \frac{1}{2N} \lVert h(Xu) v_1 - y \rVert^2$, and $o_i = w_i^\top u$.
 We propose a useful lemma analogous to \cref{lemma:gf_conserve}.

\begin{lemma}[Conservation of layer imbalance in non-linear minimalist model under gradient flow] \label{lem:C-nonlinear} Let $\theta(t)$ be a gradient flow trajectory trained on the two-layer non-linear minimalist model. Then, the layer imbalance remains constant along the gradient flow trajectory: $C(\theta(t)) = C(\theta(0))$ for all $t \geq 0$.
\end{lemma}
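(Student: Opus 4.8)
The plan is to mimic the $D=2$ case of \cref{lemma:gf_conserve}: show directly that $\dot C(\theta(t)) = 0$ along the gradient flow. The entire reason for replacing $o_i^2$ by $2g(\sigma_i o_i)/\sigma_i^2$ in \cref{def:nonlin-layer-imbalance} is that the antiderivative $g$ of $h/h'$ is engineered so that the factor $h'(\sigma_i o_i)$ appearing in $\dot o_i$ cancels exactly against the $1/h'$ coming from $g' = h/h'$. This single design choice is what makes the computation close, so I would set up the gradient flow equations with this cancellation in mind.

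First I would record the dynamics. Writing $z := z(\theta) = v_1 h(Xu) - y$ for the residual, with $h$ and $h'$ acting entrywise, a direct computation gives
\[
\dot u = -\frac{v_1}{N} X^\top \big( h'(Xu) \odot z \big), \qquad \dot v_1 = -\frac{1}{N} z^\top h(Xu).
\]
Projecting the $u$-equation onto $w_i$ and using $w_i^\top X^\top = \sigma_i e_i^\top$ from the SVD \eqref{eq:Xdecomp} yields $\dot o_i = w_i^\top \dot u = -\frac{v_1}{N}\sigma_i\, e_i^\top (h'(Xu)\odot z)$. The second step is to invoke \cref{asm:orthogonal-data}: since each $e_i$ is then a standard basis vector (and $r = N$), coordinate $i$ of $Xu = \sum_j \sigma_j o_j e_j$ is exactly $\sigma_i o_i$, so $e_i^\top(h'(Xu)\odot z) = h'(\sigma_i o_i)\, z_i$ with $z_i := e_i^\top z$. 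This collapses $\dot o_i$ into the single-coordinate form $\dot o_i = -\frac{v_1}{N}\sigma_i h'(\sigma_i o_i) z_i$ and likewise turns $z^\top h(Xu)$ into $\sum_i z_i h(\sigma_i o_i)$.

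Then I would differentiate $C$ term by term. Using $g' = h/h'$ and the chain rule,
\[
\frac{d}{dt}\frac{g(\sigma_i o_i)}{\sigma_i^2} = \frac{g'(\sigma_i o_i)\,\dot o_i}{\sigma_i} = \frac{1}{\sigma_i}\cdot\frac{h(\sigma_i o_i)}{h'(\sigma_i o_i)}\cdot\Big(-\frac{v_1}{N}\sigma_i h'(\sigma_i o_i) z_i\Big) = -\frac{v_1}{N} h(\sigma_i o_i)\, z_i,
\]
where both $\sigma_i$ and $h'(\sigma_i o_i)$ cancel. Summing and doubling gives $2\sum_i \tfrac{d}{dt}[g(\sigma_i o_i)/\sigma_i^2] = -\frac{2v_1}{N}\sum_i h(\sigma_i o_i) z_i$, while $\frac{d}{dt}\,v_1^2 = 2 v_1 \dot v_1 = -\frac{2v_1}{N}\sum_i z_i h(\sigma_i o_i)$. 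These two expressions are identical, so $\dot C = 2\sum_i \tfrac{d}{dt}[g(\sigma_i o_i)/\sigma_i^2] - \tfrac{d}{dt} v_1^2 = 0$, and a constant derivative gives $C(\theta(t)) = C(\theta(0))$.

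The main obstacle — indeed the only nontrivial point — is that the coordinatewise cancellation truly relies on \cref{asm:orthogonal-data}. Without orthogonality, $e_i^\top(h'(Xu)\odot z)$ would couple different coordinates through $h'$, and the per-coordinate match between the $o_i$-sum and the $z^\top h(Xu)$ term would break. I would therefore be careful to flag precisely where the assumption (equivalently, the $e_i$ being standard basis vectors with $r = N$) is used, since it is exactly what aligns the two sums so that they cancel.
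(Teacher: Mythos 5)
Your proof is correct and follows essentially the same route as the paper: reparameterize the gradient flow as $\dot o_i = -\frac{v_1}{N}\sigma_i h'(\sigma_i o_i)(e_i^\top z)$ and $\dot v_1 = -\frac{1}{N}\sum_i h(\sigma_i o_i)(e_i^\top z)$, then differentiate $C$ and use $g' = h/h'$ so the $\sigma_i$ and $h'(\sigma_i o_i)$ factors cancel and the two sums coincide. Your explicit flagging of where \cref{asm:orthogonal-data} enters (turning $e_i^\top(h'(Xu)\odot z)$ into $h'(\sigma_i o_i)\,e_i^\top z$) is a step the paper absorbs implicitly into its reparameterization of the dynamics, but the argument is the same.
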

\begin{proof}
It suffices to prove that $\dot C(\theta(t)) = 0$. Recall that the layer imbalance of two-layer non-linear minimalist model is defined as 
\begin{align*}
C(\theta(t)) := 2 \sum_{i=1}^r \frac{g(\sigma_i o_i(t))}{\sigma_i^2} - v_1(t)^2,
\end{align*}
Differentiating both sides with respect to $t$ gives
\begin{align*}
\dot C(\theta(t)) &= 2\left(\sum_{i=1}^r \frac{\dot o_i(t)}{\sigma_i} \frac{h(\sigma_i o_i(t))}{h'(\sigma_i o_i(t))}\right) - 2\dot v_1(t) v_1(t)\\
&=  -\frac{2}{N} \sum_{i=1}^r v_1(t) \left(e_i^\top z(t) \right) h(\sigma_io_i(t))  + \frac{2}{N} \sum_{i=1}^r h(\sigma_i o_i) \left( e_i^\top z(t)\right)  v_1(t)\\
&= 0
\end{align*}
\end{proof}

\begin{proposition}[Solutions on non-linear activation] \label{pro:nonlin-sol} Under \cref{asm:orthogonal-data}, for a two-layer non-linear minimalist model trained on a dataset $(X,y)$, the following equality holds at the global minimizer $\theta^\star = (u^\star, v^\star)$ of $L(\theta)$:
\begin{align}
(v_1^\star)^2 + C(\theta^\star) =2 \sum_{i=1}^r \frac{g(\sigma_i o_i^\star)}{\sigma_i^2} \label{eq:nonlin-sol-eq}
\end{align}
where $o_i^\star = w_i^\top u^\star = \frac{1}{\sigma_i} h^{-1}\left( \frac{d_i}{v_1^\star} \right)$, 
\end{proposition}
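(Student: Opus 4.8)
The plan is to mirror the strategy used in the proof of \cref{thm:2layer}: first derive a clean coordinatewise expression for the residual $z(\theta)$, then impose the zero-loss condition at the global minimizer and invert it to pin down each $o_i^\star$. The displayed identity \eqref{eq:nonlin-sol-eq} will then turn out to be nothing more than \cref{def:nonlin-layer-imbalance} rearranged, so the real content lies in the characterization $o_i^\star = \frac{1}{\sigma_i} h^{-1}(d_i/v_1^\star)$.

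First I would establish the residual decomposition. Under \cref{asm:orthogonal-data}, $XX^\top$ is diagonal, so each left singular vector $e_i$ is a standard basis vector and $Xu = \sum_{i=1}^r \sigma_i o_i e_i$ has $i$-th coordinate equal to $\sigma_i o_i$. Applying $h$ entrywise and multiplying by $v_1$ gives $e_i^\top f(X;\theta) = h(\sigma_i o_i)\, v_1$, hence $e_i^\top z(\theta) = h(\sigma_i o_i)\, v_1 - d_i$ for each $i \in [r]$. This is the nonlinear analogue of the linear residual formula \eqref{eq:residual}.

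Next I would use that $\theta^\star$ is a global minimizer with $L(\theta^\star) = \frac{1}{2N}\|z(\theta^\star)\|^2 = 0$, so $z(\theta^\star) = 0$ and in particular $e_i^\top z(\theta^\star) = 0$ for every $i$. The residual formula then yields $h(\sigma_i o_i^\star)\, v_1^\star = d_i$, i.e. $h(\sigma_i o_i^\star) = d_i / v_1^\star$. Because $h$ is injective (\cref{def:two-layer-non-linear-minimalist}), it admits an inverse on its range, and I may solve $\sigma_i o_i^\star = h^{-1}(d_i/v_1^\star)$, giving $o_i^\star = \frac{1}{\sigma_i} h^{-1}(d_i/v_1^\star)$ as claimed. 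Finally, the displayed equation \eqref{eq:nonlin-sol-eq} follows by adding $v_1^2$ to both sides of \cref{def:nonlin-layer-imbalance} evaluated at $\theta^\star$; it holds for every $\theta$ and requires no further argument.

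The main obstacle is the careful handling of the orthogonal-data reduction. I expect two points to need attention: (i) justifying that the global minimum attains zero loss, which relies on $y \in \colsp(X)$ together with the coordinatewise separation afforded by \cref{asm:orthogonal-data}, so that each target coordinate $d_i$ can be matched independently, provided $d_i/v_1^\star$ lies in the range of $h$ and $v_1^\star \neq 0$; and (ii) confirming that the entrywise action of $h$ is consistent with the standard-basis structure of the $e_i$, so that $e_i^\top h(Xu) = h(\sigma_i o_i)$ holds exactly. Once these are settled, the inversion step is immediate from injectivity of $h$, and no delicate estimates are required.
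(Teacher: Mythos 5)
Your proposal matches the paper's proof essentially step for step: derive the coordinatewise residual $e_i^\top z(\theta) = h(\sigma_i o_i)v_1 - d_i$ under the orthogonal-data assumption, set it to zero at the global minimizer, invert $h$ using injectivity to obtain $o_i^\star = \frac{1}{\sigma_i}h^{-1}(d_i/v_1^\star)$, and observe that the displayed identity is just \cref{def:nonlin-layer-imbalance} rearranged. Your added caveats (that zero loss is attainable, that $v_1^\star \neq 0$ and $d_i/v_1^\star$ lies in the range of $h$) are points the paper leaves implicit, so the proposal is correct and, if anything, slightly more careful.
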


\begin{proof}
Let $\theta^\star = (u^\star,v_1^\star)$ be a global minimizer of $L(\theta)$ for a two-layer non-linear minimalist model trained on a dataset $(X,y)$ that holds \cref{asm:orthogonal-data}. We denote $o_i^\star = w_i^\top u^\star$ for each $i\in[r]$. Since $L(\theta^\star) = \frac{1}{2N}\|z(\theta^\star)\|^2$, the residual $z(\theta^*)$ is a zero vector. Combining with \eqref{eq:nonlin-residual}, we have
\begin{align*}
    e_i^\top z(\theta^\star) = h(\sigma_io_i^\star)v_1^\star - d_i = 0\,,\quad\forall i\in[r]\,.
\end{align*}
Moreover, we have
\begin{align*}
    C(\theta^\star) = 2 \sum_{i=1}^r \frac{g(\sigma_i o_i^\star)}{\sigma_i^2} - (v_1^\star)^2\,,
\end{align*}
by the definition of the layer imbalance. Substituting $o_i^\star=\frac{1}{\sigma_i}h^{-1}(\frac{d_i}{v_1^\star})$ finishes the proof.
\end{proof}

By solving the equation of \cref{pro:nonlin-sol} based on \cref{lem:C-nonlinear} under \cref{asm:GFsol}, we can specify the solution of the gradient flow dynamics. Based on the selection of $h$, \cref{eq:nonlin-sol-eq} can be expressed in the following form:

\paragraph{tanh.~~}

In this case, $g(\sigma_io_i^\star) = \frac{1}{2}\cosh^2(\sigma_io_i^\star)$ follows the definition of $g$. Then,
\begin{align*}
C(\theta^\star) &= \sum_{i=1}^r \frac{1}{\sigma_i^2} \cosh^2(\sigma_i o_i^\star) - (v_1^\star)^2\\
&= \sum_{i=1}^r \frac{1}{\sigma_i^2} \frac{1}{1-\tanh^2(\sigma_io_i^\star)} - (v_1^\star)^2\\
&=\sum_{i=1}^r \frac{1}{\sigma_i^2}  \frac{(v_1^\star)^2}{(v_1^\star)^2 - d_i^2} - (v_1^\star)^2\,,
\end{align*}
where the last equality used $o_i^\star = \frac{1}{\sigma_i} \tanh^{-1}\left( \frac{d_i}{v_1^\star} \right)$.

\paragraph{sigmoid.~~}

In this case, $g(\sigma_io_i^\star) = \exp(\sigma_io_i^\star) + \sigma_io_i^\star$ follows the definition of $g$. Then,

\begin{align*}
C(\theta^\star) &= 2 \sum_{i=1}^r \frac{\sigma_io_i^\star + \exp(\sigma_i o_i^\star)}{\sigma_i^2} - (v_1^\star)^2 \\
&=2 \sum_{i=1}^r \frac{\ln \left(\frac{d_i}{v_1 - d_i} \right) + \frac{d_i}{v_1-d_i}}{\sigma_i^2} - (v_1^\star)^2 \,,
\end{align*}
where the last equality used $o_i^\star = \frac{1}{\sigma_i} \ln (\frac{d_i}{v_1-d_i})$.

While we consider these two cases to be relatively simple compared to other nonlinear activation functions, a thorough analysis under highly nonlinear equation remains necessary and is therefore deferred to future work.

\subsection{Reparameterization of the Two-layer Non-linear Minimalist Model}

In this subsection, we reparameterize the gradient flow (GF) dynamics for the two-layer non-linear minimalist model (\cref{def:two-layer-non-linear-minimalist}) under the \cref{asm:orthogonal-data} in terms of $\sigma_i$, $d_i$, $o_i$ for $i \in [r]$, and $v_1$. 

\paragraph{Network output and residual.~~} We can decompose network output $f(X;\theta)$ into $e_1,\ldots,e_r$ components by
\begin{align}
    f(X;\theta) = \sum_{i=1}^r (e_i^\top f(X;\theta))e_i = \sum_{i=1}^r \left(e_i^\top h(Xu)v_1\right)e_i = \sum_{i=1}^r \left(h(\sigma_i o_i) v_1 \right)e_i\,,\label{eq:nonlin-output}
\end{align}
where the last equality is obtained by replacing $e_1^\top h(Xu)$ with $h(e_1^\top Xu)$, and $X$ with $\sum_{i=1}^r\sigma_i(e_iw_i^\top)$. Similarly, we can decompose residual $z(\theta)$ by
\begin{align}
    z(\theta) = f(X;\theta) - y = \sum_{i=1}^r \left( h(\sigma_i o_i) v_1 - d_i \right) e_i\,.\label{eq:nonlin-residual}
\end{align}
Hence, network output~\eqref{eq:nonlin-output} and residual~\eqref{eq:nonlin-residual} can be reparameterized in terms of $\sigma_i$, $d_i$, $o_i$ for $i \in [r]$, and $v_1$.

\paragraph{Gradient Flow.~~} Recall that the loss at $\theta$ is
$L(\theta) = \frac{1}{2N} \|f(X;\theta)-y\|^2=\frac{1}{2N}\|z(\theta)\|^2$. The GF dynamics is given by
\begin{align}
    \dot{u}(t) &= - \frac{\partial L}{\partial u} (t) = - \frac{1}{N} v_1(t) X^\top [z(t) \odot h'(Xu(t))]\,,\label{eq:nonlin-GF_u}\\
    \dot{v}_1(t) &= - \frac{\partial L}{\partial v} (t) = - \frac{1}{N}  h(Xu(t))^\top z(t)\,.\label{eq:nonlin-GF_v}
\end{align}
We can replace $X$ with $\sum_{i=1}^r\sigma_i(e_iw_i^\top)$ in \eqref{eq:nonlin-GF_v} and obtain
\begin{align*}
    \dot{v}_1(t) =  - \frac{1}{N} \sum_{i=1}^r h(\sigma_i o_i(t) e_i)^\top z(t)
    =  - \frac{1}{N} \sum_{i=1}^r h(\sigma_i o_i(t)) e_i^\top z(t)\,.
\end{align*}
Hence, we have
\begin{align}
    \dot{v}_1(t) = - \frac{1}{N} \sum_{i=1}^{r} h(\sigma_i o_i(t)) e_i^\top z(t)\,. \label{eq:nonlin-GF_v_reparam}
\end{align}
Similarly, inner product with $w_i$ to both hand sides of \eqref{eq:nonlin-GF_u} and replacing $X$ with $\sum_{i=1}^r\sigma_i(e_iw_i^\top)$ gives
\begin{align}
    \dot{o}_i(t) = w_i^\top \dot{u}(t) = -\frac{1}{N} \sigma_i \left(e_i^\top z(t)\right) h'(\sigma_i o_i) v_1\,,\quad\forall i\in [r]. \label{eq:nonlin-GF_o_reparam}
\end{align}
Therefore, \eqref{eq:nonlin-GF_v_reparam} and \eqref{eq:nonlin-GF_o_reparam} together give the reparameterization of the GF dynamics.

\clearpage

\end{document}